\documentclass[Afour,sageh,times]{sagej}

\usepackage{array}
\usepackage{textcomp}
\usepackage{stfloats}
\usepackage{url}
\usepackage{verbatim}
\usepackage{graphicx}
\usepackage[utf8]{inputenc}
\usepackage[english]{babel}
\usepackage{setspace}
\usepackage{multirow}
\usepackage[table,xcdraw]{xcolor}
\usepackage{epsfig}
\usepackage{bm}
\graphicspath{{images/}}
\usepackage[mathscr]{euscript}
\usepackage{enumitem}   
\usepackage{multicol}
\usepackage{multirow}
\usepackage{algorithm}
\usepackage{algorithmic}
\usepackage{amsthm} 
\usepackage{amsmath, amsfonts}
\usepackage{amssymb}
\usepackage[percent]{overpic}
\usepackage{siunitx}
\usepackage{soul}
\usepackage{sidecap}
\usepackage{caption}
\usepackage{subfigure}
\usepackage{float}
\usepackage{overpic}
\usepackage{booktabs}
\usepackage{comment}
\usepackage{mathtools}

\usepackage{amsmath}
\usepackage{amssymb}
\usepackage{xparse}

\newcommand{\vect}[1]{\boldsymbol{#1}}
\newcommand{\mat}[1]{\boldsymbol{#1}}

\newcommand{\diff}[2]{ \displaystyle \frac{\partial #1}{\partial #2}}

\newcommand{\diffs}[3]{\frac{\partial^2 #1}{
		\ifx#2#3 
		\partial #2^2
		\else
		\partial #2 \partial #3
		\fi
}}

\newcommand{\norm}[1]{{\left \| {#1} \right \|}}

\newcommand{\grad}[2]{\boldsymbol{\nabla}_{#2}{#1}}

\newcommand{\hess}[2]{\boldsymbol{\nabla}_{#2}^{2}{#1}}
\newcommand{\jac}[2]{\Jm_{#2}{#1}}

\newcommand{\R}{\mathbb{R}}

\newcommand\abs[1]{\left \vert#1\right \vert}

\newenvironment{carray}
{ \left( \begin{array}}
	{ \end{array} \right) }

\newcommand{\zerov}{\vect{0}}

\newcommand{\av}{\vect{a}}

\newcommand{\dv}{\vect{d}}

\newcommand{\fv}{\vect{f}}
\newcommand{\gv}{\vect{g}}

\newcommand{\hv}{\vect{h}}
\newcommand{\kv}{\vect{k}}
\newcommand{\lv}{\vect{l}}

\newcommand{\pv}{\vect{p}}

\newcommand{\qv}{{\vect{q}}}
\newcommand{\dqv}{\dot{\vect{q}}}
\newcommand{\ddqv}{\ddot{\vect{q}}}

\newcommand{\sv}{\vect{s}}

\newcommand{\uv}{\vect{u}}
\newcommand{\vv}{\vect{v}}

\newcommand{\xv}{\vect{x}}

\newcommand{\yv}{\vect{y}}

\newcommand{\zv}{\vect{z}}

\newcommand{\alphav}{\vect{\alpha}}

\newcommand{\tauv}{\vect{\tau}}

\newcommand{\thetav}{\vect{\theta}}

\newcommand{\dthetav}{\dot{\vect{\theta}}}

\newcommand{\ddthetav}{\ddot{\vect{\theta}}}

\newcommand{\tthetav}{\tilde{\thetav}}

\newcommand{\nuv}{\vect{\nu}}

\newcommand{\dV}{\dot{V}}

\newcommand{\Fv}{\vect{F}}

\newcommand{\IIm}{\mat{I}}

\newcommand{\Am}{\mat{A}}

\newcommand{\Cm}{\mat{C}}

\newcommand{\Dm}{\mat{D}}

\newcommand{\Jm}{\mat{J}}

\newcommand{\Km}{\mat{K}}

\newcommand{\Mm}{\mat{M}}

\newcommand{\Pm}{\mat{P}}
\newcommand{\Qm}{\mat{Q}}
\newcommand{\Rm}{\mat{R}}

\newcommand{\Tm}{\mat{T}}

\newcommand{\pvone}[1]{ \nabla_{#1}\, }
\newcommand{\pvtwo}[2]{ \nabla_{#2}\,#1 }
\newcommand{\parv}[2]{\expandafter\ifx\expandafter\relax
	\detokenize{#1}\relax\pvone{#2}\else\pvtwo{#1}{#2}\fi}

\NewDocumentCommand{\Rotm}{ O{} O{} O{} }{{}^{#2}\Rm_{#1}#3}
\NewDocumentCommand{\Transm}{ O{} O{} O{} }{{}^{#2}\Tm_{#1}#3}

\newcommand{\drm}{\mathrm{d}}

\newcommand{\rb}[1]{\left( #1 \right)}

\newenvironment{sequation*}
    {\begin{equation*}\small
    }
    { 
    \end{equation*}
    }

\newtheorem{property}{Property}
\newtheorem{theorem}{Theorem}
\newtheorem{assumption}{Assumption}
\newtheorem{problem}{Problem}
\newtheorem{lemma}{Lemma}
\newtheorem{definition}{Definition}
\newtheorem*{remark}{Remark}
\newtheorem{class}{Class}

\newtheorem*{example}{Example}
\newtheorem{corollary}{Corollary}

\newcommand{\setS}{\mathcal{S}(\beta, \alpha_{1}, \alpha_{2}, \alpha_{3})}

\newcommand\BibTeX{{\rmfamily B\kern-.05em \textsc{i\kern-.025em b}\kern-.08em
T\kern-.1667em\lower.7ex\hbox{E}\kern-.125emX}}

\def\volumeyear{2026}
\begin{document}


\runninghead{Pustina et al.}

\title{Collocated Shape Regulation for Soft Robots}



\author{
Pietro Pustina\affilnum{1,2},
Ebrahim Shahabi\affilnum{2,*},
Daniel Feliu-Talegon\affilnum{2,*},
Alessandro De Luca\affilnum{1} and
Cosimo Della Santina\affilnum{2,3}
}

\affiliation{
\affilnum{1}Department of Computer, Control and Management Engineering,
Sapienza University of Rome, Rome, Italy\\
\affilnum{2}Department of Cognitive Robotics,
Delft University of Technology, 2628 CN Delft, The Netherlands\\
\affilnum{3}Institute of Robotics and Mechatronics,
German Aerospace Center (DLR), 82234 Oberpfaffenhofen, Germany\\
\affilnum{*}These authors contributed equally to this work.
}

\corrauth{
Cosimo Della Santina,
Department of Cognitive Robotics,
Delft University of Technology,
2628 CN Delft, The Netherlands
}

\email{C.DellaSantina@tudelft.nl}



\begin{abstract}
Controlling the shape of a continuum soft robot typically requires an accurate dynamic model and actuation of all degrees of freedom. We show that regulating only the actuated coordinates—through collocated shape control—achieves provably stable convergence of those coordinates and, under an explicit compatibility condition, of the entire robot shape. While collocated control is a cornerstone of high-performance motion control in rigid robotics, extending this formulation to continuum soft robots has remained challenging due to the complexity of their dynamics. We present the first general framework for collocated control of continuum soft robots and derive a unified family of controllers, including PD, PID, P-satI-D, and their counterparts with compensation and cancellation components. The framework unifies existing approaches while introducing new controller designs. In particular, we develop three classes of PD- and PID-like regulators with local, semi-global, and global stability guarantees, and provide rigorous convergence analyses for each. Extensive experimental validation demonstrates the effectiveness of the proposed methods across different model discretizations and controller parameters. The resulting framework provides practical design guidelines for selecting and implementing controllers with known stability guarantees, without requiring a complete dynamic model of the robot.

\end{abstract}

\keywords{Control of soft robots, Underactuated systems, Collocated control, Continuum soft robots}

\maketitle

\section{Introduction}
Continuum soft robots represent one of the most significant advancements in robotics over the past decade. Due to their deformability and adaptability, these systems have a broad range of possible applications, such as minimally invasive surgery~\cite{runciman2019soft}, exploration of hazardous or confined spaces~\cite{aracri2021soft}, and advanced human-robot interaction~\cite{jorgensen2022soft}. Despite their potential, the practical deployment of soft robotic systems in the real world faces several challenges that must be addressed. 

From a control perspective, soft robots demand novel control architectures that can exploit their compliance in a purposeful and aware manner. Inspiration for achieving this can be drawn from other complex robotic systems, such as humanoids and quadrupeds, where the execution of dynamic tasks relies on low-level configuration space controllers~\cite{albu2007dlr, bledt2018cheetah}. Similarly, in soft robotics, robust and reliable shape regulators can provide the foundation for achieving high-level behaviors. Traditionally, shape control has been addressed using learning-based methods, which remain widely used~\cite{kim2021review}. These methods relieve from deriving the complex dynamics that describe soft robots. However, they lack performance guarantees and fail to offer insights into system behavior. These limitations, together with the development of relatively accurate and computationally manageable reduced-order descriptions~\cite{boyer2020dynamics, sadati2021tmtdyn}, has sparked research into model-based techniques. A comprehensive review of these methods is provided in~\cite{dellasantina2023survey}. Initially, the possibility of employing model-based control laws for soft robots was demonstrated under the fully-actuated hypothesis~\cite{falkenhahn2015model, della2020model}, i.e., the number of Degrees of Freedom (DoF) equals the number of system inputs. This assumption proved useful for establishing that model-based techniques
could indeed be applied to soft robotics. Nevertheless, fully capturing the benefits of such approaches requires accounting for the inherent underactuation of these systems. Indeed, body deformability introduces additional mechanical coordinates that are not directly actuated, making underactuation an intrinsic consequence of compliance. Neglecting this aspect may result in inaccurate stability analyses and degraded control performance. Similar phenomena have long been studied in rigid robotic systems and become particularly relevant in the presence of joint or link flexibility, which introduces additional unactuated deformation coordinates. Moreover, when actuation and the controlled output are non-collocated, as often occurs in flexible robotic systems, non-minimum-phase behavior and unstable zero dynamics may arise, further complicating the control problem~\cite{de2021flexible}.

Driven by the above considerations, in recent years, there has been an increasing interest in designing provably-stable shape regulators for underactuated soft robots models~\cite{della2019control, franco2021energy, borja2022energy, pustina2022feedback, pustina2023psaitd, soleti2023energy, caradonna2024model, patterson2024modeling}. However, these efforts have primarily focused on specific sub-classes of robots, thus limiting the applicability of the proposed solutions. In~\cite{della2019control}, it is presented a PD-like controller for planar slender arms modeled as Cosserat rods with a polynomial approximation of the curvature strain. Shape regulation is achieved through feedback on the constant curvature (CC) mode. Building on the energy-shaping framework,~\cite{franco2021energy} proposes a control law for a 2D rigid-link model with linear elasticity and no gravitational loads. This control law compensates for constant unknown external disturbances and is validated experimentally on a pneumatic manipulator. Further advancements have been made in the design of globally convergent control strategies for planar robots. In~\cite{borja2022energy}, energy-based regulators are developed for robots where unactuated dynamics are dominated by elasticity, while~\cite{pustina2022feedback} addresses robots with elastically decoupled actuated and unactuated DoF. These results are extended in~\cite{pustina2023psaitd} by incorporating a saturated integral term to compensate for model mismatches and constant disturbances. Under suitable assumptions on the dynamic model,~\cite{soleti2023energy} proposes a class of energy-shaping regulators guaranteeing local asymptotic stability. The method is validated on an elastic structure actuated by soft dielectric elastomer actuators. Previous works implementing collocated control in soft robots have also shown promising results, such as~\cite{bhatti2026experimental,perfetta2026reconciling}, although these studies focus on specific classes of regulators or particular class of soft robots.

The works described above propose various solutions for shape regulation, each based on specific assumptions. Collectively, these assumptions contribute to a fragmented understanding of shape regulation in soft robotics. Moreover, many of these controllers are validated solely through simulation, perpetuating the misconception that underactuation cannot be effectively addressed in practical control design. The primary objective of this paper is to investigate, both theoretically and experimentally, how shape regulation can be achieved using different control strategies and to compare their performance and stability properties. Furthermore, we aim to demonstrate that this challenge can be addressed independently of the kinematic discretization and its order. This is made possible by leveraging the collocated form, introduced in our previous work~\cite{pustina2024input}. The collocated form represents soft robot dynamics in a manner where only some DoF are directly influenced by system inputs. This approach enables the development of relatively simple and robust solutions, adhering to the principle that control strategies heavily reliant on model-dependent terms are more susceptible to failure due to the uncertainties in soft robot models. This paper also presents the first systematic experimental validation of the collocated form across multiple regulators and dynamic models.

The main contributions of the paper are summarized as follows.
\begin{enumerate}[label=(\roman*)]
    \item We formalize shape regulation via collocated control and demonstrate that this challenge can be addressed within a unified framework. We propose three classes of PD and PID-like regulators with increasing stability properties: local, semi-global, and global stability. 
    \item The majority of the proposed regulators are experimentally validated on a novel soft robotic platform, and their performance is systematically analyzed and compared. 
    \item Seven different dynamic models are employed for each controller, with their performance systematically evaluated and compared to assess their impact on shape regulation.  
\end{enumerate}
The efficient real-time implementation of the proposed controllers relies on the inverse dynamics algorithm introduced in~\cite{pustina2025recursive}.

The remainder of the paper is organized as follows. Section~\ref{sec:preliminaries} presents the preliminaries, including the system model, its collocated form, and the assumptions underlying the proposed controller design. Section~\ref{sec:definition_cc} defines collocated control in soft continuum robots and introduces the shape regulation problem via collocated control, which is briefly illustrated in Fig.~\ref{fig:cover}. Section~\ref{sec:shape regulators} presents the main theoretical contributions, namely, the derivation of several provably stable shape regulators. Section~\ref{sec:experiments} details the experimental validation and performance evaluation of the proposed approaches, as illustrated in Fig.~\ref{fig:cover}. Finally, Section~\ref{sec:conclusions} concludes the paper.

Moreover, most of the mathematical proofs and extensive experimental results have been moved to the appendix to improve readability and maintain the flow of the manuscript.

\begin{figure*}
    \centering
    \includegraphics[width=1\linewidth]{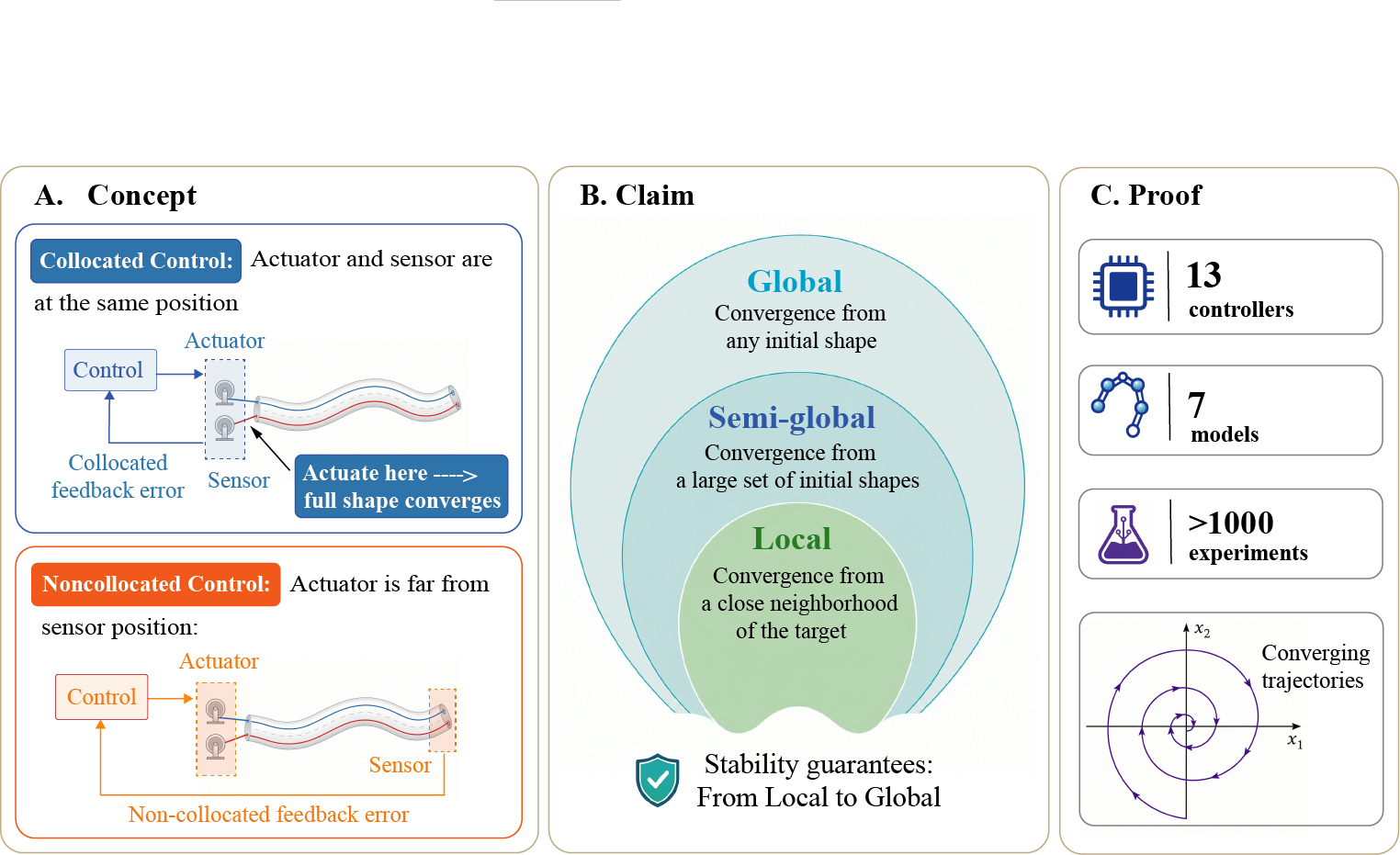}
    \caption{Drive a soft robot’s shape by actuating only where you can - $collocated \ control$ - with guarantees that scale from model-free practical regulation to model-based global stability guarantees. This approach paves the way for shape regulation in soft robots with stability guarantees, without requiring a full dynamic model.}
    \label{fig:cover}
\end{figure*}

\subsection{Notation}
Bold symbols are used to represent vectors and matrices. Function arguments are omitted where the context provides sufficient clarity. For ease of reference, Table~\ref{tab:notation} summarizes the notation adopted in this work.
\begin{table}[t!]
\centering
\caption{Nomenclature}
\begin{tabular}{p{0.35\columnwidth} p{0.55\columnwidth} }
\toprule
Symbol & Description\\
\toprule
$\R^{n}$ & Euclidean space of dimension $n$\\
$\R^{n \times m}$ & Space of $n \times m$ matrices over $\R$\\
$\R^{+}$ & Positive real numbers $n$\\
$\zerov_{n \times m}$ & Zero matrix of dimension $n \times m$\\
$\Pm > 0$ & Symmetric positive definite matrix\\
$\norm{\vv}$ & Euclidean norm of $\vv \in \R^{n}$\\
$\norm{\Am}$ & Matrix norm of $\Am \in \R^{n \times n}$ induced by vector norm\\
$\lambda_{\min}({\Am}) \,\, (\lambda_{\max}(\Am))$ & Smallest (largest) eigenvalue of $\Am \in \R^{n \times n}$\\
$\sigma_{\min}({\Am}) \,\, (\sigma_{\max}(\Am))$ & Smallest (largest) singular value of $\Am \in \R^{n \times m}$\\
$\grad{f(\xv)}{\xv} \in \R^{n}$ & Gradient of $f(\xv) : \R^{n} \rightarrow \R$\\
$\jac{\fv(\xv)}{\xv} \in \R^{m \times n}$ & Jacobian of $\fv(\xv) : \R^{n} \rightarrow \R^{m}$\\
$\hess{\fv(\xv)}{\xv} \in \R^{m \times n \times n}$ & Hessian of $\fv(\xv) : \R^{n} \rightarrow \R^{m}$\\
\bottomrule
\end{tabular}
\label{tab:notation}
\end{table}

\section{Preliminaries}\label{sec:preliminaries}
In this section, we begin by introducing the robot dynamics and the working assumptions. Subsequently, we apply the coordinate transformation introduced in~\cite{pustina2024input} to reformulate the equations of motion in their collocated form, wherein we define two interesting subclasses of soft robots for which global stability guarantees will be established.

\subsection{Dynamic model}\label{sec:dynamic model}
We consider continuum soft robots whose dynamics are described by the following reduced-order model
\begin{equation}\label{eq:dynamics}
    \Mm(\qv)\ddqv + \Cm(\qv, \dqv)\dqv + \gv(\qv) + \kv(\qv) + \Dm(\qv)\dqv = \Am(\qv)\uv,
\end{equation}
where $\qv \in \R^{n}$ is the vector of configuration variables, with $\dqv$ and $\ddqv$ representing its first and second time derivatives, respectively. Moreover, $\Mm(\qv) \in \R^{n \times n}$ is the mass matrix, and $\Cm(\qv, \dqv) \in \R^{n \times n}$ is the Coriolis matrix. The vectors $\gv(\qv) = \grad{\mathcal{U}_{g}(\qv)}{\qv}$ and $\kv(\qv) = \grad{\mathcal{U}_{e}(\qv)}{\qv}$ model the gravitational and elastic forces, derived from the potentials $\mathcal{U}_{g}(\qv) \in \R^{+}$ and $\mathcal{U}_{e}(\qv) \in \R^{+}$, respectively. The damping matrix is $\Dm(\qv) \in \R^{n \times n} > 0$, and $\uv \in \R^{m}$ are the system inputs, mapped in configuration space through the actuation matrix $\Am(\qv) \in \R^{n \times m}$. All above terms can be computed starting from the robot dynamic parameters, such as the mass distribution, and from its forward kinematics
\begin{equation}\label{eq:kinematics}
    \tilde{\xv} = \pv(\qv, \xv) \in \R^{3},
\end{equation}
where $\xv \in V$ represents the points in the region $V \subset \R^{3}$ occupied by the robot in its stress-free configuration $\qv = \zerov_{n}$. For a comprehensive introduction to this topic, we refer the reader to~\cite{dellasantina2023survey, pustinaphdthesis}. In the following derivations, the robot kinetic and potential energy are denoted $\mathcal{H}(\qv, \dqv) = \displaystyle\frac{1}{2}\dqv^{T}\Mm(\qv)\dqv$ and $\mathcal{U}(\qv) = \mathcal{U}_{g}(\qv) + \mathcal{U}_{e}(\qv)$, respectively. 
\begin{remark}
    To account for the inherently underactuated nature of soft robots, in this paper, the number of system inputs is fewer than the number of configuration variables, i.e., $m < n$.
\end{remark}

The control laws developed in this paper rely on four assumptions.  
\begin{enumerate}[label=(\roman*)]
    \item The robot kinematics and its derivatives with respect to the configuration variables up to second order are bounded.
    \item The elastic energy can be lower bounded in norm by a quadratic function of the configuration variables, its gradient can be upper bounded by a linear function of the configuration, and its Hessian is convex and upper bounded.
    \item The damping matrix is upper bounded.
    \item The dynamic model admits collocated form.
\end{enumerate}

The first three assumptions are formalized and justified below, while the fourth is addressed in Sec.~\ref{sec:collocated form}.


\begin{assumption}\label{assumption:kinematics}
    There exist constants $\lambda_{p}$, $\lambda_{\partial p}$, and $\lambda_{\partial^{2} p} \in \R^{+}$ such that, for all $\xv \in V$ and $\qv \in \R^{n}$, $\norm{\pv(\xv,\qv)} \leq \lambda_{p}$, $\norm{\jac{\pv}{\qv}} \leq \lambda_{\partial p}$, and $\norm{\hess{p_i}{\qv}} \leq \lambda_{\partial^{2}p}$ for each component $p_i$ of $\pv$, $i=1,\ldots,\dim(\pv)$.
\end{assumption}

In principle, a kinematic model that considers linear strains, namely elongation and shear, could violate the above. However, in practice, a soft robot can deform only up to a limit before failure. Furthermore, when $\jac{\pv}{\qv}$ or $\hess{\pv}{\qv}$ are unbounded, there are directions in configuration space, and eventually in Cartesian space, where the inertia grows unbounded, which is impossible.
In fact, only under Assumption~\ref{assumption:kinematics}, the dynamic model satisfies the following remarkable properties~\cite{dellasantina2023survey, pustinaphdthesis}.

\begin{property}\label{property:mass matrix 2}
     The mass matrix $\Mm(\qv)$ and the stiffness matrix $\Km$ are symmetric and positive definite for all $\qv \in \R^{n}$, $\bm{M}(\bm{q})=\bm{M}^{T}(\bm{q})\succ \bm{0}$ and $\bm{K}=\bm{K}^{T}\succ \bm{0}$. The damping matrix is positive semi-definite $\bm{D} \succeq \bm{0}$.
\end{property}

\begin{property}\label{property:mass matrix}
    The mass matrix $\Mm(\qv)$ is upper bounded, i.e., there exists a constant $\lambda_{M} \in \R^{+}$ such that, for all $\qv \in \R^{n}$, 
    $
        \norm{ \Mm(\qv) } \leq \lambda_{M}.
    $
\end{property}

\begin{property}\label{property:coriolis matrix}
    There exists a constant $\lambda_{C} \in \R^{+}$ such that, for all $\qv \in \R^{n}$ and $\dqv \in \R^{n}$, 
    $
        \norm{ \Cm(\qv, \dqv)} \leq \lambda_{C}\norm{\dqv}.
    $ In addition, $\dot{\bm{M}}(\bm{q})-2\dot{\bm{C}}(\bm{q}, \dot{\bm{q}})$ is skew symmetric or, equivalently,  $\dot{\bm{M}}(\bm{q}) = \bm{C}(\bm{q}, \dot{\bm{q}}) + \bm{C}^{T}(\bm{q}, \dot{\bm{q}})$.
\end{property}
\begin{property}\label{property:gravity}
    There exists a constant $\lambda_{\mathcal{U}_{g}} \in \R^{+}$ such that, for all $\qv \in \R^{n}$,  
    $
        0 \leq \norm{\mathcal{U}_{g}(\qv)} \leq \lambda_{\mathcal{U}_{g}}.
    $
    Furthermore, there exists constants $\lambda_{G}$ and $\lambda_{\partial G} \in \R^{+}$ such that, for all $\qv \in \R^{n}$, 
    $
        \norm{\gv(\qv)} \leq \lambda_{G},
    $
    and 
    $
        \norm{\jac{\gv}{\qv}} \leq \lambda_{\partial G}.
    $
    The latter inequality also implies that $\gv(\qv)$ is Lipschitz, namely $\norm{ \gv(\xv) - \gv(\yv)} \leq \lambda_{\partial G} \norm{\xv - \yv}$.
\end{property}
Moving to the elastic field, we ask the following.
\begin{assumption}\label{assumption:elastic energy}
    There exists constants $\lambda_{u_{e}}$, $\lambda_{K}$, $\lambda_{\partial k}$, $\lambda_{\partial K} \in \R^{+}$ such that, for all $\qv \in \R^{n}$, $\lambda_{u_{e}} \norm{\qv}^{2} \leq \mathcal{U}_{e}(\qv)$, $\norm{\kv(\qv)} \leq \lambda_{K} \norm{\qv}$ and $\lambda_{\partial k} \leq \jac{\kv}{\qv} \leq \lambda_{\partial K}$. The latter inequality also implies that $\kv(\qv)$ is Lipschitz.
\end{assumption} 
The constraints imposed on $\mathcal{U}_{e}$ and its derivatives ensure that the model reflects empirically observed properties of deformable systems. Specifically, in the absence of external forces, such as gravity, a continuum oscillates and converges\footnote{Neglecting hysteresis and other highly nonlinear effects.} to its stress-free configuration. Therefore, $\qv = \zerov_{n}$ must be a global minimizer of $\mathcal{U}_{e}$. This holds if and only if $\qv = \zerov_{n}$ is the unique solution to the statics $\kv(\qv) = \zerov_{n}$, and the Hessian of $\mathcal{U}_{e}$ is positive definite everywhere. Additionally, in the stress-free configuration, the robot should store no elastic energy, meaning $\mathcal{U}_{e}(\zerov) = 0$. The constraint on the boundedness of $\jac{\kv}{\qv}$ reflects that the arm cannot become arbitrarily stiff in a direction. Such conditions are all satisfied under Assumption~\ref{assumption:elastic energy}. In other words, the assumption guarantees, in a sufficient sense, that the model exhibits properties analogous to those observed in the physical system.

Finally, for the viscous term, the following is assumed, which requires there are no directions where the dissipation rate becomes arbitrarily large.
\begin{assumption}\label{assumption:damping}
    There exists a constant $\lambda_{D} \in \R^{+}$ such that, for all $\qv \in \R^{n}$, $\norm{ \Dm(\qv) } \leq \lambda_{D}$.
\end{assumption}

\subsection{Collocated Form}\label{sec:collocated form}
The actuation matrix $\Am(\qv)$ introduces significant complexity in the design of a controller for~\eqref{eq:dynamics} because it distributes actuator forces across all DoF. Consequently, determining whether a desired actuation force in the configuration space $\tauv_{d}$ can be achieved via a command $\uv_{d}$ satisfying $\Am(\qv)\uv_{d} = \tauv_{d}$ is not straightforward\footnote{In this paper, $\tauv_{d}$ represents the force required to achieve a desired shape.}. In fact, the linear system of equations $\Am(\qv)\uv_{d} = \tauv_{d}$ is generally overdetermined because $\Am(\qv)$ is a tall matrix. As shown in~\cite{pustina2024input}, this issue can be addressed by performing a change of coordinates that maps the inputs $\uv$ directly into the configuration space. Below, we briefly introduce such coordinate transformation and refer readers to~\cite{pustina2024input} for a more detailed explanation.

\begin{assumption}\label{assumption:collocated form}
The columns of $\Am(\qv)$ are exact differentials, i.e., there exists a function $\hv(\qv) : \R^{n} \rightarrow \R^{m}$ satisfying $\jac{\hv}{\qv} = \Am^{T}(\qv)$.    
\end{assumption}
Under Assumption~\ref{assumption:collocated form}, the change of coordinates
\begin{equation*}
    \thetav = \begin{carray}{c}
        \thetav_{a}\\
        \thetav_{u}
    \end{carray} = \begin{carray}{c}
        \hv(\qv)\\
        \lv(\qv)
    \end{carray},
\end{equation*}
with $\lv(\qv) : \R^{n} \rightarrow \R^{n-m}$ any complement to $\hv$, yields equations of motion in the form of~\eqref{eq:dynamics collocated splitted}. 
\begin{figure*}[t!]
\begin{equation}\label{eq:dynamics collocated splitted}\small
    \arraycolsep=1.4pt
    \begin{carray}{cc}
        \Mm_{{aa}} & \Mm_{{au}}\\
        \Mm_{{au}}^{T} & \Mm_{{uu}}
    \end{carray}\begin{carray}{c}
        \ddthetav_{a}\\
        \ddthetav_{u}
    \end{carray} + \begin{carray}{cc}
        \Cm_{{aa}} & \Cm_{{au}}\\
        \Cm_{{au}}& \Cm_{{uu}}
    \end{carray}\begin{carray}{c}
        \dthetav_{a}\\
        \dthetav_{u}
    \end{carray} + \begin{carray}{c}
        \gv_{{a}}\\
        \gv_{{u}}
    \end{carray} + 
    \begin{carray}{c}
        \kv_{{a}}\\
        \kv_{{u}}
    \end{carray} + \begin{carray}{cc}
        \Dm_{{aa}} & \Dm_{{au}}\\
        \Dm_{{au}} & \Dm_{{uu}}
    \end{carray}\begin{carray}{c}
        \dthetav_{a}\\
        \dthetav_{u}
    \end{carray} = \begin{carray}{c}
        \uv\\\zerov_{n-m}
    \end{carray}
\end{equation}
\end{figure*}
Eq.~\eqref{eq:dynamics collocated splitted} is called the collocated form because on its right-hand side, the control inputs directly affect some DoF. The configurations variables $\thetav_{a} = \hv(\qv)$ represent the coordinates on which the actuators do work and are thus termed actuation coordinates. From~\eqref{eq:dynamics collocated splitted}, it becomes evident that the control objective must be achieved through a generalized actuation force $\tauv_{d}$ structured as
\begin{equation*}
    \tauv_{d} = \begin{carray}{c}
        \uv_{d}\\ \zerov_{n-m}
    \end{carray}.
\end{equation*}
The previous equation can be seen as a constraint on controllers design, which originates from the underactuated nature of soft robots. It is also important to note that all the properties introduced in Sec.~\ref{sec:dynamic model}, such as the boundedness of the mass matrix and the Lipschitz continuity of potential forces, remain valid in the collocated form.

Assumption~\ref{assumption:collocated form} provides a necessary and sufficient condition for the existence of the collocated form under a change of coordinates. This condition can be relaxed by introducing also an input transformation~\cite[Lemma 4.3.2]{pustinaphdthesis}. In this paper, we focus on shape regulation for systems where the dynamics admit a collocated form, motivated by two key reasons. First, if the model cannot be represented as in~\eqref{eq:dynamics collocated splitted}, the controller must explicitly account for $\Am(\qv)$. Addressing this challenge remains an open problem. Second, many control-oriented models in soft robotics satisfy Assumption~\ref{assumption:collocated form}. For example, it has been demonstrated in~\cite{pustina2024input} that Cosserat rods driven by tendons and robots with fluidic actuators admit a collocated form. The existence of actuation coordinates $\hv(\qv)$ can also be motivated using power-based arguments. From~\eqref{eq:dynamics}, the power at the robot side is
\begin{equation*}
    P_{r} = \rb{\Am(\qv)\uv}^{T}\dqv = \uv^{T} \Am^{T}(\qv) \dqv.
\end{equation*}
Meanwhile, the power generated due to actuation can be expressed as
\begin{equation*}
    P_{a} = \uv^{T}\dot{\av},
\end{equation*}
for some variables $\av(t) \in \R^{m}$ on which the actuators perform work. For example, the torque generated by an electric motor performs work on its angular position, while the pressure in a pneumatic chamber performs work on its volume. If the system is subject only to scleronomous holonomic constraints, $\av$ depends solely on $\qv$ and not explicitly on $t$ and $\dqv$, i.e., $\av(\qv(t))$. Neglecting losses in power transfer, we have
\begin{equation*}
    P_{r} = \uv^{T} \Am^{T}(\qv) \dqv = \uv^{T} \dot{\av} = P_{a}.
\end{equation*}
Since the actuation inputs are controlled variables, the equality holds independently of $\uv$, implying
\begin{equation*}
    \dot{\av} = \Am^{T}(\qv) \dqv,
\end{equation*}
and
\begin{equation*}
    \thetav_{a} = \hv(\qv) = \av(\qv).
\end{equation*}
Sometimes $\thetav_{a}$ is measurable through proprioceptive sensors mounted on the actuators. For instance, this applies to tendon-driven Cosserat rods~\cite{pustina2024input}. This comes in handy because, as demonstrated in the following sections, it is possible to design provably stable shape regulators using $\thetav_{a}$ alone. However, the implementation of most controllers in this paper relies on full feedback from $\thetav$, or equivalently from $\qv$. We consider this dependence to be a technological limitation of the field, which could be addressed through advancements in sensing technologies or state observers. In this direction,~\cite{feliu2024dynamic} introduces an observer driven by actuation coordinates to estimate the state variables $\qv$ and $\dqv$ of Cosserat rods.

Introducing the change of coordinates $\thetav$ has one main drawback: it renders the generalized stiffness nonlinear, even in cases where $\kv(\qv) = \Km \qv$. However, a special case occurs when $\Am$ is constant. In this scenario, not only the actuation coordinates always exist as $\thetav_{a} = \Am^{T} \qv$ but also the elastic force remains linear. 

\subsection{Two interesting sub-classes of soft robots}

We conclude this section by defining two sub-classes of soft robots, introduced in~\cite{borja2022energy} and~\cite{pustina2022feedback}, for which control laws with global convergence properties can be designed.
\begin{class}\label{class:elastically decoupled}
    A soft robot is elastically decoupled if its generalized stiffness force takes the form
    \begin{equation*}
        \kv(\thetav) = \begin{carray}{c}
            \kv_{a}(\thetav_{a})\\
            \kv_{u}(\thetav_{u})
        \end{carray}.
    \end{equation*}
\end{class}

In other words, a robot is elastically decoupled when the elastic forces acting on the actuated and unactuated parts depend solely on their respective DoF. Whether a soft robot is elastically decoupled or can be approximated as such depends on at least two factors.
\begin{enumerate}[label=(\roman*)]
    \item The DoF of different bodies are always decoupled since internal deformation forces remain confined within each body.
    \item Distributing actuation forces across multiple bodies introduces coupling between actuated and unactuated DoF, as reflected in the stiffness matrix in the {collocated form}.
\end{enumerate}

\begin{class}\label{class:elastically dominated}
    A soft robot is elastically dominated if, for all $\thetav \in \R^{n}$, it holds
    \begin{equation*}
        \jac{\kv_{u}(\thetav)}{\thetav_{u}} + \jac{\gv_{u}(\thetav)}{\thetav_{u}} > 0.
    \end{equation*}
\end{class}
The above condition requires that the stiffness in the unactuated directions dominates gravitational effects, ensuring the potential energy is convex with respect to the unactuated DoF. Consequently, the actuation coordinates uniquely determine the equilibrium (see Section~\ref{section:shape regulation:PD regulator}). 


\subsection{Control Objective}\label{sec:control objective}

\begin{center}
\begin{minipage}{0.96\linewidth}
\textbf{Goal.}
Given a desired shape $\bm{q}_d$, define the collocated reference as 
$\bm{\theta}_{a_d}=\bm{h}(\bm{q}_d)$. Design a control input $\bm{u}$, in the form introduced in Definition \ref{Definition_2}, such that

\begin{equation}
\label{eq:control-objective}
\lim_{t\to\infty}\bm{\theta}_a(t)=\bm{\theta}_{a,d}.
\end{equation}

Furthermore: a) characterize theoretically the conditions under which
$\bm{\theta}_u$ also converges to a compatible equilibrium $\overline{\bm{\theta}}_u$, so that the full shape
satisfies $\bm{q}(t)\to \bm{q}_d$; and b) evaluate experimentally the controller performance, including the effects of discretization choice and gain tuning when applied to a real system.
\end{minipage} 
\end{center}

\section{Collocated shape regulation: definition and control goal}\label{sec:definition_cc}
As discussed in the Introduction, collocated control broadly refers to the regulation of the configuration variables that are directly paired with the actuation inputs in a power-conjugate sense, referred to as collocated variables or actuation coordinates (see Fig. \ref{fig:cover}). This property is particularly valuable in robotics because aligning sensing and actuation generally improves stability and robustness in the control of these variables, thereby mitigating instability arising from modeling errors, delays, and actuator–sensor mismatch. Such characteristics are essential for reliable task execution in applications such as manipulation, locomotion, surgical robotics, and cooperative or contact-rich tasks. In the context of continuum soft robotics, collocated control enables the development of a novel unified framework that shares similarities with conventional robotics. Figure ~\ref{fig:block_diagram} shows the block diagram of the proposed collocated shape-regulation framework.

\begin{figure*}
    \centering
    \includegraphics[width=1\linewidth]{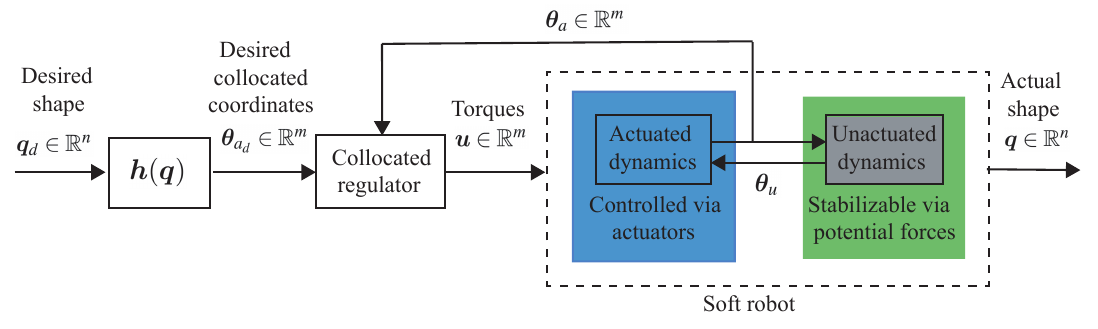}
    \caption{Block diagram of the collocated shape-regulation framework, incorporating the coordinate transformation, $\bm{h}(\bm{q})$, a collocated shape regulator, and a decomposition of the system dynamics into actuated and unactuated components.}
    \label{fig:block_diagram}
\end{figure*}

\begin{definition}
   Consider the coordinate partition
$\bm{\theta}=(\bm{\theta}_a, \ \bm{\theta}_u)$.
The coordinates $\bm{\theta}_a$ are said to be \emph{collocated} if they are power-conjugate to the inputs, i.e., the control input $\bm{u}$ enters only the generalized-force equations associated with $\bm{\theta}_a$ and is absent from the generalized-force equations associated with $\bm{\theta}_u$.
\end{definition} 

\textit{Caveat}: This definition concerns the distribution of generalized forces, not the resulting accelerations. Due to inertial and dynamic coupling, the motion of $\bm{\theta}_u$ may still depend on $\bm{u}$ even though no generalized force is applied directly to the $\bm{\theta}_u$ coordinates.

\begin{definition}\label{Definition_2}
    A controller is said to be \emph{collocated} if its regulated tracking error is defined exclusively in terms of
the collocated coordinates, their derivatives, and any associated controller states. Let $\tthetav_{a}$ denote the tracking error associated with the collocated coordinates. A collocated controller can generally be represented as

\begin{equation}
\label{ec:coll_control}
\uv = \mathcal{F}
(t,\tthetav_{a}, \dot{\tilde{\bm{\theta}}}_a, \bm{\theta}, \dot{\bm{\theta}}, \bm{z}) 
\end{equation}

 where $\bm{z}$ denotes auxiliary controller states, such as integrators, observers, or filters.
\end{definition}



\begin{remark} 
 The non-collocated coordinates $\bm{\theta}_u$ may appear in model-based controller terms, but they are not directly regulated. Their behavior is therefore governed by the internal dynamics of the resulting closed-loop system.
\end{remark}


\begin{definition}
    Consider a soft robot represented in collocated form. The family of collocated controllers considered in this work consists of feedback laws satisfying Definition \ref{Definition_2} and admitting the explicit representation

  {\small
\begin{equation}
\label{ec:controller_def}
\uv = \underbrace{\Fv_d(\thetav)}_{\substack{\text{decoupling}\\\text{mapping}}}
\underbrace{ ( \ddot{\bm{\theta}}_{a_d}
+ \bm{K}_D \dot{\tilde{\bm{\theta}}}_a
+ \bm{K}_P \tthetav_{a}
+ \bm{K}_I \bm{z}_{a}) }_
{\substack{\text{collocated}\\\text{feedback error}}}
+
\underbrace{\Fv_c(\thetav,\dthetav,\bm{\theta}_{d})}
_{\substack{\text{compensation}\\\text{cancellation terms}}}
\end{equation}
}
    
    where $\zv_a$ is a set of controller states associated with the error of the collocated variables\footnote{For example, $\zv$ may represent the state variables associated with an integral action.}, and $\thetav_{a_{d}}(t)$ is the desired value of the actuation coordinates. 
\end{definition}


With reference to the above definition, when $\thetav_{a_{d}}(t)$ is a function of time, the problem is referred to as trajectory tracking; when it is constant, i.e., $\thetav_{a_{d}}(t) = \thetav_{a_{d}}$, it is referred to as regulation. 
Collocated controllers provide a mean of controlling the shape of a continuum because, by definition, the configuration variables parametrize all its possible deformations. For fully actuated approximations ($n=m$), shape control can be solved through the actuation coordinates using controllers inspired by rigid manipulators~\cite{pustina2024input}. However, this becomes non-trivial for underactuated models ($n>m$), even for regulation tasks. As a matter of fact, shape regulation remains an open challenge in soft robotics. 

\begin{problem}[\textbf{Shape Regulation via Collocated Control}]
\label{problem:control problem}
Under Assumptions~\ref{assumption:kinematics}--\ref{assumption:collocated form}, find a feedback controller (\ref{ec:controller_def}), such that $\thetav_{a}$ converges asymptotically to an arbitrary desired reference $\thetav_{a_{d}}$, i.e.,

\begin{equation}\label{ec:control_objective}
        \lim_{t \rightarrow \infty} \tthetav_{a}(t) = \lim_{t \rightarrow \infty} \thetav_{a_{d}}(t) -\thetav_{a}(t) = \zerov_{m},
    \end{equation}

\end{problem}

It is important to note that the shape of a soft robot is uniquely determined by all configuration variables. Consequently, enforcing $\tthetav_{a} = \zerov_{m}$ does not necessarily guarantee that the manipulator reaches a specific desired shape.

\begin{example}
Consider a planar soft robot consisting of two segments, with the base oriented such that gravitational forces destabilize the straight configuration. Assume only the first segment is actuated, and the objective is to stabilize the straight configuration. Enforcing the straight configuration for the first segment alone does not guarantee that the passive segment also aligns, as the straight configuration is unstable for the passive segment.   
\end{example}
Such scenarios require sophisticated controllers~\cite{dellasantina2020soft}, analogous to those used for swing-up tasks in underactuated systems like the pendubot. These advanced strategies fall outside the scope of this work, which focuses on providing general and relatively simple control methodologies. Nevertheless, we will derive conditions under which shape regulation via collocated control can effectively steer the entire robot shape.

\begin{remark}
    Collocated control for soft robots enables a unified shape-regulation framework that is independent of the chosen kinematic discretization. This is because the collocated variables are directly paired with the actuation inputs and can therefore be regulated independently of the discretization. In contrast, the non-collocated variables, $\bm{\theta}_u$, depend on the discretization order and are not directly actuated; instead, they are stabilized indirectly through the potential forces of the system.
    
\end{remark}

\section{Shape Regulators}\label{sec:shape regulators}
In this section, we present the theoretical contributions of the paper: a series of PD and PID-like controllers designed to address shape regulation. The proposed results are presented incrementally, starting with regulators that guarantee local convergence to those with global convergence properties. Closed-loop stability is established under lower bounds on the proportional gain $\bm{K}_P$. Conservative choices for $\bm{K}_P$ are discussed in the stability proofs. A summary of the proposed controller families, their performance guarantees, and implementation costs is represented in Fig. \ref{fig:block_summary}.

\begin{figure}[ht]
    \centering
    \includegraphics[width=1\linewidth]{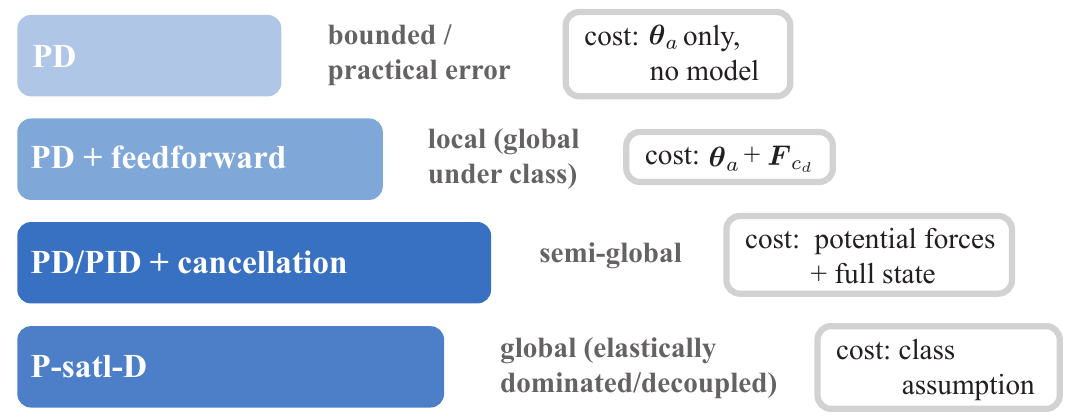}
    \caption{Summary of the proposed controller families, their performance guarantees, and implementation costs.}
    \label{fig:block_summary}
\end{figure}

\subsection{PD Regulator}\label{section:shape regulation:PD regulator}
The simplest feedback controller for shape regulation is a PD
\begin{equation}\label{eq:PD}
    \uv = \bm{K}_P\tthetav_{a} - \bm{K}_D\dthetav_{a},
\end{equation}
where $\bm{K}_P > 0$ and $\bm{K}_D \geq 0 \in \R^{m \times m}$. Note that derivative action is not strictly necessary due to the inherent dissipative properties of the system. This controller is highly practical and easy to implement because it only requires measuring $\bm{\theta}_a$.

Although a PD does not fully solve Problem~\ref{problem:control problem}, it remains a widely used control strategy in soft robotics~\cite{della2021model}. Moreover, it allows conclusions to be drawn that also apply to more advanced control laws. Stability can be analyzed using the energy of the closed-loop system
\begin{equation*}
    V = \mathcal{H} + \frac{1}{2}\tthetav_{a}^{T}\bm{K}_P\tthetav_{a}.
\end{equation*}
This function is non-negative and radially unbounded. After some computations and simplifications, its time derivative reads
\begin{equation*}
    \dV = - \dthetav^{T}\Dm\dthetav - \dthetav_{a}^{T}\bm{K}_D\dthetav_{a} \leq 0.
\end{equation*}
Thus, by applying LaSalle Invariance Principle~\cite{khalil2002nonlinear}, the trajectories are bounded and converge to the set $\{ ( \thetav, \dthetav ) = (\thetav_{eq}, \zerov_{n}) \}$, where $\thetav_{eq}$ satisfies
\begin{equation}\label{eq:PD:equilibrium}
    \begin{carray}{c}
        \gv_{a}(\thetav) + \kv_{a}(\thetav)\\
        \gv_{u}(\thetav) + \kv_{u}(\thetav)
    \end{carray} = \begin{carray}{c}
        \bm{K}_P\tthetav_{a}\\\zerov_{n-m}
    \end{carray}.
\end{equation}
Eq.~\eqref{eq:PD:equilibrium} is solved by $\thetav_{a_{d}}$ only if $\thetav_{a_{d}}$ is an open-loop equilibrium. In general, however, the actuated equilibrium $\thetav_{a_{eq}}$ differs from $\thetav_{a_{d}}$. This is because the controller does not embed any term that forces the first $m$ equations of~\eqref{eq:PD:equilibrium} to have $\thetav_{a_{d}}$ as solution. However, by increasing the proportional gain, $\thetav_{a_{eq}}$ can be made arbitrarily close to $\thetav_{a_{d}}$. Indeed, multiplying both sides of~\eqref{eq:PD:equilibrium} by
\begin{equation*}
    \begin{carray}{cc}
        \bm{K}_P^{-1} & \zerov_{m \times (n-m)}\\
        \zerov_{(n-m) \times m} & \IIm_{n-m}
    \end{carray},
\end{equation*}
and expanding into actuated and unactuated components, gives
\begin{equation*}
    \begin{carray}{c}
        \bm{K}_P^{-1}\rb{\gv_{a} + \kv_{a}}\\
        \gv_{u} + \kv_{u}
    \end{carray} = \begin{carray}{c}
        \thetav_{a_{d}} - \thetav_{a_{eq}}\\
        \zerov_{n-m}
    \end{carray}.
\end{equation*}
As $\bm{K}_P \rightarrow \infty$, $\bm{K}_P^{-1} \rightarrow \zerov_{m \times m}$, and $\thetav_{a_{eq}} \rightarrow \thetav_{a_{d}}$, because all model terms are bounded by assumption. It is worth observing that increasing $\bm{K}_P$ has three main drawbacks. 
\begin{enumerate}[label=(\roman*)]
    \item It modifies the natural stiffness and compliance of the robot~\cite{della2017controlling}.
    \item It can lead to actuator saturation during transients.
    \item It can cause instability due to measurement noise. 
\end{enumerate}
The considerations above may also apply to the other regulators discussed in this paper. However, for control laws other than PD, the lower bounds on $\bm{K}_P$ are finite and act as sufficient conditions. In contrast, for the PD, the requirement that $\bm{K}_P$ be large--theoretically infinite--is a necessary condition for convergence.

In addition to the above issue,~\eqref{eq:PD:equilibrium} has typically many different roots, implying that one cannot even control consistently a single equilibrium through a PD. To guarantee uniqueness of equilibria, a sufficient condition can be derived by considering the closed-loop potential energy
\begin{equation*}
    \mathcal{U}_{cl} = \mathcal{U} + \frac{1}{2}\tthetav_{a}^{T}\bm{K}_P\tthetav_{a}, 
\end{equation*}
whose extrema are the solutions to~\eqref{eq:PD:equilibrium} and check when $\mathcal{U}_{cl}$ is convex. Specifically, there exists a global minimum when
\begin{equation}\label{eq:PD with feedforward hessian}
\small
    \hess{\mathcal{U}_{cl}}{\thetav} = \jac{\gv}{\thetav} + \jac{\kv}{\thetav} +
    \begin{carray}{cc}
        \bm{K}_P & \zerov_{m \times (n-m)}\\
        \zerov_{(n-m) \times m} & \zerov_{(n-m) \times (n-m)}
    \end{carray} > 0.
\end{equation}
If $\lambda_{\partial K} > \lambda_{\partial G}$, then $\hess{\mathcal{U}_{cl}}{\thetav}$ is positive definite and $\mathcal{U}_{cl}$ convex. Notably, the above can be relaxed if the robot is elastically dominated.
\begin{lemma}\label{lemma:PD:uniqueness of equilibria}
    Suppose the robot is elastically dominated (Class~\ref{class:elastically dominated}). If $\bm{K}_P$ is large enough, then there exists a unique solution to~\eqref{eq:PD:equilibrium}.
\end{lemma}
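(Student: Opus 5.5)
The idea is to show that, for $\bm{K}_P$ large enough, the closed-loop potential $\mathcal{U}_{cl}$ is strictly convex on all of $\R^{n}$ and radially unbounded. Its critical points are exactly the solutions of~\eqref{eq:PD:equilibrium}, so convexity yields a single global minimizer, and radial unboundedness guarantees that this minimizer exists.

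\textbf{Step 1: Hessian structure.} Write $\Km_{\mathcal{U}}(\thetav) = \jac{\gv}{\thetav} + \jac{\kv}{\thetav}$ in block form,
\begin{equation*}
\hess{\mathcal{U}_{cl}}{\thetav} = \begin{carray}{cc} \Km_{aa} + \bm{K}_P & \Km_{au} \\ \Km_{au}^{T} & \Km_{uu} \end{carray}.
\end{equation*}
Both blocks are symmetric because $\mathcal{U}$ is a potential. The elastic-dominance condition (Class~\ref{class:elastically dominated}) gives $\Km_{uu}(\thetav) > 0$ for every $\thetav$.

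\textbf{Step 2: Schur complement.} By the Schur complement, the Hessian is positive definite if and only if
\begin{equation*}
\Km_{aa} + \bm{K}_P - \Km_{au}\Km_{uu}^{-1}\Km_{au}^{T} > 0.
\end{equation*}
The blocks $\Km_{aa}$ and $\Km_{au}$ are uniformly bounded by $\lambda_{\partial K} + \lambda_{\partial G}$, using Assumption~\ref{assumption:elastic energy} and Property~\ref{property:gravity}, which carry over to collocated coordinates. Hence it suffices to choose
\begin{equation*}
\lambda_{\min}(\bm{K}_P) > \sup_{\thetav}\Big(\norm{\Km_{aa}} + \norm{\Km_{au}}^{2}\,\norm{\Km_{uu}^{-1}}\Big).
\end{equation*}

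\textbf{Main obstacle.} This bound needs $\norm{\Km_{uu}^{-1}}$ to be uniformly bounded, that is, $\lambda_{\min}(\Km_{uu}(\thetav)) \geq \epsilon > 0$ for all $\thetav$. Class~\ref{class:elastically dominated} only states pointwise positivity. I would read the class as requiring a uniform margin, or else obtain one from the uniform lower bound $\lambda_{\partial k}$ on $\jac{\kv}{\qv}$ in Assumption~\ref{assumption:elastic energy} combined with the bounded gravity Jacobian. If neither works, I would restrict the argument to the sublevel set of $\mathcal{U}_{cl}$ containing the equilibria, which is compact as shown in Step 3, and take the infimum over that set.

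\textbf{Step 3: Existence and uniqueness.} Radial unboundedness of $\mathcal{U}_{cl}$ follows from three facts: $\mathcal{U}_{e} \geq \lambda_{u_e}\norm{\qv}^{2}$, $\mathcal{U}_{g}$ is bounded, and the added term $\frac{1}{2}\tthetav_{a}^{T}\bm{K}_P\tthetav_{a}$ is nonnegative. Here $\norm{\qv}$ must grow with $\norm{\thetav}$, which holds because the coordinate change is a global diffeomorphism. A continuous, radially unbounded function has a global minimizer, which is a critical point and therefore solves~\eqref{eq:PD:equilibrium}. A $C^{2}$ function with a positive definite Hessian everywhere is strictly convex and so has at most one critical point. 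Together these show that~\eqref{eq:PD:equilibrium} has exactly one solution.
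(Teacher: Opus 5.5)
Your main line is the paper's proof. The paper also applies the Schur complement to $\hess{\mathcal{U}_{cl}}{\thetav}$, takes positivity of the $uu$-block from elastic dominance, and enforces the complement through $\lambda_{\min}(\bm{K}_P)$. Its conservative bound divides by $\lambda_{\min}(\jac{\gv_{u}}{\thetav_{u}} + \jac{\kv_{u}}{\thetav_{u}})$ as though this were a positive constant. So the paper silently adopts your first reading (a uniform margin), and under that reading your proof is complete. Your Step~3 also makes explicit the existence of a solution, which the paper only asserts.

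The obstacle you flagged is real, however, and neither fallback resolves it.

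\emph{Second fallback.} Deriving a margin from $\lambda_{\partial k}$ only gives $\Km_{uu} \geq (\lambda_{\partial k} - \lambda_{\partial G})\IIm$. This helps only when $\lambda_{\partial k} > \lambda_{\partial G}$, but then $\jac{\gv}{\thetav} + \jac{\kv}{\thetav}$ is already positive definite and elastic dominance plays no role.

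\emph{Third fallback.} Restricting to a sublevel set of $\mathcal{U}_{cl}$ fails for three reasons. First, that set depends on $\bm{K}_P$, so choosing $\bm{K}_P$ via a supremum over it is circular. Second, nothing shows that a single sublevel set contains all equilibria. Third, a sublevel set need not be convex, so positive definiteness on it does not give uniqueness through your Step~3 convexity argument.

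\emph{What works.} Localize the equilibria in a fixed convex set. Let $\thetav^{\circ} = (\thetav_{a_{d}}^{T} \,\, \zerov_{n-m}^{T})^{T}$. Every solution of~\eqref{eq:PD:equilibrium} satisfies $(\thetav - \thetav^{\circ})^{T}(\gv(\thetav) + \kv(\thetav)) = -\tthetav_{a}^{T}\bm{K}_P\tthetav_{a} \leq 0$. On the other hand, the lower bound $\jac{\kv}{\thetav} \geq \lambda_{\partial k}$ (carried to collocated coordinates, as you already do for the upper bounds) and $\norm{\gv} \leq \lambda_{G}$ give
\begin{equation*}
(\thetav - \thetav^{\circ})^{T}\left(\gv(\thetav) + \kv(\thetav)\right) \geq \lambda_{\partial k}\norm{\thetav - \thetav^{\circ}}^{2} - \left(\lambda_{G} + \norm{\kv(\thetav^{\circ})}\right)\norm{\thetav - \thetav^{\circ}}.
\end{equation*}
Hence all equilibria lie in the closed ball $B$ of radius $(\lambda_{G} + \norm{\kv(\thetav^{\circ})})/\lambda_{\partial k}$ centered at $\thetav^{\circ}$, independently of $\bm{K}_P$.

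On this compact convex set, pointwise elastic dominance and continuity give $\epsilon = \min_{B}\lambda_{\min}(\Km_{uu}) > 0$. Choosing $\lambda_{\min}(\bm{K}_P) > \max_{B}\left(\norm{\Km_{aa}} + \norm{\Km_{au}}^{2}/\epsilon\right)$ makes $\mathcal{U}_{cl}$ strictly convex on $B$. It therefore has at most one critical point in $B$, and hence in $\R^{n}$, and your coercivity argument supplies existence. This proves the lemma under the literal, pointwise Class~\ref{class:elastically dominated}.
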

\begin{proof}
According to the Schur criterion,~\eqref{eq:PD with feedforward hessian} is positive definite if and only if 
\begin{equation}\label{eq:PD with feedforward:schur 1}
    \jac{\gv_{u}}{\thetav_{u}} + \jac{\kv_{u}}{\thetav_{u}} > 0
\end{equation}
and
\begin{equation}\label{eq:PD with feedforward:schur 2}
\begin{split}
    &\bm{K}_P + \jac{\gv_{a}}{\thetav_{a}} + \jac{\kv_{a}}{\thetav_{a}}-\rb{ \jac{\gv_{a}}{\thetav_{u}} + \jac{\kv_{a}}{\thetav_{u}} }\\
    & \rb{\jac{\gv_{u}}{\thetav_{u}}+ \jac{\kv_{u}}{\thetav_{u}}}^{-1} \rb{ \jac{\gv_{a}}{\thetav_{u}} + \jac{\kv_{a}}{\thetav_{u}} }^{T} > 0.
\end{split}
\end{equation}
The first matrix inequality is satisfied if the robot is elastically decoupled. As for the second, it is always possible to choose $\bm{K}_P$ such that~\eqref{eq:PD with feedforward:schur 2} holds. For example, a conservative choice is
\begin{equation*}
\begin{split}
    \lambda_{\min}(\bm{K}_P) &> - \lambda_{\min}(\jac{\kv_{a}}{\thetav_{a}}) + \lambda_{\max}(\jac{\gv_{a}}{\thetav_{a}})\\
    &\quad\quad+ \frac{\sigma_{\max}^{2}(\jac{\gv_{a}}{\thetav_{u}} +\jac{\kv_{a}}{\thetav_{u}})}{\lambda_{\min}(\jac{\gv_{u}}{\thetav_{u}} + \jac{\kv_{u}}{\thetav_{u}})}. 
\end{split}
\end{equation*}
\end{proof}
When the robot is elastically dominated, i.e.,~\eqref{eq:PD with feedforward:schur 1} holds, the set of equilibria depends only on $\thetav_{a}$. Therefore, ensuring the convergence of $\thetav_{a}$ to $\thetav_{a_{d}}$ also guarantees the convergence of $\thetav_{u}$ to a unique point.

\begin{remark}
    The above uniqueness property holds not only for the PD but also for the other regulators discussed later, provided that $\bm{K}_P$ is sufficiently large. However, the lower bound on $\bm{K}_P$ depends on the specific control law and the cancellations it performs. Generally, the more cancellations, the smaller $\bm{K}_P$ needs to be to guarantee the convexity of $\hess{\mathcal{U}_{cl}}{\thetav}$.
\end{remark}

\subsection{PD with feedforward}
The simplest modification on the PD to ensure that $\thetav_{a_{d}}$ solves the statics involves adding the value of the generalized potential forces at $\thetav_{a_{d}}$ to the control action. A regulator of this type is called a PD controller with feedforward, and it has been extensively studied in the robotics literature~\cite{kuntze85, santibanez2001pd, deluca2000feedforward}. Its structure is
\begin{equation}\label{eq:PD with feedforward}
    \uv = \bm{K}_P\tthetav_{a} - \bm{K}_D\dthetav_{a} + \gv_{a}(\thetav_{d}) + \kv_{a}(\thetav_{d}),
\end{equation}
where $\thetav_{d} = (\thetav_{a_{d}}^{T} \,\, \thetav_{u_{d}}^{T})^{T}$ is a solution to the equilibrium equation
\begin{equation}\label{eq:PD with feedforward:equilibrium}
\begin{carray}{c}
    \gv_{a}(\thetav) + \kv_{a}(\thetav)\\
    \gv_{u}(\thetav) + \kv_{u}(\thetav)
\end{carray}
    = \begin{carray}{c}
        \bm{K}_P\tthetav_{a} + \gv_{a}(\thetav_{d}) + \kv_{a}(\thetav_{d})\\
        \zerov_{n-m}
    \end{carray}.
\end{equation}

Eq.~\eqref{eq:PD with feedforward} exhibits properties similar to the simple PD controller in terms of trajectories boundedness. However, it also guarantees local asymptotic stability of $\thetav_{d}$ and global asymptotic stability when the robot is elastically dominated (see Section~\ref{sec:shape_regulation:global controllers}). This controller only requires measuring $\bm{\theta}_a$, making it easy to implement without additional sensors or observers to estimate $\bm{\theta}_u$.

\begin{lemma}\label{lemma:PD with feedfoward stability}
    For all $\bm{K}_P > 0$, the trajectories of the closed-loop system~\eqref{eq:dynamics collocated splitted}--\eqref{eq:PD with feedforward} remain bounded and converge asymptotically to $(\thetav, \,\, \dthetav) = (\thetav_{eq}, \,\, \zerov_{n})$, where $\thetav_{eq}$ is a solution to~\eqref{eq:PD with feedforward:equilibrium}. In addition, for all $\thetav_{a_{d}} \in \R^{m}$, $\thetav_{d}$ is locally asymptotically stable.
\end{lemma}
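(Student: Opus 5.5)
We propose to use an energy-shaping Lyapunov function, in the spirit of the PD analysis above, with the feedforward term absorbed into a modified closed-loop potential. Define
\begin{equation*}
    V(\thetav,\dthetav) = \mathcal{H}(\thetav,\dthetav) + \mathcal{U}(\thetav) - \mathcal{U}(\thetav_{d}) - \rb{\gv_{a}(\thetav_{d}) + \kv_{a}(\thetav_{d})}^{T}\rb{\thetav_{a} - \thetav_{a_{d}}} + \frac{1}{2}\tthetav_{a}^{T}\bm{K}_P\tthetav_{a}.
\end{equation*}
Differentiating along~\eqref{eq:dynamics collocated splitted}, and using Property~\ref{property:coriolis matrix} (skew-symmetry) and $\grad{\mathcal{U}}{\thetav} = \gv + \kv$, the potential terms cancel against the control input~\eqref{eq:PD with feedforward}. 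This leaves
\begin{equation*}
    \dV = -\dthetav^{T}\Dm\dthetav - \dthetav_{a}^{T}\bm{K}_D\dthetav_{a} \leq 0.
\end{equation*}

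For boundedness, we need $V$ to be radially unbounded, or at least its sublevel sets to be compact. Since $\mathcal{U}_{g}$ is bounded (Property~\ref{property:gravity}) and $\mathcal{U}_{e}$ grows at least quadratically in $\qv$ (Assumption~\ref{assumption:elastic energy}), the only negative term, which is linear in $\thetav_{a}$, is dominated. Radial unboundedness in $\thetav$ must then be transferred from $\qv$ to $\thetav$ through the change of coordinates; we take it to be a global diffeomorphism with $\norm{\thetav}\to\infty$ iff $\norm{\qv}\to\infty$. We also need $\mathcal{H}\geq \frac{1}{2}\lambda_{\min}(\Mm)\norm{\dthetav}^2$, using positive definiteness of $\Mm$ from Property~\ref{property:mass matrix 2}. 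This gives bounded trajectories for every $\bm{K}_P>0$.

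LaSalle's invariance principle then applies. Since $\Dm>0$, $\dV=0$ forces $\dthetav\equiv\zerov_{n}$ and hence $\ddthetav\equiv\zerov_{n}$. Substituting into the closed-loop dynamics shows that the largest invariant set consists of points with zero velocity whose configurations solve~\eqref{eq:PD with feedforward:equilibrium}. Trajectories therefore converge to that set. Convergence to a single $\thetav_{eq}$ rather than merely to the set follows if the equilibria are isolated, since the $\omega$-limit set is connected; we will state this caveat.

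For local asymptotic stability of $\thetav_{d}$, note that $\thetav_{d}$ solves~\eqref{eq:PD with feedforward:equilibrium} by construction and is a critical point of the closed-loop potential $\mathcal{U}_{cl}$ appearing in $V$. Its Hessian there is~\eqref{eq:PD with feedforward hessian} evaluated at $\thetav_{d}$. We need this Hessian positive definite, so that $\thetav_{d}$ is a strict isolated minimum and $V-V(\thetav_{d},\zerov_{n})$ is locally positive definite. LaSalle restricted to a small sublevel set then yields local asymptotic stability, because $\thetav_{d}$ is the only equilibrium in that neighbourhood.

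This Hessian condition is the main obstacle. The statement claims the result for all $\bm{K}_P>0$, but positive definiteness of~\eqref{eq:PD with feedforward hessian} is only guaranteed for large $\bm{K}_P$ or when $\lambda_{\partial k}>\lambda_{\partial G}$. We would first try to show that $\thetav_{d}$, as the desired shape, is chosen to be a stable open-loop-compatible equilibrium, i.e.\ a local minimizer of $\mathcal{U}$ restricted to the unactuated directions. Adding $\bm{K}_P>0$ in the actuated block then keeps the Hessian positive definite via the Schur complement argument of Lemma~\ref{lemma:PD:uniqueness of equilibria}. If that fails, we would fall back on the gain condition of Lemma~\ref{lemma:PD:uniqueness of equilibria}.
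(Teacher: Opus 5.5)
Your boundedness and convergence argument follows the paper's route: an energy-like $V$ with the feedforward term absorbed linearly, $\dV=-\dthetav^{T}\Dm\dthetav-\dthetav_{a}^{T}\bm{K}_D\dthetav_{a}$, then LaSalle. Your version is the more careful one. The $V$ printed in the paper omits $\mathcal{U}$, and without it neither the stated $\dV$ nor radial unboundedness in $\thetav_{u}$ holds. Your caveats on transferring coercivity from $\qv$ to $\thetav$, and on needing isolated equilibria to converge to a single $\thetav_{eq}$, are points the paper passes over.

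\textbf{The gap: local asymptotic stability of $\thetav_{d}$.} The paper deduces it merely from the fact that $\thetav_{d}$ solves~\eqref{eq:PD with feedforward:equilibrium}. You rightly see this is not enough, but your first repair fails.

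Partition $\bm{H}=\hess{\mathcal{U}}{\thetav}(\thetav_{d})$ into actuated and unactuated blocks. The Hessian of the potential part of $V$ at $\thetav_{d}$ is positive definite iff $\bm{H}_{uu}>0$ and $\bm{K}_P+\bm{H}_{aa}-\bm{H}_{au}\bm{H}_{uu}^{-1}\bm{H}_{au}^{T}>0$.
\begin{itemize}
\item The second inequality does not follow from $\bm{K}_P>0$. Here $\bm{K}_P$ must dominate $\bm{H}_{au}\bm{H}_{uu}^{-1}\bm{H}_{au}^{T}-\bm{H}_{aa}$. Given $\bm{H}_{uu}>0$, this matrix has a positive eigenvalue exactly when $\bm{H}$ has a negative one, i.e.\ when gravity overcomes elasticity in some direction at $\thetav_{d}$. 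The standing assumptions allow this unless $\lambda_{\partial k}>\lambda_{\partial G}$.
\item The first inequality is not implied by anything either. $\thetav_{u_{d}}$ is only required to solve the unactuated statics, and it may be a maximum there (the two-segment example of Section~\ref{sec:definition_cc}). In that case no gain helps.
\end{itemize}

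This is a limit of the statement, not just of your method. Suppose $\thetav_{d}$ is not a local minimizer of the potential part of $V$. Then there are states $(\thetav_{0},\zerov_{n})$ arbitrarily close to $(\thetav_{d},\zerov_{n})$ with $V<V(\thetav_{d},\zerov_{n})$. Since $V$ is nonincreasing along trajectories, those trajectories can never converge to $(\thetav_{d},\zerov_{n})$.

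So the final sentence of the lemma cannot be proved for every $\bm{K}_P>0$ and every $\thetav_{a_{d}}$. Two versions are provable:
\begin{itemize}
\item local asymptotic stability under condition~\eqref{eq:PD with actuation feedback:assumption}, with $\bm{K}_P$ above the Schur-complement bound (PD with feedforward is the special case $\uv_{m}=\gv_{a}(\thetav_{d})+\kv_{a}(\thetav_{d})$ of Theorem~\ref{theorem:PD with actuation feedback});
\item local asymptotic stability for every $\bm{K}_P>0$ when $\lambda_{\partial k}>\lambda_{\partial G}$.
\end{itemize}
Your fallback should be stated as such a change of hypotheses, not as a proof of the lemma as written.
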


\begin{proof}
    Define the Lyapunov-like function
    \begin{equation*}
        V = \mathcal{H} + \frac{1}{2}\tthetav_{a}^{T}\bm{K}_P\tthetav_{a} + \tthetav_{a}^{T}\rb{ \gv_{a}(\thetav_{d}) + \kv_{a}(\thetav_{d}) } + \nu,
    \end{equation*}
    where
    $$\nu = \displaystyle \frac{\norm{ \gv_{a}(\thetav_{d}) + \kv_{a}(\thetav_{d})}^{2}}{2 \lambda_{\min}(\bm{K}_P)}.$$
    This function is non-negative and radially unbounded. Moreover, after some calculations, the time derivative of $V$ reads
    \begin{equation*}
        \dV = - \dthetav^{T}\Dm\dthetav - \dthetav_{a}^{T}\bm{K}_D\dthetav_{a}.
    \end{equation*}
    By LaSalle Invariance Principle, the closed-loop trajectories are bounded and converge to $(\thetav, \,\, \dthetav) = (\thetav_{eq}, \,\, \zerov_{n})$, where $\thetav_{eq}$ satisfies the statics~\eqref{eq:PD with feedforward:equilibrium}. Since $\thetav_{d}$ solves~\eqref{eq:PD with feedforward:equilibrium}, it is a locally asymptotically stable equilibrium. 
\end{proof}

\subsection{Regulators with local asymptotic stability properties}\label{sec:regulators with local asymptotic stability}
The concept of adding the potential forces at $\thetav_{a_{d}}$ to ensure that $\thetav_{a_{d}}$ is an actuated equilibrium can be generalized by introducing a generic model-based term that guarantees such property, which is clearly necessary to address Problem~\ref{problem:control problem}. 

Consider the class of PD regulators defined by
\begin{equation}\label{eq:PD with actuation feedback:control law}
    \uv = \bm{K}_P\tthetav_{a} - \bm{K}_D\dthetav_{a} + \uv_{m}(\thetav, \thetav_{d}),
\end{equation}
where $\uv_{m}$ is a model-based term satisfying
\begin{equation}\label{eq:PD with actuation feedback:constraint}
    \uv_{m}(\thetav_{d}, \thetav_{d}) - \gv_{a}(\thetav_{d}) - \kv_{a}(\thetav_{d}) = \zerov_{m},
\end{equation}
so that $\thetav_{a_{d}}$ is an equilibrium of the closed-loop system. 

\begin{theorem}\label{theorem:PD with actuation feedback}
Suppose that
\begin{equation}\label{eq:PD with actuation feedback:assumption}
    \jac{\gv_{u}}{\thetav_{u}}(\thetav_{d}) + \jac{\kv_{u}}{\thetav_{u}}(\thetav_{d}) > 0. 
\end{equation}
If $\bm{K}_P$ is sufficiently large, then the equilibrium configuration $\thetav = \thetav_{d}$ is locally asymptotically stable.
\end{theorem}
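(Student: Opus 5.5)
We prove local asymptotic stability with a Lyapunov function built from the closed-loop energy, and we use a Schur-complement argument, as in Lemma~\ref{lemma:PD:uniqueness of equilibria}, to show that the closed-loop potential has a strict local minimum at $\thetav_{d}$. The obstacle is that $\uv_{m}(\thetav,\thetav_{d})$ is a generic model-based term, not a gradient, so the closed-loop system is not obviously Hamiltonian.

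We therefore linearize. Write $\uv_{m}(\thetav,\thetav_{d}) = \gv_{a}(\thetav_{d})+\kv_{a}(\thetav_{d}) + \Jm_{m}(\thetav-\thetav_{d}) + o(\norm{\thetav-\thetav_{d}})$, with $\Jm_{m} = \jac{\uv_{m}}{\thetav}(\thetav_{d},\thetav_{d}) = (\Jm_{ma}\ \Jm_{mu})$. This uses \eqref{eq:PD with actuation feedback:constraint}. The linearized closed loop about $(\thetav_{d},\zerov_{n})$ is then
\begin{equation*}
\Mm(\thetav_{d})\ddot{\vect{e}} + \Dm(\thetav_{d})\dot{\vect{e}} + \begin{carray}{c}\bm{K}_D\dot{\vect{e}}_{a}\\ \zerov_{n-m}\end{carray} + \Pm\,\vect{e} = \zerov_{n},
\end{equation*}
where $\vect{e}=\thetav-\thetav_{d}$ and
\begin{equation*}
\Pm = \jac{\gv}{\thetav}(\thetav_{d}) + \jac{\kv}{\thetav}(\thetav_{d}) + \begin{carray}{cc}\bm{K}_P - \Jm_{ma} & -\Jm_{mu}\\ \zerov & \zerov\end{carray}.
\end{equation*}
The Coriolis terms are quadratic in the velocity and drop out. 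Since $\Dm>0$ strictly, the velocity-dependent term is dissipative with full rank.

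If $\Pm$ is symmetric and positive definite, the linear system is a damped mechanical system with positive definite stiffness. Then $\frac12\dot{\vect{e}}^{T}\Mm\dot{\vect{e}}+\frac12\vect{e}^{T}\Pm\vect{e}$ has derivative $-\dot{\vect{e}}^{T}(\Dm+\mathrm{blkdiag}(\bm{K}_D,\zerov))\dot{\vect{e}}<0$ whenever $\dot{\vect{e}}\neq\zerov$. LaSalle then gives global asymptotic stability of the linear system, hence exponential stability. Lyapunov's indirect method transfers this to local asymptotic stability of $\thetav_{d}$ for the nonlinear closed loop.

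Positivity of $\Pm$ follows from the Schur complement. The $(u,u)$ block is exactly $\jac{\gv_{u}}{\thetav_{u}}(\thetav_{d})+\jac{\kv_{u}}{\thetav_{u}}(\thetav_{d})>0$ by \eqref{eq:PD with actuation feedback:assumption}. The Schur complement with respect to the $(u,u)$ block is $\bm{K}_P$ plus a fixed matrix, since all terms are evaluated at $\thetav_{d}$ and do not depend on $\bm{K}_P$. Choosing $\lambda_{\min}(\bm{K}_P)$ larger than a bound of the same type as in the proof of Lemma~\ref{lemma:PD:uniqueness of equilibria} makes it positive definite.

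The main obstacle is that $\Pm$ need not be symmetric, because $\Jm_{mu}$ enters only the upper-right block. I would handle this in one of two ways.
\begin{enumerate}[label=(\roman*)]
\item Split $\Pm = \Pm_{s} + \Pm_{k}$ into its symmetric and skew parts, and use the Lyapunov function $\frac12\dot{\vect{e}}^{T}\Mm\dot{\vect{e}}+\frac12\vect{e}^{T}\Pm_{s}\vect{e}+\epsilon\,\vect{e}^{T}\Mm\dot{\vect{e}}$. The cross term $\vect{e}^{T}\Pm_{k}\dot{\vect{e}}$ is dominated by the strict damping plus a small $\epsilon$ cross term, provided $\bm{K}_P$ is large enough that $\Pm_{s}$ dominates.
\item Alternatively, use a Routh–Hurwitz / singular-perturbation argument. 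As $\bm{K}_P\to\infty$, $\vect{e}_{a}$ becomes fast and is driven to zero. The slow $\vect{e}_{u}$ dynamics are then governed by the positive definite block $\jac{\gv_{u}}{\thetav_{u}}+\jac{\kv_{u}}{\thetav_{u}}$ with damping $\Dm_{uu}>0$, and are therefore exponentially stable.
\end{enumerate}
I would try (i) first, since it yields an explicit lower bound on $\bm{K}_P$.
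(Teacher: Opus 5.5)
Your linearization and Schur-complement step are exactly the paper's proof. There, the symmetric part $\Qm$ of the linearized stiffness is formed (your $\Pm_{s}$, with $-\frac{1}{2}\jac{\uv_{m}}{\thetav_{u}}$ off the diagonal). Local asymptotic stability is asserted to hold if and only if $\Qm>0$, and $\Qm>0$ is then enforced for large $\bm{K}_P$ as you do. You are right that this is not enough when $\Pm\neq\Pm^{T}$, a point the paper passes over. Circulatory terms can cause flutter even with $\Pm_{s}>0$: for example, $\ddot{\vect{x}}+d\dot{\vect{x}}+(k\IIm_{2}+c\,\mat{\Omega})\vect{x}=\zerov$, with $\mat{\Omega}$ the $90^{\circ}$ rotation, is unstable whenever $c^{2}>kd^{2}$.

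However, your remedy (i) does not close this gap. Let $\tilde{\Dm}=\Dm(\thetav_{d})+\mathrm{blkdiag}(\bm{K}_D,\zerov)$ and $\mat{H}=\jac{\gv}{\thetav}(\thetav_{d})+\jac{\kv}{\thetav}(\thetav_{d})$. Your function gives $\dV=-\dot{\vect{e}}^{T}(\tilde{\Dm}-\epsilon\Mm)\dot{\vect{e}}-\dot{\vect{e}}^{T}\Pm_{k}\vect{e}-\epsilon\vect{e}^{T}\tilde{\Dm}\dot{\vect{e}}-\epsilon\vect{e}^{T}\Pm_{s}\vect{e}$. The skew part contributes the coupling $\frac{1}{2}\dot{\vect{e}}_{a}^{T}\Jm_{mu}\vect{e}_{u}$, which does not depend on $\bm{K}_P$. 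The only negative term in $\vect{e}_{u}$ is $-\epsilon\vect{e}_{u}^{T}\mat{H}_{uu}\vect{e}_{u}$, and $\epsilon$ is limited by $\tilde{\Dm}-\epsilon\Mm>0$. Raising $\bm{K}_P$ enlarges $\Pm_{s}$ only along $\vect{e}_{a}$, so it cannot help.

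A concrete counterexample: take $n=2$, $m=1$, $\Mm=\tilde{\Dm}=\IIm_{2}$, $\mat{H}=\mathrm{diag}(0,1)$ and $\Jm_{m}=(0\ \ \gamma)$. Restricting $-\dV$ to $\vect{e}_{a}=0$, $\dot{\vect{e}}_{u}=0$ leaves $(1-\epsilon)\dot{e}_{a}^{2}-\frac{\gamma}{2}\dot{e}_{a}e_{u}+\epsilon e_{u}^{2}$. For $\gamma\geq 2$ this is not positive definite, whatever $\epsilon$ and $\bm{K}_P$. Yet this closed loop is triangular ($\ddot{e}_{u}+\dot{e}_{u}+e_{u}=0$ drives a stable $e_{a}$-oscillator), so it is exponentially stable.

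The missing idea is to symmetrize differently, putting all of $\Jm_{mu}$, not half of it, into the quadratic form. Let $\mat{S}$ be symmetric with $\mat{S}_{uu}=\mat{H}_{uu}$, $\mat{S}_{au}=\mat{H}_{au}-\Jm_{mu}$ and $\mat{S}_{aa}=\bm{K}_P+\mathrm{sym}(\mat{H}_{aa}-\Jm_{ma})$. Then $\mat{S}$ and $\Pm$ share their $u$-columns, so $\dot{\vect{e}}^{T}(\mat{S}-\Pm)\vect{e}=\dot{\vect{e}}_{a}^{T}\,\mathrm{skew}(\Jm_{ma})\,\vect{e}_{a}-\dot{\vect{e}}_{u}^{T}\Jm_{mu}^{T}\vect{e}_{a}$ contains $\vect{e}_{a}$ only. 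Your Schur-complement step gives $\mat{S}>0$ for large $\bm{K}_P$, with $\lambda_{\min}(\mat{S})$ nondecreasing in $\bm{K}_P$.

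Now take $V=\frac{1}{2}\dot{\vect{e}}^{T}\Mm\dot{\vect{e}}+\frac{1}{2}\vect{e}^{T}\mat{S}\vect{e}+\epsilon\vect{e}^{T}\Mm\dot{\vect{e}}$. The term $-\epsilon\vect{e}^{T}\Pm_{s}\vect{e}$ in $\dV$ supplies $-\epsilon\lambda_{\min}(\bm{K}_P)\norm{\vect{e}_{a}}^{2}$ and $-\epsilon\lambda_{\min}(\mat{H}_{uu})\norm{\vect{e}_{u}}^{2}$, up to $\bm{K}_P$-independent cross terms. Every remaining coupling is either $O(\epsilon)$ or of the form $\norm{\dot{\vect{e}}}\norm{\vect{e}_{a}}$. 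Fix $\epsilon$ small first, relative to $\tilde{\Dm}$, $\Mm$, $\mat{H}_{uu}$ and $\lambda_{\min}(\mat{S})$ at a reference gain. Then raise $\bm{K}_P$. This yields a strict quadratic Lyapunov function with the explicit gain bound you wanted.

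Route (ii) reaches the right conclusion, but not by a plain Tikhonov argument. The fast subsystem $\Mm\ddot{\vect{e}}+\mathrm{blkdiag}(\bm{K}_P,\zerov)\vect{e}=\zerov$ is undamped at leading order, so you need eigenvalue perturbation. Let $\bm{K}_P=\kappa\bar{\bm{K}}$ and let $(\omega_{\ell}^{2},\vect{v}_{\ell})$ be the nonzero generalized eigenpairs of $(\mathrm{blkdiag}(\bar{\bm{K}},\zerov),\Mm)$. The fast eigenvalues are then $\pm i\sqrt{\kappa}\,\omega_{\ell}-\frac{\vect{v}_{\ell}^{T}\tilde{\Dm}\vect{v}_{\ell}}{2\vect{v}_{\ell}^{T}\Mm\vect{v}_{\ell}}+O(\kappa^{-1/2})$. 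The slow ones converge to the roots of $\det(\lambda^{2}\Mm_{uu}+\lambda\Dm_{uu}+\mat{H}_{uu})$, all in the open left half-plane. $\Jm_{m}$ enters neither limit.
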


\begin{proof}
   The result can be shown by performing a stability analysis on the linearized closed-loop equations~\eqref{eq:dynamics collocated splitted}--\eqref{eq:PD with actuation feedback:control law}. The details of the proof are provided in Appendix \ref{Sec:Appendix 1}.
\end{proof}

Condition~\eqref{eq:PD with actuation feedback:assumption} represents a local version of the requirement for elastically dominated robots. Although it is less restrictive than (\ref{eq:PD with feedforward:schur 1}), it provides only a sufficient condition for closed-loop stability under~\eqref{eq:PD with actuation feedback:control law}. Less restrictive conditions might apply. Additionally, Theorem~\ref{theorem:PD with actuation feedback} does not specify the region of convergence. In fact, stronger results will be derived for a subclass of~\eqref{eq:PD with actuation feedback:control law}, which allows the removal of~\eqref{eq:PD with actuation feedback:assumption}.

In principle, an infinite number of unactuated variables is needed to describe the true dynamics of a soft robot. Therefore, it is important to understand how the closed-loop system behaves when the controller uses only a subset of $\thetav_{u}$. To this end, we can conveniently partition $\thetav_{u}$ into modeled and unmodeled unactuated DoF
\begin{equation*}
    \thetav_{u} = \begin{carray}{c}
        \thetav_{u_{m}}\\
        \thetav_{u_{u}}
    \end{carray}.
\end{equation*}
When only $\thetav_{u_{m}}$ is used, the controller becomes
\begin{equation}\label{eq:PD with actuation feedback:control law:unactuated modeled}
    \uv = \bm{K}_P\tthetav_{a} - \bm{K}_D\dthetav_{a} + \uv_{m}(\thetav_{a}, \thetav_{u_{m}},  \zerov, \thetav_{a_{d}}, \thetav_{u_{m_{d}}}, \zerov).
\end{equation} 
Clearly, under~\eqref{eq:PD with actuation feedback:control law:unactuated modeled} condition~\eqref{eq:PD with actuation feedback:constraint} does not generally hold unless $\thetav_{u_{u_{d}}} = \zerov$. However, suppose that $\thetav_{eq}$ is an equilibrium of the closed-loop and that
\begin{equation*}
    \jac{\gv_{u}}{\thetav_{u}}(\thetav_{eq}) + \jac{\kv_{u}}{\thetav_{u}}(\thetav_{eq}) > 0.
\end{equation*}
Then, for a large enough $\bm{K}_P$, $\thetav_{eq}$ is locally asymptotically stable because similar arguments used in Theorem~\ref{theorem:PD with actuation feedback} apply.

From these considerations, two conclusions can be drawn.
\begin{enumerate}[label=(\roman*)]
    \item The stiffer the robot, the smaller $\bm{K}_P$ can be to guarantee stability.
    \item Elastically dominated soft robots are inherently more robust to unmodeled dynamics, as any feedback controller of the form~\eqref{eq:PD with actuation feedback:control law} will not destabilize the system even with unmodeled DoF, provided that $\bm{K}_P$ is sufficiently large.
\end{enumerate}

\subsection{Regulators with semi-global stability properties}\label{sec:semi-global stability}
By adding constraints on the structure of $\uv_{m}$, one can define a class of regulators that exhibits stronger stability traits. The control action still features a PD and a model-based component but it also includes the possibility of adding a saturated integral term to enhance robustness against constant disturbances. We show that the region of convergence can be made arbitrarily large by appropriately tuning the controller gains and hyperparameters, thereby ensuring semi-global stability.

To achieve this, introduce the following class of saturation functions, inspired by~\cite{kelly1998global}.

\begin{definition}\label{def:saturation}
Let $\setS$ be the set of $C^{1}$ monotonically increasing functions $\sv(\xv) = \left( s(x_{1}) \,\, s(x_{2}) \cdots s(x_{h}) \right)^{T} : \mathbb{R}^{h} \rightarrow \mathbb{R}^{h}$ such that, for all $y \in \mathbb{R}$,
\begin{enumerate}[label=(\roman*)]\label{enumerate:sat}
    \item $\small s(y) = \sigma(y)y$ holds for a function $\sigma : \mathbb{R} \rightarrow \mathbb{R}$ defined everywhere,
    \item \vspace{2pt} $\small \abs{s(y)} \geq \begin{cases}
        \alpha_{1} \abs{y};\,\,\, \abs{y} \leq \beta\\
        \,\,\,\, \alpha_{1}\beta \,;\,\,\, \abs{y} > \beta
    \end{cases}$,
    \item \vspace{2pt} $\small \abs{s(y)} \leq \begin{cases}
        \abs{y}; \,\,\, \abs{y} \leq \beta\\
        \,\,\, \beta \,;\,\,\, \abs{y} > \beta
    \end{cases}$,
    \item \vspace{3pt} $\small \alpha_{2} \geq \displaystyle \frac{d s(y)}{d y} \geq 0$,
    \item \vspace{2pt} $\small \abs{ 1 - \displaystyle \frac{d s(y)}{d y}} \leq \alpha_{3} \abs{ s(y) }$.
\end{enumerate}
\end{definition}

\begin{example}  
The saturation function
$
\sv_{t}(\xv) = \left( s_{t}(x_{1}) \cdots s_{t}(x_{h}) \right)$, where $s_{t}(y) = \text{tanh}(y)$, belongs to $\mathcal{S}(1, \text{tanh}(1), 1, 1)$.
Another possible choice for $\sv$ is 
$\small
\sv_{p}(\xv) = \left( s_{p}(x_{1}) \cdots s_{p}(x_{h})\right)$, where $s_{p}(y) =  \frac{y}{\left(1 + |y|^{p}\right)^{\frac{1}{p}}}$ and $p \in \mathbb{Z}_+$, which belongs to $\small \mathcal{S}\left(1, {2^{-\frac{1}{p}}}, 1, 1\right)$.
\end{example}

The set $\mathcal{S}$ satisfies several useful properties. Specifically, for all $\xv \in \mathbb{R}^{h}$ and $\Qm \in \mathbb{R}^{h \times h}$ symmetric, we have the following.

Consider now a regulator of the form
\begin{equation}\label{eq:regulator class}
    \uv = \bm{K}_P\tthetav_{a} - \bm{K}_D\dthetav_{a} + \bm{K}_I\int_{0}^{t}\sv(\tthetav_{a}(\tau))\drm\tau + \uv_{m}(\thetav, \thetav_{a_{d}}),
\end{equation}
where $\bm{K}_P > 0$, $\bm{K}_D \geq 0$, $\bm{K}_I \geq 0$, and $\uv_{m}$ is a model-based term such that
\begin{enumerate}[label=(\roman*)]
    \item $\norm{\uv_{m}} \geq k_{u_{1}}\norm{\thetav} + k_{u_{2}} \norm{\thetav_{a_{d}}}$,
    \item $\norm{ \uv_{m} - \gv_{a}(\thetav) - \kv_{a}(\thetav) } \leq k_{u_{3}}\norm{\tthetav_{a}}$,
    \item $\norm{ \jac{ \uv_{m} }{\thetav} } \leq k_{u_{4}}$,
\end{enumerate}
for some constants $k_{u_{1}}, k_{u_{2}}, k_{u_{3}}$, and $k_{u_{4}} \in \R^{+}$. Possible choices for $\uv_{m}$ are provided in Table~\ref{tab:model based regulator terms}. These controllers require measurements of both $\bm{\theta}_a$ and $\bm{\theta}_u$, necessitating additional sensors or observers to estimate the robot’s full state.

\begin{table*}[t!]
\centering
\caption{Possible model-based terms $\uv_{m}$ for~\eqref{eq:regulator class} guaranteeing stability according to Theorem~\ref{theorem:regulator}.}
\begin{tabular}{ll}
\hline
Name                                          & Expression of $\uv_{m}$        \\ \hline
Cancellation                           & $\gv_{a}(\thetav_{a}, \thetav_{u}) + \kv_{a}(\thetav_{a}, \thetav_{u})$ \\
Gravity cancellation and collocated elasticity compensation & $\gv_{a}(\thetav_{a}, \thetav_{u}) + \kv_{a}(\thetav_{a_{d}}, \thetav_{u})$ \\
Collocated gravity compensation and elasticity cancellation & $\gv_{a}(\thetav_{a_{d}}, \thetav_{u}) + \kv_{a}(\thetav_{a}, \thetav_{u})$ \\
Collocated gravity and elasticity compensation & $\gv_{a}(\thetav_{a_{d}}, \thetav_{u}) + \kv_{a}(\thetav_{a_{d}}, \thetav_{u})$ \\
\hline
\end{tabular}%
\label{tab:model based regulator terms}
\end{table*}

\begin{theorem}\label{theorem:regulator}
    For $\bm{K}_P$ sufficiently large, there exists a set $\Omega_{0}$ of initial conditions such that the trajectories of the closed-loop system~\eqref{eq:dynamics collocated splitted}--\eqref{eq:regulator class} remain bounded and converge to the set
    \begin{equation}\label{eq:regulator:stable set}
        \{ (\thetav, \,\, \dthetav) \vert \thetav_{a} = \thetav_{a_{d}}, \thetav_{u} = \thetav_{u_{d}} ,  \dthetav = \zerov_{n}\},
    \end{equation}
    where $\thetav_{u_{d}}$ is a solution to
    \begin{equation*}
        \gv_{u}(\thetav_{a_{d}}, \thetav_{u}) + \kv_{u}(\thetav_{a_{d}}, \thetav_{u}) = \zerov_{n-m}. 
    \end{equation*}
    Moreover, $\Omega_{0}$ can be made arbitrarily large by increasing $\bm{K}_P$ and $\beta$.
\end{theorem}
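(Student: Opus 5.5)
The plan is to follow the strict-Lyapunov construction of Kelly for PID-like regulators with saturated integral action, adapted to the collocated form~\eqref{eq:dynamics collocated splitted}. First, rewrite the integral state in error coordinates. Let $\zv$ be defined by $\bm{K}_I\zv = \bm{K}_I\int_{0}^{t}\sv(\tthetav_{a})\drm\tau + \uv_{m}(\thetav_{d},\thetav_{a_{d}}) - \gv_{a}(\thetav_{d}) - \kv_{a}(\thetav_{d})$; equivalently, shift the integrator by its steady-state value. The closed-loop equilibrium then sits at $(\thetav,\dthetav,\zv) = (\thetav_{d},\zerov,\zerov)$, where $\thetav_{u_{d}}$ solves the unactuated statics at $\thetav_{a_{d}}$. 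Properties (ii)--(iii) of $\uv_{m}$ give the key bound on the collocated residual
\begin{equation*}
\uv_{m} - \gv_{a}(\thetav) - \kv_{a}(\thetav) = \mathcal{O}(\norm{\tthetav_{a}}),
\end{equation*}
together with Lipschitz bounds. So the actuated equation behaves like a PID acting on $\tthetav_{a}$ plus a perturbation that is linear in $\tthetav_{a}$. Because of the $\uv_m$ cancellation, the unactuated row is driven only by $\gv_{u}+\kv_{u}$.

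Second, build the candidate
\begin{equation*}
V = \mathcal{H} + \tfrac{1}{2}\tthetav_{a}^{T}\bm{K}_P\tthetav_{a} + \mathcal{U}_{u}(\thetav) - \epsilon\, \sv(\tthetav_{a})^{T}\Mm_{a\cdot}\dthetav + \tfrac{1}{2}(\ldots)\zv\text{-terms}.
\end{equation*}
Here $\mathcal{U}_{u}$ is the potential energy minus its value and linear part at $\thetav_{d}$, which handles the unactuated coordinates. The cross term $\epsilon\,\sv(\tthetav_{a})^{T}\Mm_{a\cdot}\dthetav$ is the standard Kelly term; it lets $\tthetav_{a}$ appear negatively in $\dV$. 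The integral error enters as a quadratic in the combined variable $\bm{K}_P\tthetav_{a}+\bm{K}_I\zv$, or via $\alpha$-weighted cross terms.

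Positive definiteness of $V$ near $\thetav_{d}$ will use three ingredients:
\begin{enumerate}[label=(\roman*)]
\item $\bm{K}_P$ large, through a Schur argument as in Lemma~\ref{lemma:PD:uniqueness of equilibria}, so that the closed-loop potential Hessian is positive definite wherever the unactuated stiffness dominates;
\item the saturation bounds (ii)--(iii) of Definition~\ref{def:saturation}, which make the cross term bounded by $\epsilon\lambda_M\beta\norm{\dthetav}$;
\item Properties~\ref{property:mass matrix}--\ref{property:gravity} and Assumption~\ref{assumption:elastic energy}, which give the quadratic lower and upper bounds.
\end{enumerate}

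Third, compute $\dV$ using Property~\ref{property:coriolis matrix} for skew-symmetry and the bound $\norm{\Cm}\leq\lambda_C\norm{\dthetav}$. I expect a quadratic form in $(\norm{\sv(\tthetav_a)},\norm{\dthetav},\norm{\zv})$. Its coefficients depend on $\lambda_{\min}(\Dm)$, $\bm{K}_D$, $\epsilon$, $\lambda_{\min}(\bm{K}_P)$, $k_{u_3}$, $k_{u_4}$, $\alpha_2$ and $\alpha_3$, plus a cubic term $\epsilon\lambda_C\norm{\sv}\norm{\dthetav}^{2}$. Property (v) of $\mathcal{S}$, the bound $|1-s'|\le\alpha_3|s|$, controls the term arising from $\frac{d}{dt}\sv = \mathrm{diag}(s')\dthetav_{a}$. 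The cubic term is dominated by the damping only on a ball where $\norm{\dthetav}$ is bounded. This restriction is precisely what makes the result semi-global rather than global.

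Choose $\epsilon$ small and then $\bm{K}_P$ large, so that $\dV\le -c(\norm{\sv(\tthetav_a)}^2+\norm{\dthetav}^2)$ on a sublevel set $\Omega_0=\{V\le c_0\}$. Boundedness and invariance of $\Omega_{0}$ follow. LaSalle then gives $\dthetav\to\zerov$ and $\tthetav_a\to\zerov$, and the unactuated row of the dynamics forces $\gv_u+\kv_u=\zerov$ at $\thetav_{a_d}$, i.e.\ $\thetav_u\to\thetav_{u_d}$.

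The main obstacle is the last quantitative step. I must show that the admissible $c_0$ grows with $\bm{K}_P$ and $\beta$, so that $\Omega_0$ can be made arbitrarily large. The two sides of the trade-off are as follows:
\begin{enumerate}[label=(\roman*)]
\item the level $c_0$ scales with $\lambda_{\min}(\bm{K}_P)\beta^{2}$ through the $\tthetav_a$ weight and the saturation level;
\item the velocity radius needed to dominate the cubic Coriolis term scales only with the ratio of damping to $\epsilon\lambda_C\beta$.
\end{enumerate}
My first attempt would be to take $\epsilon\sim 1/\sqrt{\lambda_{\min}(\bm{K}_P)}$, as in Kelly's tuning, and check that the invariant level still diverges. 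The nonconvexity of $\mathcal{U}$ in $\thetav_u$ far from $\thetav_d$ also has to be absorbed in this step, using the bounded $\gv$ and the quadratic elastic lower bound of Assumption~\ref{assumption:elastic energy}.
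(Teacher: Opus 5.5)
\textbf{There is a genuine gap: your Lyapunov candidate has no term that absorbs the power injected by the state-dependent model-based feedback $\uv_{m}(\thetav,\thetav_{a_{d}})$.} Along the closed loop, $\frac{d}{dt}(\mathcal{H}+\mathcal{U})=\dthetav_{a}^{T}\uv-\dthetav^{T}\Dm\dthetav$. The input $\uv$ contains $\uv_{m}$, which depends on $\thetav_{u}$ and is not a gradient field. Take your shifted potential $\mathcal{U}_{u}$ and the $\bm{K}_P$-quadratic, and set aside the $\bm{K}_P$, integral and dissipative contributions. What remains in $\dV$ is the power term
\[
\dthetav_{a}^{T}\big(\uv_{m}(\thetav,\thetav_{a_{d}})-\gv_{a}(\thetav_{d})-\kv_{a}(\thetav_{d})\big).
\]
Property (ii) only controls $\uv_{m}-\gv_{a}(\thetav)-\kv_{a}(\thetav)$. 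The remainder, $\gv_{a}(\thetav)+\kv_{a}(\thetav)-\gv_{a}(\thetav_{d})-\kv_{a}(\thetav_{d})$, is of order $\norm{\thetav-\thetav_{d}}$ and depends on $\thetav_{u}-\thetav_{u_{d}}$. For the collocated compensation in Table~\ref{tab:model based regulator terms}, the whole bracket is a function of $\thetav_{u}$ alone. Since $\dV$ has no negative term in $\thetav_{u}-\thetav_{u_{d}}$, this cross term cannot be dominated by any choice of $\epsilon$ and $\bm{K}_P$.

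Your remark that the actuated row is ``PID plus a perturbation linear in $\tthetav_{a}$'' holds for the equations of motion. It does not transfer to the energy, because the open-loop potential sees $\gv_{a}+\kv_{a}$ at the current $\thetav$.

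The missing idea, which is what the paper uses, is to keep the unshifted $\mathcal{U}_{g}+\mathcal{U}_{e}$ and add the coupling $\tthetav_{a}^{T}\uv_{m}(\thetav,\thetav_{a_{d}})$, together with $\tthetav_{a}^{T}\bm{K}_I\zv$ and $\frac{1}{2}\zv^{T}\bm{K}_I\zv$. The derivative of the coupling, $-\dthetav_{a}^{T}\uv_{m}+\tthetav_{a}^{T}\jac{\uv_{m}}{\thetav}\dthetav$, cancels the injected power. It leaves a cross term bounded by $k_{u_{4}}\norm{\tthetav_{a}}\norm{\dthetav}$ via property (iii), which the saturated term $-\sv^{T}(\tthetav_{a})\Mm_{a}\dthetav$ lets you dominate. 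Lower-boundedness and radial unboundedness of $V$ then come from $\mathcal{U}_{e}\geq\lambda_{u_{e}}\norm{\thetav}^{2}$, $\mathcal{U}_{g}\geq 0$, and the linear growth of $\norm{\uv_{m}}$ in property (i), used as an upper bound. No Schur or convexity argument is involved. Theorem~\ref{theorem:regulator} does not assume elastic dominance, so your ingredient (i) is unavailable, and positive definiteness near $\thetav_{d}$ is not needed for LaSalle anyway.

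\textbf{Your account of where semi-globality comes from is also wrong.} The Coriolis contribution of the cross term is bounded by $\epsilon\lambda_{C}\norm{\sv(\tthetav_{a})}\norm{\dthetav}^{2}$, and $\norm{\sv}$ is bounded everywhere by a constant proportional to $\beta$. So this term is globally quadratic in $\dthetav$. It is dominated by the damping for all velocities once $\epsilon\,(\lambda_{C}\sup\norm{\sv}+\alpha_{2}\lambda_{M})<\lambda_{\min}(\tilde{\Dm})$, where $\tilde{\Dm}$ is the damping matrix with $\bm{K}_D$ added. This is precisely what the saturation is for, and it is the paper's condition on $\nu_{1}=1/\epsilon$. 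No velocity ball is needed, so the trade-off you single out as the main obstacle does not arise.

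The actual obstruction is the unsaturated cross term $\tthetav_{a}^{T}\jac{\uv_{m}}{\thetav}\dthetav$ produced by the coupling above. For $\norm{\tthetav_{a}}>\beta$, the only negative term in $\tthetav_{a}$, namely $-\sv^{T}(\tthetav_{a})\bm{K}_P\tthetav_{a}$, grows only linearly. It therefore cannot beat a term of size $\norm{\tthetav_{a}}\norm{\dthetav}$ against $-\norm{\dthetav}^{2}$. The paper consequently works on $\norm{\tthetav_{a}}\leq\beta$ and takes $\Omega_{0}=\{V\leq c\}$ with $c=\nu_{1}\big(\tfrac{\beta^{2}}{8}\lambda_{\min}(\bm{K}_P)-k_{u_{2}}\norm{\thetav_{a_{d}}}\beta+\nu_{2}\big)$. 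This $c$ is chosen so that the quadratic lower bound on $V$ forces $\norm{\tthetav_{a}}\leq\beta$ inside $\Omega_{0}$. That makes $\Omega_{0}$ positively invariant and lets it grow with $\bm{K}_P$ and $\beta$. Replacing $\tthetav_{a}^{T}\uv_{m}$ by $\sv^{T}(\tthetav_{a})\uv_{m}$ and assuming $\uv_{m}$ bounded is exactly what upgrades this to the global result of Theorem~\ref{theorem:shape_regulation:regulator 2}.
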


\begin{proof}
The proof of this theorem is provided in Appendix \ref{Sec:Appendix 2}. 
\end{proof}
The previous theorem and its proof allows to draw the following conclusions.

The saturated integral can be removed without affecting theoretical convergence. However, at the cost of requiring larger proportional gains, its inclusion enhances robustness to constant disturbances. In fact, by assuming \(\kv_I > 0\), it is possible to add a disturbance \(\dv\) with \(\frac{\drm \dv}{\drm t} = \zerov_m\) to the actuated dynamics and perform a similar proof by defining the integrator state variables as
\[
    \zv(t) = \int_{0}^{t}\sv(\tthetav_a(\tau))\drm \tau + \kv_I^{-1} \dv.
\]

The region of convergence can be arbitrarily enlarged by increasing \(\kv_P\) and expanding the non-saturation region of the integral, making the class of controllers guarantee semi-global stability~\cite{isidori1999nonlinearII}. When \(\kv_I = \zerov_{m \times m}\), the saturation is instrumental only to show the result but does not appear in the controller, thus reducing the lower bounds on \(\kv_P\). This reflects the well-known fact that an integral action has a tendency to reduce stability. Moreover, in such cases, the region of convergence depends solely on \(\kv_P\), and increasing \(\kv_P\) enlarges the set \(\Omega_0\).

Four possible expressions for \(\uv_m\) are presented in Table~\ref{tab:model based regulator terms} and yield eight controllers, depending on whether the integral action is employed. In all cases, feedback from the unactuated variables is used to eliminate the dependence of \(\norm{\uv_m(\thetav, \ \thetav_{a_d}) - \gv_a(\thetav) - \kv_a(\thetav)}\) on $\thetav_u$. This dependence can be a limiting factor, as obtaining measurements of the unactuated variables is challenging and currently requires exteroceptive sensors. Furthermore, not all DoF can be measured since they are theoretically infinite. Whether controllers relying on feedforward terms in \(\thetav_{u_d}\) yield stronger results than local asymptotic stability (see Section~\ref{sec:regulators with local asymptotic stability}) remains an open question.

\subsection{Regulators with global stability properties}\label{sec:shape_regulation:global controllers}
We finally present control laws guaranteeing global convergence of $\thetav_{a}$ to $\thetav_{a_{d}}$. Enlarging the convergence region comes with the requirement of imposing further restrictions on the controller structure or the class of robots under consideration.

\begin{theorem}\label{theorem:shape_regulation:regulator 2}
    Consider a regulator of the form~\eqref{eq:regulator class}, where the model-based term satisfies the additional condition
    \begin{enumerate}[label=(\roman*)]
        \setcounter{enumi}{3}
        \item $\norm{\uv_{m}} \leq k_{u_{5}}$,
    \end{enumerate}
    for some constant $k_{u_{5}} \in \R^{+}$.
    For all $\thetav_{a_{d}} \in \R^{m}$ and a sufficiently large $\bm{K}_P$, the trajectories of the closed-loop system are bounded and converge asymptotically to the equilibrium state $(\thetav, \,\, \dthetav) = (\thetav_{a_{d}}, \,\, \zerov_{n})$, where $\thetav_{u_{d}}$ is a possible solution to
    \begin{equation}\label{eq:shape_regulation:PD 1 elastically decoupled:unactuated equilibrium}
        \gv_{u}(\thetav_{a_{d}}, \thetav_{u}) + \kv_{u}(\thetav_{a_{d}}, \thetav_{u}) = \zerov_{n-m}.
    \end{equation}
\end{theorem}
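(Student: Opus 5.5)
We follow the Lyapunov route of Theorem~\ref{theorem:regulator}, but use the boundedness condition (iv), $\norm{\uv_{m}} \leq k_{u_{5}}$, to make the energy-like function radially unbounded on the whole state space. This removes the restriction to a set $\Omega_{0}$ of initial conditions.

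\textbf{Closed loop and candidate function.} Introduce the integrator state $\zv = \int_{0}^{t}\sv(\tthetav_{a})\drm\tau$. With the feedforward offset $\uv_{m}(\thetav_{d},\thetav_{a_d}) = \gv_{a}(\thetav_{d}) + \kv_{a}(\thetav_{d})$ from condition (ii), the closed-loop equilibrium has $\tthetav_{a}=\zerov$. Its integrator value $\zv^{*}$ absorbs any residual, which is zero by (ii). We consider the candidate
\begin{equation*}
V = \mathcal{H} + \mathcal{U}(\thetav) - \thetav_{a}^{T}\uv_{m}^{\star} + \frac{1}{2}\tthetav_{a}^{T}\bm{K}_P\tthetav_{a} + \epsilon\, \sv(\tthetav_{a})^{T}\Mm_{a\cdot}\dthetav + \frac{1}{2}(\zv-\zv^{*})^{T}\bm{K}_I(\zv-\zv^{*}) + \text{const}.
\end{equation*}
Here the model-based term is handled through its potential-like part. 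Rather than requiring $\uv_{m}$ to be a gradient, we treat $\uv_{m} - \gv_{a} - \kv_{a}$ as a perturbation bounded by (ii) and (iv). The cross term $\epsilon\,\sv^{T}(\cdot)\dthetav$ is the standard Kelly-type term from~\cite{kelly1998global}, used with the properties of Definition~\ref{def:saturation}. For radial unboundedness we use Assumption~\ref{assumption:elastic energy}, $\mathcal{U}_{e}\geq\lambda_{u_e}\norm{\qv}^2$, together with Property~\ref{property:gravity}, so that $\mathcal{U}_{g}$ is bounded. Because $\uv_{m}$ is now globally bounded by (iv), its work $\thetav_{a}^{T}\uv_{m}$ grows at most linearly. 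The quadratic terms in $\bm{K}_P$ and $\mathcal{U}_e$ therefore dominate everywhere, not just on a ball.

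\textbf{Derivative and gain conditions.} Differentiating with Property~\ref{property:coriolis matrix} (skew symmetry) gives $-\dthetav^{T}\Dm\dthetav - \dthetav_{a}^{T}\bm{K}_D\dthetav_{a}$. To this are added the cross-term contributions:
\begin{itemize}
\item $-\epsilon\,\sv^{T}\bm{K}_P\tthetav_{a}$, which is negative;
\item $\epsilon\,\sv^{T}(\uv_{m}-\gv_{a}-\kv_{a})$, bounded using (ii);
\item terms quadratic in $\dthetav$, bounded via $\lambda_{M}$, $\lambda_{C}$, $\lambda_{D}$ and $\alpha_{2}$, $\alpha_{3}$.
\end{itemize}
Since $\norm{\sv}\leq\beta$ and $\abs{s(y)}\leq\abs{y}$, every indefinite term is bounded by a constant times either $\norm{\sv}\norm{\tthetav_{a}}$ or $\norm{\sv}\norm{\dthetav}^{2}$. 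Choosing $\lambda_{\min}(\bm{K}_P) > k_{u_3} + (\text{constants depending on }\bm{K}_I)$ and $\epsilon$ small gives $\dot V\leq -c_{1}\norm{\dthetav}^{2} - c_{2}\,\sv^{T}\bm{K}_P\tthetav_{a}$ globally, because the bounds use $\beta$ and not the initial level set. Boundedness of trajectories then follows, and LaSalle gives convergence to the largest invariant set with $\dthetav=\zerov$ and $\tthetav_{a}=\zerov$.

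\textbf{Invariant set and main obstacle.} On that invariant set the unactuated equation reduces to $\gv_{u}(\thetav_{a_d},\thetav_{u}) + \kv_{u}(\thetav_{a_d},\thetav_{u}) = \zerov$, which gives~\eqref{eq:shape_regulation:PD 1 elastically decoupled:unactuated equilibrium}. The actuated equation fixes the integrator, so $\thetav_{a}\to\thetav_{a_d}$. We expect the main obstacle to be the global sign-definiteness of $V$ and $\dot V$ with the cross term. The $\epsilon$-term must be dominated by $\mathcal{H}$ and the quadratic $\bm{K}_P$ term uniformly in $\thetav$; this works only because $\Mm$ is bounded and $\sv$ is saturated. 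We would first try $\epsilon$ chosen from $\lambda_{\min}(\Dm)$ and $\lambda_{M}$ alone. If the dependence of $\uv_{m}$ on $\thetav_{u}$ spoils the gradient structure, we would fall back on bounding its contribution through (iii) and (iv) and raising $\bm{K}_P$ accordingly.
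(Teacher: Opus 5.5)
There is a genuine gap, and it is exactly where condition (iv) has to do its work. Your $V$ contains the full energy $\mathcal{H}+\mathcal{U}$, so its derivative produces the input power $\dthetav_a^T\uv$ and hence the term $\dthetav_a^T\uv_m(\thetav)$. The constant term $-\thetav_a^T\uv_m^\star$ removes only $\dthetav_a^T\uv_m^\star$, so $\dV$ still contains $\dthetav_a^T\rb{\uv_m(\thetav)-\uv_m^\star}$. This term is linear in $\dthetav$, and its coefficient does not vanish where your negative terms do. On $\tthetav_a=\zerov$, condition (ii) gives $\uv_m=\gv_a(\thetav)+\kv_a(\thetav)$, so the coefficient equals $\gv_a(\thetav)+\kv_a(\thetav)-\gv_a(\thetav_d)-\kv_a(\thetav_d)$. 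This is nonzero whenever $\thetav_u\neq\thetav_{u_d}$ and the actuated potential forces depend on $\thetav_u$, which already happens through gravity for elastically decoupled robots. Your bound only offers $-c_1\norm{\dthetav}^2-c_2\sv^T\bm{K}_P\tthetav_a$, with nothing that penalizes $\thetav_u$.

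Hence $\dV>0$ at such states with small $\dthetav$, for every choice of $\bm{K}_P$ and $\epsilon$. Your fallback does not help: (iv) bounds the coefficient by $2k_{u_5}$ but does not make it small, and $\bm{K}_P$ only multiplies $\tthetav_a$. If you bound via (ii) instead, you get a term of size $\norm{\tthetav_a}\norm{\dthetav}$. That is precisely what limits Theorem~\ref{theorem:regulator} to semi-global convergence, because for $\norm{\tthetav_a}>\beta$ the term $\sv^T\bm{K}_P\tthetav_a$ grows only linearly.

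The missing idea, used in the paper, is to put the model term into $V$ as $\nu_1\sv^T(\tthetav_a)\uv_m(\thetav)$. Together with the input power, this leaves $\nu_1\dthetav_a^T\rb{\IIm_m-\jac{\sv}{\tthetav_a}}\uv_m+\nu_1\sv^T\jac{\uv_m}{\thetav}\dthetav$. Saturation property (v), $\norm{\IIm_m-\jac{\sv}{\tthetav_a}}\leq k_5\norm{\sv}$, combined with (iv) and (iii), bounds both pieces by $\nu_1(k_5k_{u_5}+k_{u_4})\norm{\sv}\norm{\dthetav}$. That is then absorbed separately in the regions $\norm{\tthetav_a}\leq\beta$ and $\norm{\tthetav_a}>\beta$. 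This step, together with keeping $\sv^T\uv_m$ bounded inside $V$, is where (iv) is really needed. Radial unboundedness with your constant term would not require it at all.

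Independently, your candidate does not handle the integral action. The input power also contributes $\dthetav_a^T\bm{K}_I\zv$, the term $\frac{1}{2}\zv^T\bm{K}_I\zv$ contributes $\zv^T\bm{K}_I\sv(\tthetav_a)$, and the cross term contributes $\pm\epsilon\,\sv^T\bm{K}_I\zv$. With your weights nothing cancels, and these terms are sign-indefinite in the a priori unbounded state $\zv$. So $\dV\leq-c_1\norm{\dthetav}^2-c_2\sv^T\bm{K}_P\tthetav_a$ fails whenever $\bm{K}_I\neq\zerov$.

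The Kelly-type construction needs two ingredients:
\begin{itemize}
\item the coupling $\nu_1\tthetav_a^T\bm{K}_I\zv$ in $V$, which cancels $\nu_1\dthetav_a^T\bm{K}_I\zv$;
\item the same weight on $\frac{1}{2}\zv^T\bm{K}_I\zv$ as on the cross term, so that the $\sv^T\bm{K}_I\zv$ terms cancel.
\end{itemize}
What remains is $\nu_1\sv^T\bm{K}_I\tthetav_a$. This is the source of the conditions $\bm{K}_P>\nu_1\bm{K}_I$, and $\bm{K}_P>2\nu_1\bm{K}_I$ for $V\geq 0$.

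Finally, with $\tthetav_a=\thetav_{a_d}-\thetav_a$ the cross term must enter as $-\sv^T(\tthetav_a)\Mm_{a}\dthetav$. With your $+\epsilon$ it generates $+\epsilon\,\sv^T\bm{K}_P\tthetav_a$, not the negative term you list.
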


\begin{proof}
The proof of this theorem is provided in Appendix \ref{Sec:Appendix 3}.    
\end{proof}

Similar considerations as those drawn for the controller in Section~\ref{sec:semi-global stability} apply. The remainder of this section presents global stability results for elastically decoupled and dominated robots.

\subsection*{Elastically decoupled soft robots}
Theorem~\ref{theorem:shape_regulation:regulator 2} can be applied to regulate the shape of elastically decoupled robots.

\begin{corollary}\label{theorem:shape_regulation:PD 1 elastically decoupled}
    For an elastically decoupled soft robot (Class~\ref{class:elastically decoupled}), a controller~\eqref{eq:regulator class} with model-based term
    \begin{equation}\label{eq:shape_regulation:PD 1 elastically decoupled}
        \uv_{m} = \gv_{a}(\thetav) + \kv(\thetav_{a_{d}}),
    \end{equation}
    or
    \begin{equation}\label{eq:shape_regulation:PD 2 elastically decoupled}
        \uv_{m} = \gv_{a}(\thetav_{a_{d}}, \thetav_{u}) + \kv(\thetav_{a_{d}}),
    \end{equation}
    guarantees that, for all $\thetav_{a_{d}} \in \R^{m}$ and a sufficiently large $\bm{K}_P$, the trajectories of the closed-loop system are bounded and converge to an equilibrium configuration where $\thetav_{a} = \thetav_{a_{d}}$.
\end{corollary}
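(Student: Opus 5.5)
The plan is to deduce the corollary from Theorem~\ref{theorem:shape_regulation:regulator 2}. That is, we check that each model-based term \eqref{eq:shape_regulation:PD 1 elastically decoupled} and \eqref{eq:shape_regulation:PD 2 elastically decoupled} satisfies conditions (i)--(iii) of the class \eqref{eq:regulator class} together with the boundedness condition (iv). Here $\kv(\thetav_{a_{d}})$ is read as the actuated elastic force $\kv_{a}(\thetav_{a_{d}})$, which is well defined because for Class~\ref{class:elastically decoupled} robots $\kv_{a}$ depends only on $\thetav_{a}$.

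\textbf{Boundedness and Jacobian.} For condition (iv), note that $\kv_{a}(\thetav_{a_{d}})$ is a constant vector for fixed $\thetav_{a_{d}}$. Property~\ref{property:gravity} gives $\norm{\gv_{a}}\leq\lambda_{G}$ everywhere, whether evaluated at $\thetav$ or at $(\thetav_{a_{d}},\thetav_{u})$. Hence $\norm{\uv_{m}}\leq \lambda_{G}+\norm{\kv_{a}(\thetav_{a_{d}})}=:k_{u_{5}}$. For condition (iii), $\jac{\uv_{m}}{\thetav}$ is either $\jac{\gv_{a}}{\thetav}$ or the partial Jacobian of $\gv_{a}$ in $\thetav_{u}$, padded with zeros. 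In both cases it is bounded by $\lambda_{\partial G}$, so $k_{u_{4}}=\lambda_{\partial G}$. Condition (i), as written, is a lower-bound growth condition. I would treat it as it enters the proof of Theorem~\ref{theorem:regulator}, namely as a linear-growth bound. That bound is trivially implied by (iv), so no extra work is needed there.

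\textbf{Key step: condition (ii).} Elastic decoupling is essential here. Since $\kv_{a}(\thetav)=\kv_{a}(\thetav_{a})$, for \eqref{eq:shape_regulation:PD 1 elastically decoupled} we get
\[
\uv_{m}-\gv_{a}(\thetav)-\kv_{a}(\thetav)=\kv_{a}(\thetav_{a_{d}})-\kv_{a}(\thetav_{a}).
\]
By the Lipschitz property of $\kv$ in Assumption~\ref{assumption:elastic energy}, with constant $\lambda_{\partial K}$, its norm is bounded by $\lambda_{\partial K}\norm{\tthetav_{a}}$. For \eqref{eq:shape_regulation:PD 2 elastically decoupled} there is an additional term $\gv_{a}(\thetav_{a_{d}},\thetav_{u})-\gv_{a}(\thetav_{a},\thetav_{u})$. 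Lipschitz continuity of $\gv$ (Property~\ref{property:gravity}) bounds it by $\lambda_{\partial G}\norm{\tthetav_{a}}$, so $k_{u_{3}}=\lambda_{\partial K}+\lambda_{\partial G}$. Without decoupling, the residual $\kv_{a}(\thetav_{a_{d}},\cdot)-\kv_{a}(\thetav_{a},\thetav_{u})$ would in general involve $\thetav_{u}$ through a compensation term of unbounded size. This is why the class restriction is needed: it makes the elastic compensation both constant (so (iv) holds) and exact up to a $\tthetav_{a}$-Lipschitz error (so (ii) holds).

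\textbf{Conclusion and main obstacle.} With (i)--(iv) verified, Theorem~\ref{theorem:shape_regulation:regulator 2} gives that, for every $\thetav_{a_{d}}$ and sufficiently large $\bm{K}_P$, trajectories are bounded and converge to an equilibrium with $\thetav_{a}=\thetav_{a_{d}}$ and $\dthetav=\zerov_{n}$. The unactuated part solves \eqref{eq:shape_regulation:PD 1 elastically decoupled:unactuated equilibrium}, which here reads $\gv_{u}(\thetav_{a_{d}},\thetav_{u})+\kv_{u}(\thetav_{u})=\zerov$. I expect the main obstacle to be the mismatch between condition (i), stated as a lower bound, and a bounded $\uv_{m}$. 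If the theorem's proof truly uses a lower bound, I would instead go back to the Lyapunov argument of Appendix~\ref{Sec:Appendix 3} and rely only on (ii)--(iv). That is where the ordering of the $\bm{K}_P$ lower bound against $k_{u_{3}}$ and $k_{u_{5}}$ would have to be checked.
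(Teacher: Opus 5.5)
Your proposal is correct and takes the same route as the paper: deduce the corollary from Theorem~\ref{theorem:shape_regulation:regulator 2} by checking its hypotheses. You are more thorough than the paper, which verifies only the boundedness condition (iv), whereas you also verify (ii) and (iii) through elastic decoupling and the Lipschitz properties of $\gv$ and $\kv$ (exactly where Class~\ref{class:elastically decoupled} is needed), and you rightly read (i) as an upper bound. One small caveat: a bounded $\uv_{m}$ need not satisfy the homogeneous bound $\norm{\uv_{m}} \leq k_{u_{1}}\norm{\thetav} + k_{u_{2}}\norm{\thetav_{a_{d}}}$ at $\thetav = \zerov$, $\thetav_{a_{d}} = \zerov$ when $\gv_{a}(\zerov) \neq \zerov$, so (i) is not ``trivially implied'' by (iv); your fallback of using $\sv^{T}(\tthetav_{a})\uv_{m} \geq -k_{3}k_{u_{5}}$ in $V_{3}$ and absorbing this into $\nu_{2}$ is the clean way to close that step.
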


\begin{proof}
    The model-based term $\uv_{m}$, as defined in~\eqref{eq:shape_regulation:PD 1 elastically decoupled} and~\eqref{eq:shape_regulation:PD 2 elastically decoupled}, is bounded in norm. This is due to the fact that the compensation for the elastic forces, $\kv_{a}(\thetav_{a_{d}})$, is constant, and $\gv$ remains bounded from Property~\ref{property:gravity}. Thus, the conditions of Theorem~\ref{theorem:shape_regulation:regulator 2} are satisfied.
\end{proof}

\begin{figure}[t]
    \centering
    \includegraphics[width=0.9\columnwidth]{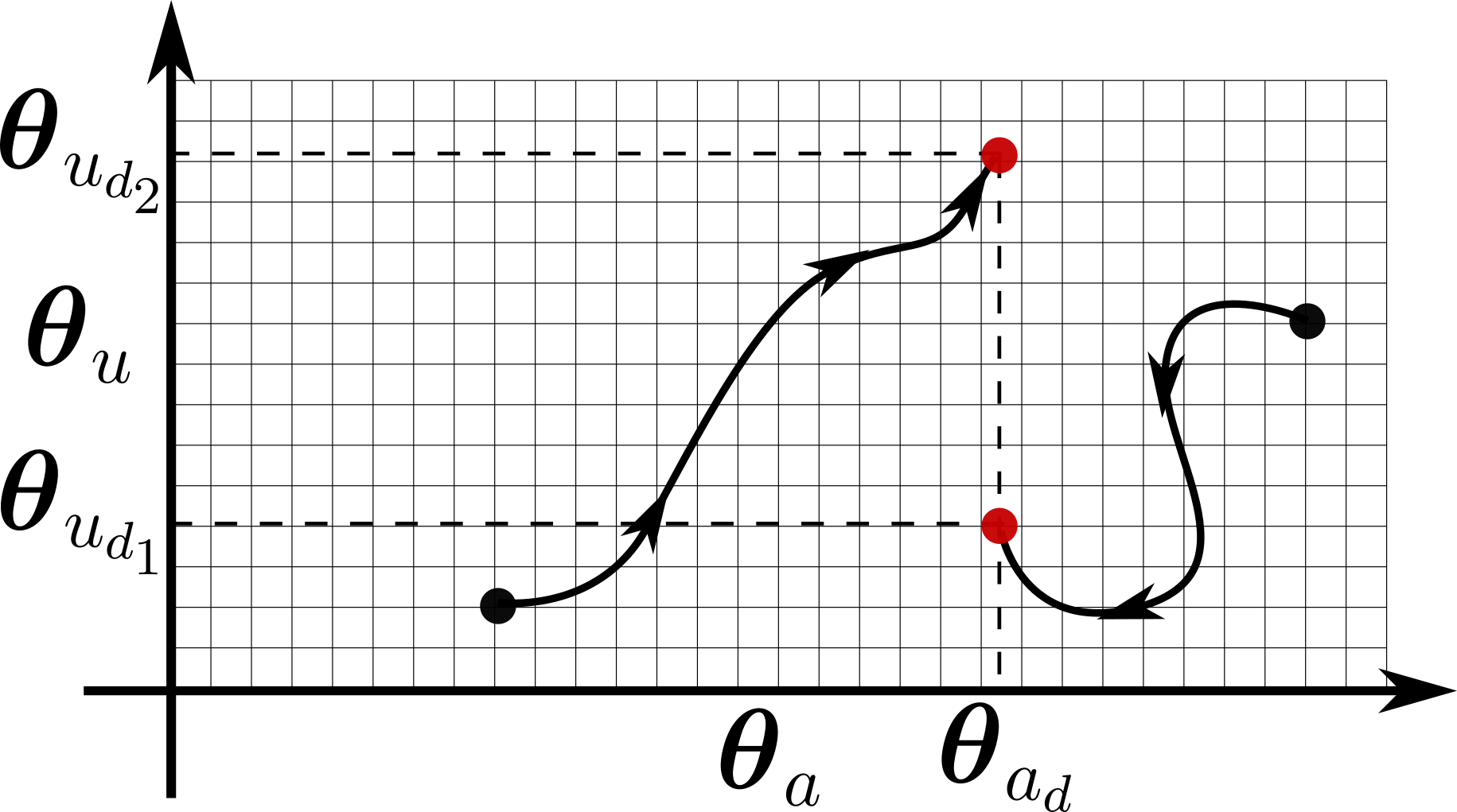}
    \caption{Possible unactuated equilibria $\thetav_{u_{d_{1}}}$ and $\thetav_{u_{d_{2}}}$ compatible with the desired $\thetav_{a_{d}}$ value of the {actuation coordinates}.}
    \label{shape_regulation:non uniqueness equilibria}
\end{figure}

\begin{table*}[b]
\centering
\renewcommand{\arraystretch}{1.35}
\begin{tabular}{
|>{\centering\arraybackslash}p{2.0cm}
|>{\centering\arraybackslash}p{2.0cm}
|>{\centering\arraybackslash}p{1.5cm}
|>{\centering\arraybackslash}p{2.0cm}
|>{\centering\arraybackslash}p{2.4cm}
|>{\centering\arraybackslash}p{2.7cm}
|>{\centering\arraybackslash}p{2.0cm}|
}
\hline
\textbf{FAMILY} &
\textbf{MEASURES} &
\textbf{MODEL} &
\textbf{INTEGRAL} &
\textbf{CLASS} &
\textbf{GUARANTEE} &
\textbf{THEOREM} \\
\hline
PD &
$\bm{\theta}_a$ &
none &
--- &
--- &
bounded / practical &
--- \\
\hline
PD + feedforward &
$\bm{\theta}_a$ &
$\bm{F}_{c_d}$ &
--- &
under class &
local / global$^{*}$ &
T1 \\
\hline
PD / PID + cancellation &
full $\bm{\theta}$ &
full potential forces &
optional &
under class &
semi-global$^{*}$ &
T2--3 \\
\hline

P-satI-D &
$\bm{\theta}_a$ &
none &
\begin{tabular}[c]{@{}c@{}}
saturated\\
integral
\end{tabular} &
\begin{tabular}[c]{@{}c@{}}
elastically\\
dominated
\end{tabular} &
global$^{*}$ &
T4 \\
\hline
\end{tabular}
\caption{Summary of controller families and their guarantees. Select the row that matches the available measurements and model knowledge, then read across to identify the applicable controller structure and its corresponding performance guarantee. $^\ast$ Guarantee holds under the stated class assumption; (Sec. \ref{sec:shape regulators}). }
\label{tab:controller-families}
\end{table*}

The control law obtained with~\eqref{eq:shape_regulation:PD 2 elastically decoupled}, compared to the one with~\eqref{eq:shape_regulation:PD 1 elastically decoupled}, relies on feedback from the unactuated variables only. We can expect that $\uv_{m}$ should depend on feedback from $\thetav_{u}$ to ensure a global result, because, in general, the equilibrium equations for the unactuated variables might not have a unique solution for a given $\thetav_{a_{d}}$. If two unactuated equilibria are associated with $\thetav_{a_{d}}$, there are two possible closed-loop equilibria compatible with $\thetav_{a_{d}}$. This situation is illustrated graphically in Fig.~\ref{shape_regulation:non uniqueness equilibria}. Therefore, to guarantee convergence independently of $\thetav_{u}$, one has to cancel the effect of $\thetav_{u}$ through feedback.

\subsection*{Elastically dominated soft robots}
Here, we present two simpler controllers for elastically dominated soft robots that allow solving Problem~\ref{problem:control problem} globally, thereby demonstrating that this class of robots is easier to control. Remarkably, both controllers rely on feedback from $\thetav_{a}$ only.

\begin{lemma}
    Consider the closed-loop system~\eqref{eq:dynamics collocated splitted}--\eqref{eq:PD with feedforward}. If the robot is elastically dominated (Class~\ref{class:elastically dominated}) and $\bm{K}_P$ is sufficiently large, then $\thetav_{d}$ is globally asymptotically stable.
\end{lemma}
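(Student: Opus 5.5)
Combining Lemma~\ref{lemma:PD with feedfoward stability} with the uniqueness argument of Lemma~\ref{lemma:PD:uniqueness of equilibria}, the proof takes three steps.

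\textbf{Step 1 (convergence to an equilibrium).} Lemma~\ref{lemma:PD with feedfoward stability} gives this for every $\bm{K}_P > 0$. Using the Lyapunov-like function
\begin{equation*}
V = \mathcal{H} + \frac{1}{2}\tthetav_{a}^{T}\bm{K}_P\tthetav_{a} + \tthetav_{a}^{T}\rb{ \gv_{a}(\thetav_{d}) + \kv_{a}(\thetav_{d}) } + \nu,
\end{equation*}
all closed-loop trajectories of~\eqref{eq:dynamics collocated splitted}--\eqref{eq:PD with feedforward} are bounded. By LaSalle, they converge to the set of pairs $(\thetav_{eq}, \zerov_{n})$ with $\thetav_{eq}$ solving~\eqref{eq:PD with feedforward:equilibrium}. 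Global asymptotic stability of $\thetav_{d}$ therefore follows once we show that $\thetav_{d}$ is the \emph{only} solution of~\eqref{eq:PD with feedforward:equilibrium}. Local stability is already given by Lemma~\ref{lemma:PD with feedfoward stability}, and global attractivity would then follow because the limit set is contained in the equilibrium set.

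\textbf{Step 2 (uniqueness via a convex closed-loop potential).} Define
\begin{equation*}
\mathcal{U}_{cl}(\thetav) = \mathcal{U}(\thetav) + \frac{1}{2}\tthetav_{a}^{T}\bm{K}_P\tthetav_{a} + \tthetav_{a}^{T}\rb{\gv_{a}(\thetav_{d}) + \kv_{a}(\thetav_{d})}.
\end{equation*}
Its critical points are exactly the solutions of~\eqref{eq:PD with feedforward:equilibrium}, and $\thetav_{d}$ is one of them. The feedforward term is linear in $\thetav_{a}$, so $\hess{\mathcal{U}_{cl}}{\thetav}$ coincides with the Hessian in~\eqref{eq:PD with feedforward hessian}. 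We then repeat the Schur-complement argument of Lemma~\ref{lemma:PD:uniqueness of equilibria}:
\begin{itemize}
\item Condition~\eqref{eq:PD with feedforward:schur 1} holds for every $\thetav$ by the elastically dominated assumption (Class~\ref{class:elastically dominated}).
\item Condition~\eqref{eq:PD with feedforward:schur 2} can be enforced uniformly in $\thetav$ by the conservative bound on $\lambda_{\min}(\bm{K}_P)$ stated in that proof.
\end{itemize}
This gives $\hess{\mathcal{U}_{cl}}{\thetav} > 0$ on all of $\R^{n}$. A strictly convex $C^{1}$ function has at most one critical point, since for two critical points $\thetav_{1}\neq\thetav_{2}$ the scalar function $t\mapsto (\thetav_{2}-\thetav_{1})^{T}\grad{\mathcal{U}_{cl}}{\thetav}(\thetav_{1}+t(\thetav_{2}-\thetav_{1}))$ would be strictly increasing yet vanish at both ends. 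Hence $\thetav_{eq} = \thetav_{d}$.

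\textbf{Step 3 (main obstacle: uniform bounds).} The delicate point is that the conservative bound on $\bm{K}_P$ must be finite, so it has to hold uniformly over $\R^{n}$. This requires:
\begin{itemize}
\item $\lambda_{\min}(\jac{\gv_{u}}{\thetav_{u}} + \jac{\kv_{u}}{\thetav_{u}})$ to be bounded away from zero. Pointwise positivity in Class~\ref{class:elastically dominated} does not give this by itself, so I would read the class condition as holding uniformly, i.e.\ with a positive lower bound on the eigenvalue.
\item The off-diagonal blocks and $\jac{\kv_{a}}{\thetav_{a}}$, $\jac{\gv_{a}}{\thetav_{a}}$ to be bounded. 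This follows from Property~\ref{property:gravity}, Assumption~\ref{assumption:elastic energy}, and the fact that these properties carry over to the collocated coordinates.
\end{itemize}
With these uniform constants, a single finite $\bm{K}_P$ satisfies~\eqref{eq:PD with feedforward:schur 2} everywhere. Steps 1 and 2 together then give global asymptotic stability of $\thetav_{d}$.
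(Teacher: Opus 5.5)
Your proposal is correct and follows the paper's proof, which simply combines Lemma~\ref{lemma:PD with feedfoward stability} (bounded trajectories converging to the equilibrium set) with the convexity/Schur-complement uniqueness argument of Lemma~\ref{lemma:PD:uniqueness of equilibria}. You also make explicit two points the paper uses only implicitly: the linear feedforward term leaves the Hessian~\eqref{eq:PD with feedforward hessian} unchanged, and the elastically dominated condition must hold with a uniform eigenvalue lower bound for a finite $\bm{K}_P$ to suffice.

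One small correction, inherited from the paper: the Lyapunov function in Step~1 must contain the potential energy, i.e.\ $V = \mathcal{H} + \mathcal{U}_{cl} + \nu$, since otherwise $\dV$ is not the pure dissipation $-\dthetav^{T}\Dm\dthetav - \dthetav_{a}^{T}\bm{K}_D\dthetav_{a}$. With this $V$, the strict convexity from Step~2 makes $(\thetav_{d}, \zerov_{n})$ its unique strict minimizer, which gives Lyapunov stability directly.
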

\begin{proof}
    The proof follows from Lemma~\ref{lemma:PD:uniqueness of equilibria} and Lemma~\ref{lemma:PD with feedfoward stability}.
\end{proof}

The {actuation coordinates} can also be controlled starting from an arbitrary initial condition without relying on system knowledge through a P-satI-D
\begin{equation}\small\label{eq:dominant_elasticity:PID}
    \uv = \bm{K}_P\tthetav_{a} - \bm{K}_D\dthetav_{a} + \int_{0}^{t}\sv(\tthetav_{a}(\tau))\drm \tau.
\end{equation}

This controller only requires measuring $\bm{\theta}_a$, making it easy to implement without additional sensors or observers to estimate $\bm{\theta}_u$.

\begin{theorem}\label{theorem:shape_regulation:elastically dominated:PID}
    Suppose the robot is elastically dominated (Class~\ref{class:elastically dominated}). If $\bm{K}_P$ is large enough, the trajectories of the closed-loop system \eqref{eq:dynamics collocated splitted}--\eqref{eq:dominant_elasticity:PID} are bounded and converge asymptotically to $(\thetav, \dthetav) = (\thetav_{d}, \zerov_{n})$.
\end{theorem}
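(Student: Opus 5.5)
We adapt the Kelly-type P-satI-D Lyapunov argument for rigid manipulators to the collocated form, using elastic dominance to eliminate the unactuated coordinates. Introduce the integrator state $\zv = \int_0^t \sv(\tthetav_a)\,\drm\tau$ and shift it to $\bar{\zv} = \zv - \zv_d$, where $\zv_d = \gv_a(\thetav_d) + \kv_a(\thetav_d)$. Here $\thetav_{u_d}$ is the unique solution of $\gv_u(\thetav_{a_d},\thetav_u)+\kv_u(\thetav_{a_d},\thetav_u)=\zerov$; it exists and is unique because Class~\ref{class:elastically dominated} makes $\mathcal{U}$ strictly convex in $\thetav_u$ and coercive by Assumption~\ref{assumption:elastic energy}. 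The closed loop \eqref{eq:dynamics collocated splitted}--\eqref{eq:dominant_elasticity:PID} is then autonomous in $(\thetav,\dthetav,\bar{\zv})$, and $(\thetav_d,\zerov,\zerov)$ is an equilibrium. Any equilibrium must have $\sv(\tthetav_a)=\zerov$, hence $\tthetav_a=\zerov$, and then $\thetav_u=\thetav_{u_d}$ by uniqueness. So $(\thetav_d,\zerov,\zerov)$ is the only equilibrium.

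The candidate Lyapunov function is
\begin{equation*}
V = \mathcal{H} + \mathcal{U}(\thetav) - \mathcal{U}(\thetav_d) - \rb{\thetav - \thetav_d}^{T}\grad{\mathcal{U}}{\thetav}(\thetav_d) + \frac{1}{2}\tthetav_a^{T}\bm{K}_P\tthetav_a + \frac{\epsilon}{2}\bar{\zv}^{T}\bar{\zv} + \text{cross terms}.
\end{equation*}
The cross terms are of the form $-\epsilon\,\sv(\tthetav_a)^{T}\Mm_{a\cdot}\dthetav$ and $\bar{\zv}^{T}\tthetav_a$-type couplings, chosen (following \cite{kelly1998global}) so that $\dV$ picks up a negative term $-\sv(\tthetav_a)^{T}\bm{K}_P\tthetav_a$ and the indefinite integral contribution $\bar{\zv}^{T}\dot{\tthetav}_a$ cancels. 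An equivalent route is to complete the square, $\frac{1}{2}(\bm{K}_P\tthetav_a - \bar{\zv})^{T}\bm{K}_P^{-1}(\cdot)$-style, which absorbs the integral into a shifted proportional error.

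Positive definiteness and radial unboundedness of $V$ are shown next. The potential part equals $\mathcal{U}_{cl}$ minus its linearization at $\thetav_d$. By Lemma~\ref{lemma:PD:uniqueness of equilibria} and the Schur argument \eqref{eq:PD with feedforward:schur 1}--\eqref{eq:PD with feedforward:schur 2}, $\hess{\mathcal{U}_{cl}}{\thetav}>0$ uniformly for $\bm{K}_P$ large, so this part is lower bounded by $c\norm{\thetav-\thetav_d}^2$. Strict uniformity needs the lower bound on $\jac{\kv_u}{\thetav_u}+\jac{\gv_u}{\thetav_u}$ from Assumption~\ref{assumption:elastic energy}. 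The cross terms are bounded using Property~\ref{property:mass matrix} and the saturation bounds $\abs{s(y)}\le\min(\abs{y},\beta)$. Because $\sv$ is bounded, the momentum cross term grows only linearly in $\dthetav$ and is dominated by $\mathcal{H}$ for small $\epsilon$.

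For the derivative, the passivity identity (Property~\ref{property:coriolis matrix}) together with the collocated input structure gives $-\dthetav^{T}\Dm\dthetav - \dthetav_a^{T}\bm{K}_D\dthetav_a$ plus terms generated by the cross terms. Those terms involve $\dthetav^{T}\Cm^{T}\sv$, $\sv^{T}\frac{d\sv}{dy}\dthetav$ contributions, and $\sv(\tthetav_a)^{T}(\gv_a+\kv_a - \zv_d)$ mismatch terms. They are bounded with $\lambda_C$, $\lambda_M$, $\lambda_D$, $\alpha_2$, $\alpha_3$, and the Lipschitz constants $\lambda_{\partial G}$ and $\lambda_{\partial K}$. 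The result is $\dV \le -\xv^{T}\Qm\xv$ with $\xv=(\norm{\sv(\tthetav_a)},\norm{\dthetav})$.

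The \emph{main obstacle} is the potential mismatch term $\sv^{T}(\gv_a(\thetav)+\kv_a(\thetav)-\zv_d)$. It depends on $\thetav_u-\thetav_{u_d}$, which $\tthetav_a$ alone cannot control, and $\bm{K}_P$ cannot dominate it directly. I would first try to handle it inside the potential part of $V$ rather than in $\dV$: write the derivative of the potential part as $\dthetav^{T}(\grad{\mathcal{U}}{\thetav}-\grad{\mathcal{U}}{\thetav}(\thetav_d))$, and place the $\epsilon\,\sv$ cross term only on the actuated momentum, so that the mismatch term in $\dV$ appears with a factor $\epsilon$. Its $\thetav_u$-dependence is then bounded, via the Schur complement, by $\sigma_{\max}(\jac{(\gv_a+\kv_a)}{\thetav_u})\norm{\thetav_u-\bar{\thetav}_u(\thetav_a)}$, where $\bar{\thetav}_u(\thetav_a)$ is the unactuated minimizer. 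If a uniform quadratic bound fails, the fallback is weaker: $\dV\le -\dthetav^{T}\Dm\dthetav + \epsilon(\cdots)$ gives boundedness, and LaSalle's invariance principle then concludes convergence. On the invariant set $\dthetav\equiv\zerov$, the dynamics force $\gv_u+\kv_u=\zerov$ and $\sv(\tthetav_a)=\zerov$, so by the equilibrium uniqueness shown above the trajectories converge to $(\thetav_d,\zerov_n)$.
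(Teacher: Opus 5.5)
There is a genuine gap, and it is the obstacle you flag but do not resolve. Take the weighting you describe, with the saturated cross term acting only on the actuated momentum, $-\epsilon\,\sv^{T}(\tthetav_{a})\Mm_{a}\dthetav$. At $\dthetav=\zerov_{n}$ the derivative reduces to $-\sv^{T}(\tthetav_{a})\rb{\epsilon\bm{K}_P-\IIm_{m}}\tthetav_{a}+\epsilon\,\sv^{T}(\tthetav_{a})\rb{\gv_{a}(\thetav)+\kv_{a}(\thetav)-\gv_{a}(\thetav_{d})-\kv_{a}(\thetav_{d})}$. The $-\IIm_{m}$ is the integral action's positive contribution $\tthetav_{a}^{T}\sv(\tthetav_{a})$, which comes from the $\bar{\zv}^{T}\tthetav_{a}$ coupling.

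Now take $\thetav_{u}=\thetav_{u_{d}}+\eta$ and $\tthetav_{a}=\delta$ parallel to $\jac{(\gv_{a}+\kv_{a})}{\thetav_{u}}(\thetav_{d})\,\eta$, with $\norm{\delta}\ll\norm{\eta}$. This assumes, as is generic, that this Jacobian is nonzero. The mismatch term is then positive and of first order in $\delta$, while the good term is of second order. So $\dV>0$ at points arbitrarily close to the equilibrium, for every $\epsilon>0$ and every $\bm{K}_P$; taking $\epsilon=0$ instead leaves $\tthetav_{a}^{T}\sv(\tthetav_{a})\geq 0$ uncompensated.

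Nothing in your $\dV$ is negative in $\tthetav_{u}$. Consequently:
the bound $\dV\le-\xv^{T}\Qm\xv$ with $\xv=(\norm{\sv(\tthetav_{a})},\norm{\dthetav})$ cannot hold;
the Schur-complement estimate in $\norm{\thetav_{u}-\bar{\thetav}_{u}(\thetav_{a})}$ has nothing to be absorbed by;
and the fallback fails for the same reason, since LaSalle needs $\dV\le 0$ on a compact positively invariant set, and a sign-indefinite $\epsilon(\cdots)$ remainder gives neither boundedness nor invariance.
The telltale symptom is that elastic dominance never enters your $\dV$; you use it only for uniqueness of equilibria and positivity of $V$.

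The missing idea, which is the route the paper takes, is to saturate the \emph{full} error and couple it with the \emph{full} momentum, i.e., to use $-\sv^{T}(\tthetav)\Mm(\thetav)\dthetav$ with $\tthetav=\thetav_{d}-\thetav$. Since $\sv$ acts componentwise, its actuated block is still $\sv(\tthetav_{a})$, so the cancellation with the integrator energy is unchanged. The unactuated rows carry zero input and satisfy $\gv_{u}(\thetav_{d})+\kv_{u}(\thetav_{d})=\zerov_{n-m}$, and they now contribute as well. By the mean value theorem, the whole potential part of $\dV$ becomes $-\sv^{T}(\tthetav)\tilde{\Km}\tthetav$, where $\tilde{\Km}$ is the path-averaged Hessian of $\mathcal{U}$ with $\bm{K}_P-\nu_{1}\bm{K}_I$ added to the actuated block.

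This is where Class~\ref{class:elastically dominated} is actually used. It makes the unactuated block of $\tilde{\Km}$ positive definite, and a large $\bm{K}_P$ makes the Schur complement positive. The mismatch is thereby absorbed into a term that is negative in the complete error, not just in $\tthetav_{a}$. The price is that $\sv^{T}(\tthetav)\Cm^{T}\dthetav$ and the $\dot{\sv}$ term now involve all coordinates. They must be dominated by $\nu_{1}\dthetav^{T}\tilde{\Dm}\dthetav$, which relies on positive damping in the unactuated directions.

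One caution if you follow this route. The paper lower-bounds $\sv^{T}(\tthetav)\tilde{\Km}\tthetav$ by $\lambda_{\min}(\tilde{\Km})\sv^{T}(\tthetav)\tthetav$ via Property~\ref{prop:saturation:3}. For a non-diagonal $\tilde{\Km}$ that inequality is not automatic; it can fail when some components of $\tthetav$ are saturated and others are not. That step therefore needs its own argument.
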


\begin{proof}
The proof of this theorem is provided in Appendix \ref{Sec:Appendix 3}.   
\end{proof}

Table \ref{tab:controller-families} provides a clear overview of the available controllers and their associated guarantees. Users can identify the row that best matches the available measurements and level of model knowledge, and then read across the table to determine the appropriate controller structure and its corresponding performance guarantee. 

\section{Experimental Results}\label{sec:experiments}
This section presents an experimental validation of the majority of the discussed regulators, as well as three additional controllers for which formal stability proofs are not yet available. These additional control laws are noteworthy because they operate without requiring feedback from the unactuated DoF. Table~\ref{tab:regulators} summarizes the controllers tested. In addition to validating stability, to the best of our knowledge, the experiments provide the first systematic performance comparison across different discretizations of the same soft robot.
\begin{table*}[t]
\centering
\caption{Regulators considered in the experiments. For each controller, we report if some form of stability has been proven.}
\begin{tabular}{cc}
\hline
Controller & Provably stable \\ \hline
$\uv_{1} = \bm{K}_P\tthetav_{a} - \bm{K}_D\dthetav_{a}$ & yes \\
$\uv_{2} = \bm{K}_P\tthetav_{a} - \bm{K}_D\dthetav_{a} + \gv_{a}(\thetav_{d}) + \kv_{a}(\thetav_{d})$ & yes \\
$\uv_{3} = \bm{K}_P\tthetav_{a} - \bm{K}_D\dthetav_{a} + \gv_{a}(\thetav) + \kv_{a}(\thetav_{d})$ & yes \\
$\uv_{4} = \bm{K}_P\tthetav_{a} - \bm{K}_D\dthetav_{a} + \gv_{a}(\thetav) + \kv_{a}(\thetav)$ & yes \\
$\uv_{5} = \bm{K}_P\tthetav_{a} - \bm{K}_D\dthetav_{a} + \gv_{a}(\thetav_{a}, \thetav_{u_{d}}) + \kv_{a}(\thetav_{d})$ & no \\
$\uv_{6} = \bm{K}_P\tthetav_{a} - \bm{K}_D\dthetav_{a} + \gv_{a}(\thetav_{a_{d}}, \thetav_{u}) + \kv_{a}(\thetav_{d})$ & yes \\
$\uv_{7} = \displaystyle \bm{K}_P\tthetav_{a} - \bm{K}_D\dthetav_{a} + \bm{K}_I\int_{0}^{t}\tthetav_{a}(\tau)\drm \tau$ & no \\
$\uv_{8} = \displaystyle \bm{K}_P\tthetav_{a} - \bm{K}_D\dthetav_{a} + \bm{K}_I\int_{0}^{t}\sv(\tthetav_{a}(\tau))\drm \tau$ & yes \\
$\uv_{9} = \displaystyle \bm{K}_P\tthetav_{a} - \bm{K}_D\dthetav_{a} + \bm{K}_I\int_{0}^{t}\sv(\tthetav_{a}(\tau))\drm \tau + \gv_{a}(\thetav_{d}) + \kv_{a}(\thetav_{d})$ & yes \\
$\uv_{10} = \displaystyle \bm{K}_P\tthetav_{a} - \bm{K}_D\dthetav_{a} + \bm{K}_I\int_{0}^{t}\sv(\tthetav_{a}(\tau))\drm \tau + \gv_{a}(\thetav) + \kv_{a}(\thetav_{d})$ & yes \\
$\uv_{11} = \displaystyle \bm{K}_P\tthetav_{a} - \bm{K}_D\dthetav_{a} + \bm{K}_I\int_{0}^{t}\sv(\tthetav_{a}(\tau))\drm \tau + \gv_{a}(\thetav) + \kv_{a}(\thetav)$ & yes \\
$\uv_{12} = \displaystyle \bm{K}_P\tthetav_{a} - \bm{K}_D\dthetav_{a} + \bm{K}_I\int_{0}^{t}\sv(\tthetav_{a}(\tau))\drm \tau + \gv_{a}(\thetav_{a}, \thetav_{u_{d}}) + \kv_{a}(\thetav_{d})$ & no \\
$\uv_{13} = \displaystyle \bm{K}_P\tthetav_{a} - \bm{K}_D\dthetav_{a} + \bm{K}_I\int_{0}^{t}\sv(\tthetav_{a}(\tau))\drm \tau + \gv_{a}(\thetav_{a_{d}}, \thetav_{u}) + \kv_{a}(\thetav_{d})$ & yes \\
\hline
\end{tabular}
\label{tab:regulators}
\end{table*}
\subsection{Setup}
\begin{figure}[h]
    \centering
    \includegraphics[width=0.98\columnwidth]{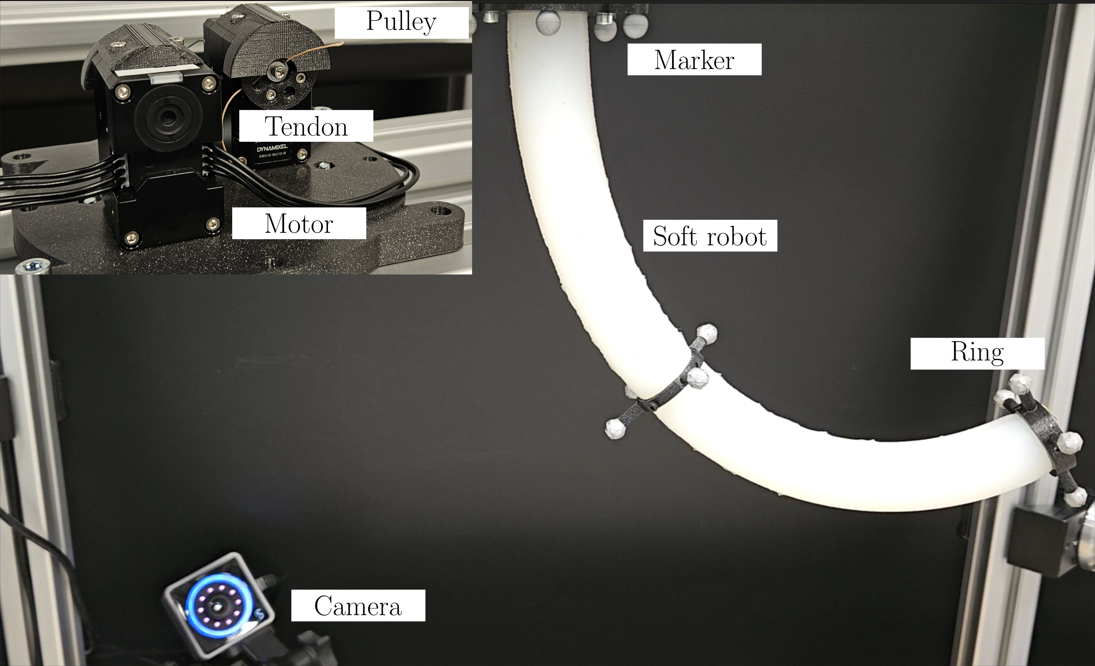}
    \caption{The experimental setup consists of a slender continuum soft arm of Eco-Flex 20 silicone actuated by a pair of tendons tensioned by Dynamixel XM430-W210-T motors. Eight Optitrack Prime\textsuperscript{x} 13 cameras measure the position and orientation of three rings of markers placed along the robot body.}
    \label{experimental setup}
\end{figure}
With reference to Fig.~\ref{experimental setup}, the robot consists of a slender body fabricated from Eco-Flex 20 silicone. The arm is $\SI{0.3}{[\meter]}$ long with a radius of $\SI{0.025}{[\meter]}$, and it is actuated by two tendons spaced $\SI{180}{\degree}$ apart, running from the base to the tip at a constant distance of $\SI{0.02}{[\meter]}$ from the backbone. As a result, the robot motion is restricted to a planar workspace. One end of each tendon is fixed to a Dynamixel XM430-W210-T motor via a pulley of radius $\SI{0.015}{[\meter]}$, while the other end is anchored to the arm tip through a thin 3D-printed disk. Silicone hoses with an external radius of $\SI{0.005}{[\meter]}$ are embedded in the body to allow the tendons to slide with reduced friction. The robot body is cast using a 3D-printed mold.

The robot configuration is estimated using eight Optitrack Prime\textsuperscript{x} 13 cameras, which capture the positions of three rigid rings with reflective markers located at the base, middle, and tip of the robot. A Dell workstation is used to process data from the motion capture system and to compute the control action in Simulink, which runs with a real-time kernel operating at $80~[\si{\hertz}]$. Although this control frequency is suboptimal for a configuration space controller, it is limited by the operating frequencies of the actuation and measurement systems.

\subsection{Kinematic model and state estimation}
\begin{figure*}[t]
    \centering
     \includegraphics[width=1\textwidth,  trim={0cm, 0cm, 0.0cm, 0}, clip]{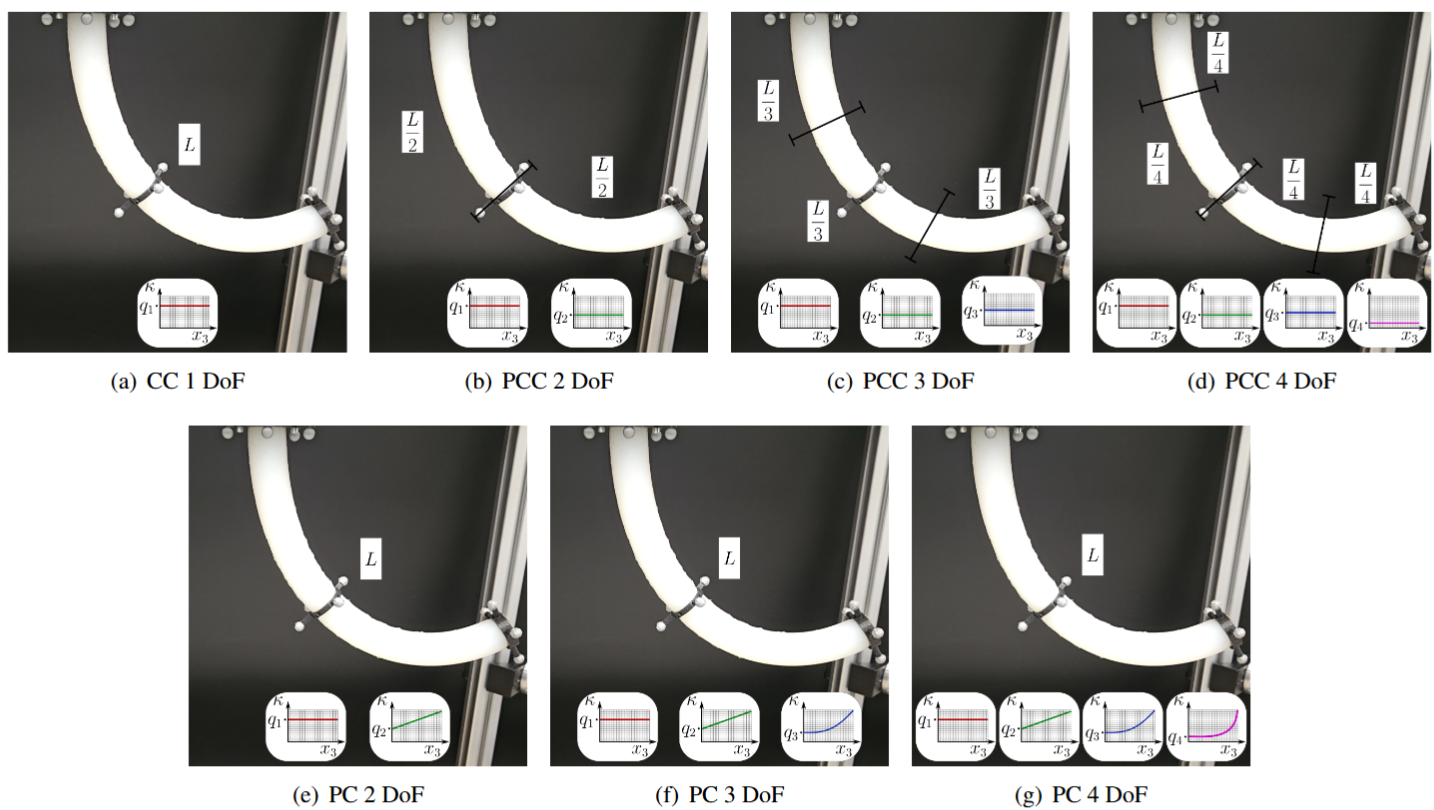}
    \caption{Robot kinematic models considered in the experiments.}
    \label{experiment kinematic models}
\end{figure*}

The discretization of the robot is performed across two conceptual layers. The first layer pertains to the number of bodies into which the arm is discretized, while the second concerns the DoF assigned to each body. The robot is discretized into up to four bodies, each modeled as a Cosserat rod~\cite{boyer2020dynamics} with bending strain. Elongation is neglected to reduces the number of rings required to reconstruct the configuration. This way, the information that would have been used to estimate elongation is instead utilized to measure an additional curvature mode.

We consider seven possible models for curvature, described either under the Constant Curvature (CC) or Polynomial Curvature (PC) framework, as illustrated in Fig.~\ref{experiment kinematic models}. Under the CC assumption, we analyze four models, ranging from one to four bodies. Instead, the PC discretization is done always by considering a single body and by using polynomial basis to augment the number of DoF. Table~\ref{tab:experiment:curvature} presents the functional expressions of the curvature for all models.

\begin{table}[]
\centering
\caption{Curvature models used in the experiments. The variable $s$ denotes the curvilinear abscissa.}
\footnotesize
\begin{tabular}{cc}
\hline
\multicolumn{1}{c}{Model} & Curvature                \\ \hline
\vspace{1pt}
CC 1 DoF                 & $\small\kappa(s, q) = \frac{1}{0.3}\,q_{1}$
\\[1pt]
PCC 2 DoF                & $\small\kappa(s, \qv) = \frac{1}{0.15} \left\{ \begin{array}{l}
     q_{1}; \quad s \in [0, 0.15)\\
     q_{2}; \quad s \in [0.15, 0.3]
\end{array}\right.$
\\[1pt]
PCC 3 DoF                & $\small\kappa(s, \qv) = \frac{1}{0.1} \left\{ \begin{array}{l}
     q_{1}; \quad s \in [0, 0.1)\\
     q_{2}; \quad s \in [0.1, 0.2)\\
     q_{3}; \quad s \in [0.2, 0.3]
\end{array}\right.$
\\[1pt]
PCC 4 DoF                & $\small\kappa(s, \qv) = \frac{1}{0.075} \left\{ \begin{array}{l}
     q_{1}; \quad s \in [0, 0.075)\\
     q_{2}; \quad s \in [0.075, 0.15)\\
     q_{3}; \quad s \in [0.15, 0.225]\\
     q_{4}; \quad s \in [0.225, 0.3]\\
\end{array}\right.$
\\[1pt]
PC 2 DoF                 & $\small\kappa(s, \qv) = \frac{1}{0.3}\,(q_{1} + \frac{s}{0.3} q_{2} )$
\\[1pt]
PC 3 DoF                 & $\small\kappa(s, \qv) = \frac{1}{0.3}\,( q_{1} + \frac{s}{0.3} q_{2} + \rb{\frac{s}{0.3}}^{2} q_{3} )$
\\[1pt]
PC 4 DoF                 & \!\!\!$\small \kappa(s, \qv) = \frac{1}{0.3}\,( q_{1} + \frac{s}{0.3} q_{2} + \rb{\frac{s}{0.3}}^{2} q_{3} + \rb{\frac{s}{0.3}}^{3} q_{4} )$
\\[1pt]
\hline
\end{tabular}
\label{tab:experiment:curvature}
\end{table}

Each tracked ring along the robot body provides the one-dimensional orientation in the plane of motion, and the Cartesian position, which is two-dimensional. This in principle allows to consider a kinematic model with up to six DoF. However, we limit the number of DoF to a maximum of four because, in this way, the inverse kinematics problem becomes over-constrained and because convergence is faster, which is of paramount importance since state estimation is to be done in real-time. The configuration vector $\qv$ is computed by numerically solving the inverse kinematics with a Newton-Raphson method in body coordinates~\cite{lynch2017modern}. Then, $\dqv$ is estimated from $\qv$ using backward finite differences and smoothed to remove noise with a Savitzky–Golay filter.

\subsection{Dynamic model}
We compute the dynamics using the algorithm described in~\cite{pustina2025recursive}, which is also utilized to implement the model-based terms required by the regulators. The mass density of Eco-Flex 20 silicone is $\rho = 1080~[\si{\kilogram \per \cubic \meter}]$. Elastic forces are modeled using Hooke law, with a Young modulus of $3.20[\si{\mega \pascal}]$ and a Poisson ratio of $0.45$, both of which were identified experimentally.. Consequently, the generalized elastic force is linear in $\qv$ and takes the form $\kv(\qv) = \kv \qv$.

The input to the system is modeled as a distributed torque acting on the robot cross-sections. Specifically, let the controlled variable be the torque $u \in \R$ along the arm cross-sections. The generalized actuation force can be derived using the principle of virtual work, balancing the work on the robot with that of the actuation
\begin{equation}\label{experiment:power balance}
    \delta \qv^{T} \tauv = \int_{0}^{0.3} \delta \kappa(s, \qv) u \drm s.
\end{equation}
Recalling the dependence of $\kappa$ on $\qv$ (see Table \ref{tab:experiment:curvature}), we have
\begin{equation*}
    \int_{0}^{0.3} \delta \kappa(s, \qv) u \drm s = \int_{0}^{0.3} \diff{ \kappa(s, \qv) }{\qv} \delta \qv u \drm s.
\end{equation*}
Substituting this expression into~\eqref{experiment:power balance} exploiting that virtual displacements are arbitrary, gives
\begin{equation*}
    \nuv = \underbrace{\int_{0}^{0.3} \left( \diff{ \kappa(s, \qv) }{ \qv } \right)^{T} \drm s}_{\Am(\qv)} u.
\end{equation*}
For both the PCC and PC models, the resulting actuation matrix is independent on $\qv$, implying that the dynamics admits collocated form with actuation coordinate $\theta_{a} = \Am^{T}\qv$. Furthermore, from~\eqref{experiment:power balance}, it can also be verified that $\theta_{a}$ corresponds to the orientation of the robot end-effector, which is the variable directly influenced by the input torque $u$.

The command $u$ produced by the control actions is converted into a tendon tension, which is translated into a desired motor torque and finally into a current--the motor variable that can be directly controlled in the setup. When the controller outputs a negative torque, the motor on the left is used, while a positive torque engages the motor on the right. It is worth observing that the tendon tension could alternatively be modeled directly as the system input. However, this approach would require accounting for the unidirectional nature of tendons, which can only apply pulling forces. The block scheme of the closed-loop system is illustrated in Fig.~\ref{experiments:closed-loop system} in Appendix \ref{sec:block scheme}.

\subsection{Controllers comparison}
We evaluate the controllers in Table~\ref{tab:regulators} with respect to three different variables.
\begin{enumerate}[label=(\roman*)]
    \item Variations in the proportional gain.
    \item Presence or absence of a payload of $0.09~[\si{\kilogram}]$ attached to the robot tip.
    \item Discretization type (see Table~\ref{tab:experiment:curvature}). 
\end{enumerate}

Indicative gain values are obtained through a preliminary identification experiment. See Appendix \ref{sec:identification experiment} for further details on the controller-tuning process. Once the tuning is complete, each controller is evaluated by commanding six step references, always starting from the straight configuration, i.e., $\qv = \zerov_{n}$. Subsequently, a sequence of six consecutive step commands is applied, where each reference starts from the final configuration reached by the previous step rather than from the straight configuration. Each step lasts $5~[\si{\second}]$. The reference values are drawn from uniform distributions over the intervals $[-\pi,\pi]~[\si{\radian}]$ and $\left[-\frac{\pi}{2},\frac{\pi}{2}\right]~[\si{\radian}]$ when the robot operates without and with the payload, respectively.

For regulators comparison, we consider performance metrics related to tracking error and time.

\begin{enumerate}[label=(\roman*)]
    \item Root mean square error (RMSE)
            \begin{equation*}
                \sqrt{\frac{1}{N}\sum_{i=1}^{N} \left( \theta_{a_{i}} - \theta_{a_{d}} \right)^{2}},
            \end{equation*}
            where $N$ is the number of measured samples, $\theta_{a_{i}}$ is the actuation coordinate at the $i$-th sample, and $\theta_{a_{d}}$ is the desired step value.
    \item Absolute value of the steady state error
            \begin{equation*}
                \lim_{t \rightarrow \infty} \abs{ \theta_{a}(t) - \theta_{a_{d}} }.
            \end{equation*}
    \item Steady-state error percentage
            \begin{equation*}
                \lim_{t \rightarrow \infty} \frac{\abs{ \theta_{a}(t) - \theta_{a_{d}} }}{\abs{\theta_{a_{d}}}} \times 100 \%.
            \end{equation*}
    \item Percentage of overshoot, defined as
            \begin{equation*}
                \abs{\max_{\theta_{a}} \frac{\theta_{a}}{\theta_{a_{d}}}} \times 100 \%.
            \end{equation*}
    \item Transient time, measured with respect to the steady state value reached by $\theta_{a}$.
    \item Settling time, measured with respect to the steady state value reached by $\theta_{a}$.
\end{enumerate}

\subsection*{PD regulators} 

The RMSE, steady-state error, corresponding percentage errors, overshoot, settling time, and transient time for the PD controllers were analyzed for all curvature models listed in Table~\ref{tab:experiment:curvature}. For the complete set of results, the reader is referred to Fig.~\ref{PD:comparison:error metrics} and Fig.~\ref{PD:comparison:time metrics} in Appendix \ref{sec:Appendix_Results}. For each controller and gain value, the box plots show the median, lower and upper quartiles, and minimum and maximum values, while the colored marker indicates the mean value.

Several insights can be drawn from these plots. First, the PD regulator ($u_{1}$) is the most sensitive to the proportional gain. Its performance, in terms of the error metrics, is worse than that of all model-based controllers, regardless of the proportional gain. The values of the error metrics for the model-based PDs are typically below the mean performance of the standard PD controller, which is represented by a black dashed line. However, when $k_{P}$ becomes sufficiently large ($k_{P} \geq 0.5~[\si{\newton \meter}]$), the error approaches that of the other regulators. Most model-based controllers exhibit similar performance when $k_{P} \geq 0.3~[\si{\newton \meter}]$, with the error falling below $0.1~[\si{\radian}]$. The PD controller with potential cancellation ($u_{4}$) performs similarly to the standard PD ($u_{1}$) when $k_{P} = 0~[\si{\newton \meter}]$ because the absence of a feedforward term prevents the robot from moving toward the desired configuration. However, as soon as $k_{P} > 0~[\si{\newton \meter}]$, its performance improve significantly. Across all error metrics, the PD with feedforward ($u_{2}$) is the best-performing. As $k_{P}$ increases, the mean and variance of all control laws decrease, making their performance comparable. Another trend is that the type and order of discretization also affect the results, particularly for $u_{1}$. Overall, the PC model with four DoF demonstrates superior performance. All model-based regulators show improvement in performance from the CC model with one DoF to higher-order discretizations.

\begin{figure*}[!b]
    \centering
    \subfigure[{PD $(u_{1})$}]{
        \includegraphics[width=0.3\textwidth, clip]{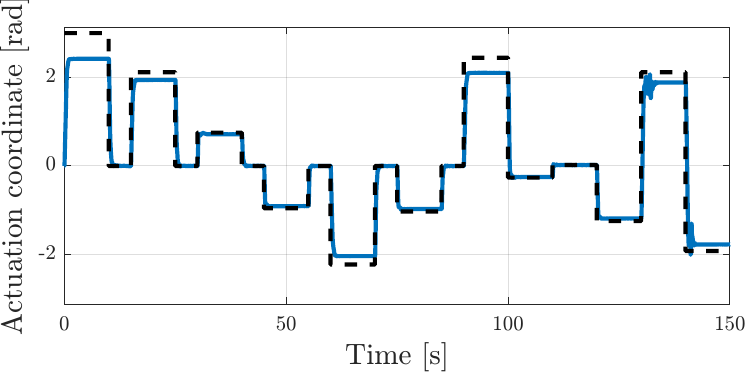}
    }
    \subfigure[{$u_{2}$}]{
        \includegraphics[width=0.3\textwidth, clip]{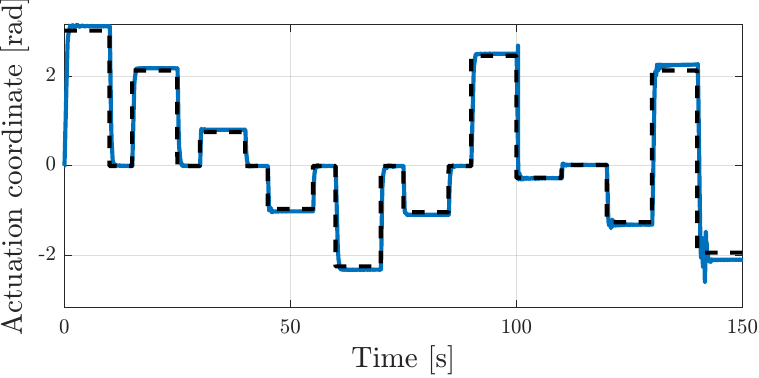}
    }
    \subfigure[{$u_{3}$}]{
        \includegraphics[width=0.3\textwidth, clip]{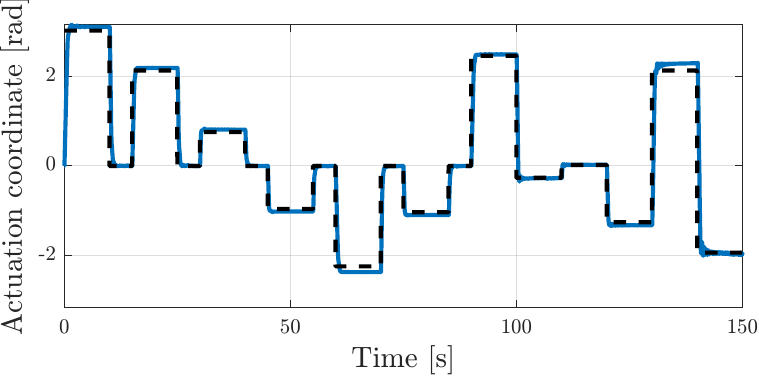}
    }
    \subfigure[{$u_{4}$}]{
        \includegraphics[width=0.3\textwidth, clip]{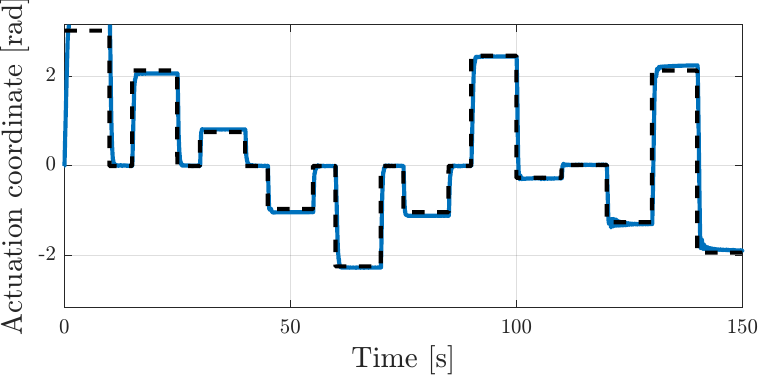}
    }
    \subfigure[{$u_{5}$}]{
        \includegraphics[width=0.3\textwidth, clip]{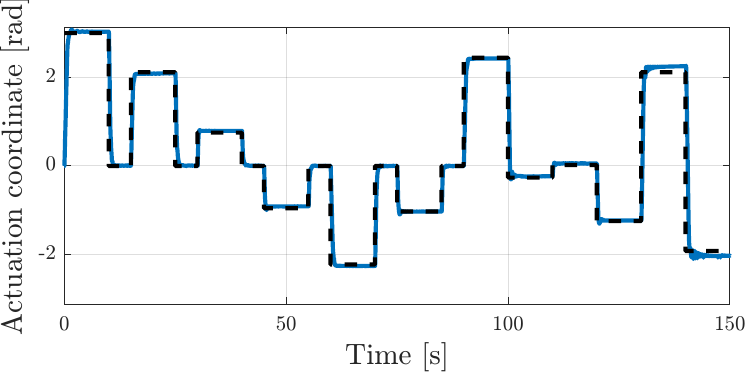}
    }
    \subfigure[{$u_{6}$}]{
        \includegraphics[width=0.3\textwidth, clip]{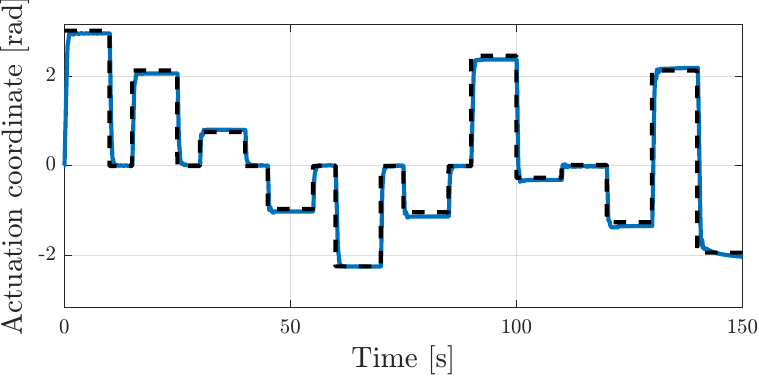}
    }
    \caption{Time evolution of the actuation coordinate and the corresponding reference (black dashed lines) for the PD regulators under the PC discretization with $4$ DoF, a proportional gain $k_{P} = 0.5~[\si{\newton \meter}]$ and no mass at the tip.}
    \label{PD:time evolutions}
\end{figure*}

 The overshoot values are all comparable. However, differences arise in the settling and transient times, particularly when a PC model is employed. The poorest performance is exhibited by the PD with compensation on the unactuated variables ($u_{5}$). One possible explanation for this behavior is that the potential forces are sensitive to the unactuated DoF. In fact, these time metrics are smaller when a fully-actuated approximation is used and increase as the number of DoF increases. When $k_{P}$ is raised, both the settling and transient times lengthen because a larger gain results in a greater control action for the same error magnitude, leading to more oscillatory behavior.

Finally, Fig.~\ref{PD:time evolutions} illustrates the time evolutions of the controllers under the PC discretization with 4 DoF with $k_{P} = 0.5~[\si{\newton \meter}]$ and no mass at the tip. The most notable differences, as expected, are observed between the PD ($u_1$) and the other regulators.

\subsection*{PID regulators}

The same metrics were now analyzed for the PID
controllers. For the complete set of results, the reader is referred to  Figs.~\ref{PID:comparison:error metrics} and \ref{PID:comparison:time metrics} in Appendix \ref{sec:Appendix_Results}.


 It is immediately apparent that the integral term results in closed-loops with similar performance in terms of error metrics, even when the proportional gain is small. The regulator exhibiting worst results is $u_{10}$. Figure ~\ref{PID:comparison:time metrics} illustrates the overshoot, settling time and transient time. In this case, the outcome of the regulators varies significantly. The PID ($u_{7}$) and P-satI-D ($u_{8}$) perform better in terms of overshoot, which is consistent with the fact that neither incorporate a model-based term. Model-based terms, especially when used as feedforward, can lead to larger overshoots. This occurs because, at the start of the regulation interval, the integral accumulates a large error, which is quickly reduced by the model-based term. Regarding settling and transient times, the P-satI-D ($u_{8}$) performs better than the others. The addition of $u_{m}$ does not improve the time metrics. This behavior can be explained by the dominance of the integral over the model-based components. 

Compensating for the unactuated DoF ($u_{12}$) tends to worsen the performance. When the plots of a controller exceed $5~[\si{\second}]$ or are absent, it indicates that the controller is unable to settle within the control interval. In other words, the closed-loop system continues to oscillate around the setpoint. This suggests that compensating for the unactuated DoF may have a negative impact on stability. Furthermore, this effect seems more pronounced for the PC model and systems with a higher number of DoF.

Additionally, we believe that these oscillations are partly due to the high friction in the motors, which was an issue only detected after the experimental platform was built. In fact, it is well known that integral terms can lead to oscillatory behavior when high friction is present in the actuators.

Finally, Fig.~\ref{PsatID:time evolution} shows the time evolution of these controllers under the PC discretization with 4 DoF with $k_{P} = 0.5~[\si{\newton \meter}]$ and no mass at the tip. The behaviors observed in these evolutions are consistent with the previous plots. By zooming in, one can observe the effect of friction, which often causes the output to take on a piecewise constant shape at steady state.

\begin{figure*}[!t]
    \centering
    \subfigure[{PID $(u_{7})$}]{
        \includegraphics[width=0.3\textwidth, clip]{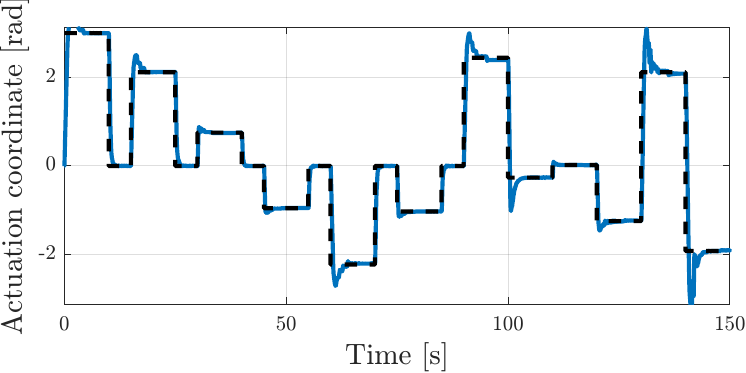}
    }
    \subfigure[{$u_{8}$}]{
        \includegraphics[width=0.3\textwidth, clip]{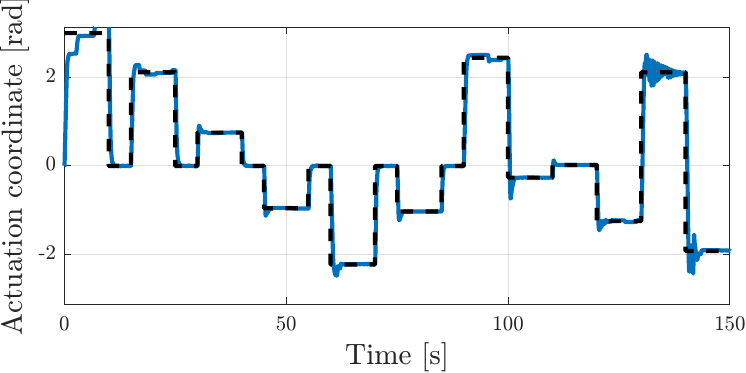}
    }
    \subfigure[{$u_{9}$}]{
        \includegraphics[width=0.3\textwidth, clip]{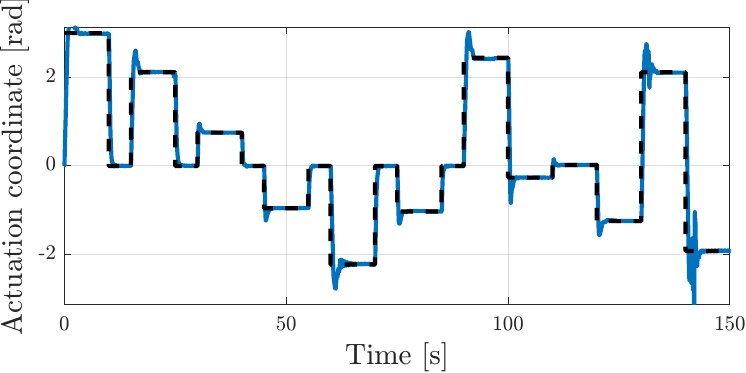}
    }
    \subfigure[{$u_{10}$}]{
        \includegraphics[width=0.3\textwidth, clip]{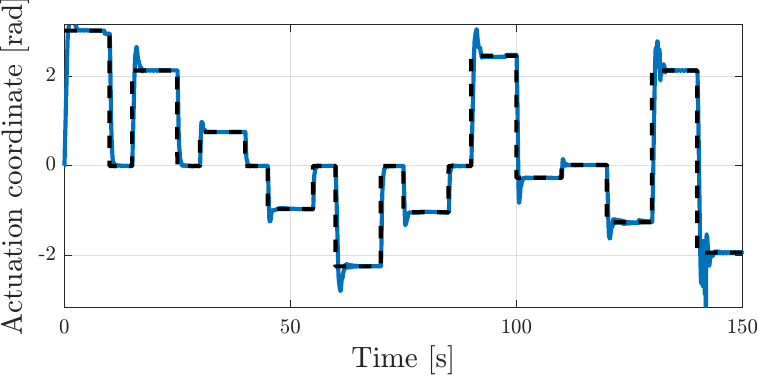}
    }
    \subfigure[{$u_{11}$}]{
        \includegraphics[width=0.3\textwidth, clip]{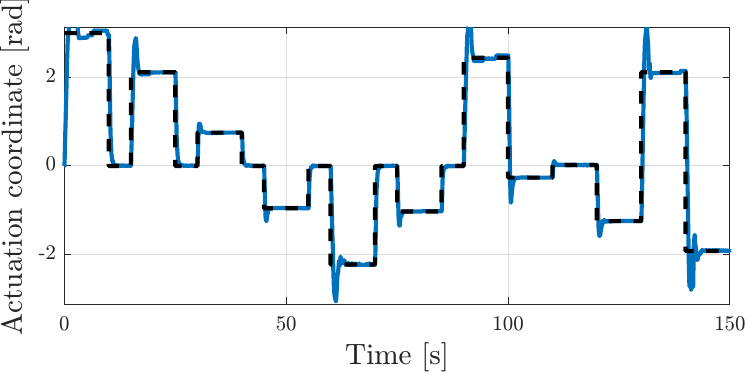}
    }
    \subfigure[{$u_{12}$}]{
        \includegraphics[width=0.3\textwidth, clip]{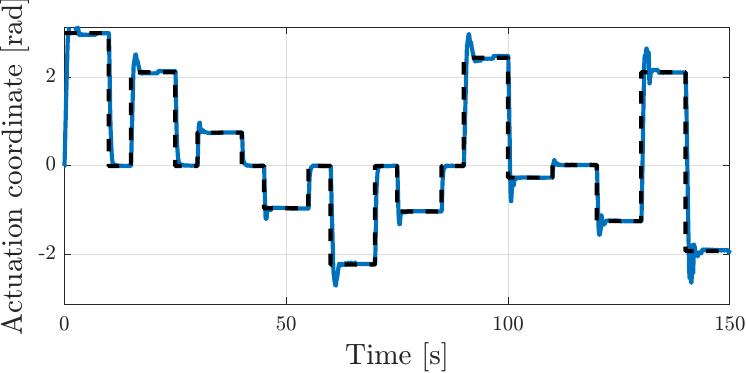}
    }
    \subfigure[{$u_{13}$}]{
        \includegraphics[width=0.3\textwidth, clip]{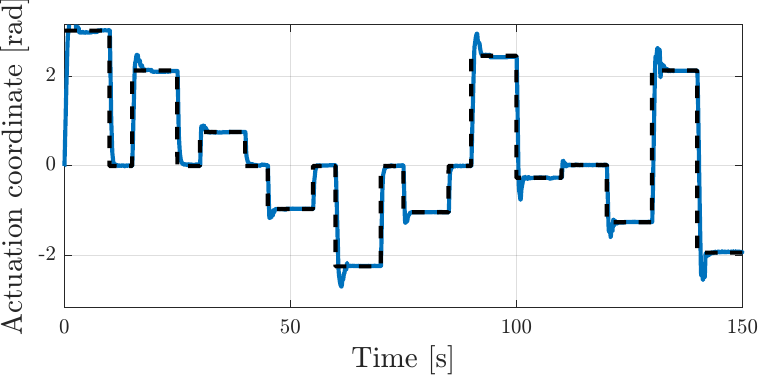}
    }
    \caption{Time evolution of the actuation coordinate and the corresponding reference (black dashed lines) for the PID/P-satI-D regulators under the PC discretization with $4$ DoF, a proportional gain $k_{P} = 0.5~[\si{\newton \meter}]$ and no mass at the tip.}
    \label{PsatID:time evolution}
\end{figure*}

\subsection{PD vs PID}

We now compare the performance of the PD and PID controllers using the previously presented metrics and their mean values. Figure ~\ref{PD:comparison:time metrics_Final} summarizes the controller performance across all discretization schemes by averaging the metric values over all discretizations. More detailed analysis of these results are referred to Figs.~\ref{experiment:overall controllers comparison error} and~\ref{experiment:overall controllers comparison time} in Appendix \ref{sec:Appendix_Results} showing all the mean values across all the controllers. Note that the color map has been rescaled between the PD and PID controllers.

By inspecting Fig.~\ref{PD:comparison:time metrics_Final}, it can be observed that the PD controllers exhibit more consistent behavior in both error- and time-based metrics as $k_{P}$ increases. The error scale differs significantly, with the integral in the PID controllers providing approximately an order of magnitude improvement. This is because it compensates for model mismatches and unmodeled dynamics.

However, when examining overshoot and time metrics, the benefits of not having an integrator become evident. The PDs demonstrate smaller overshoot, and shorter settling and transient times. The PIDs have higher tendency toward instability, as predicted by theory. Instability is more pronounced at larger $k_{P}$ values, despite that increasing proportional gain should attenuate it. This is likely because the increase in $k_{P}$ amplifies measurement noise and leads to actuator saturation--factors that are not accounted for in the analyses.

Figures ~\ref{experiment:overall controllers comparison error payload} and~\ref{experiment:overall controllers comparison time payload} present the performance metrics across all controllers of each type with the presence of the payload. The results are consistent with those obtained without the payload, demonstrating the robustness of the proposed approach.

\begin{figure}[h]
    \centering
    \includegraphics[width=0.98\columnwidth]{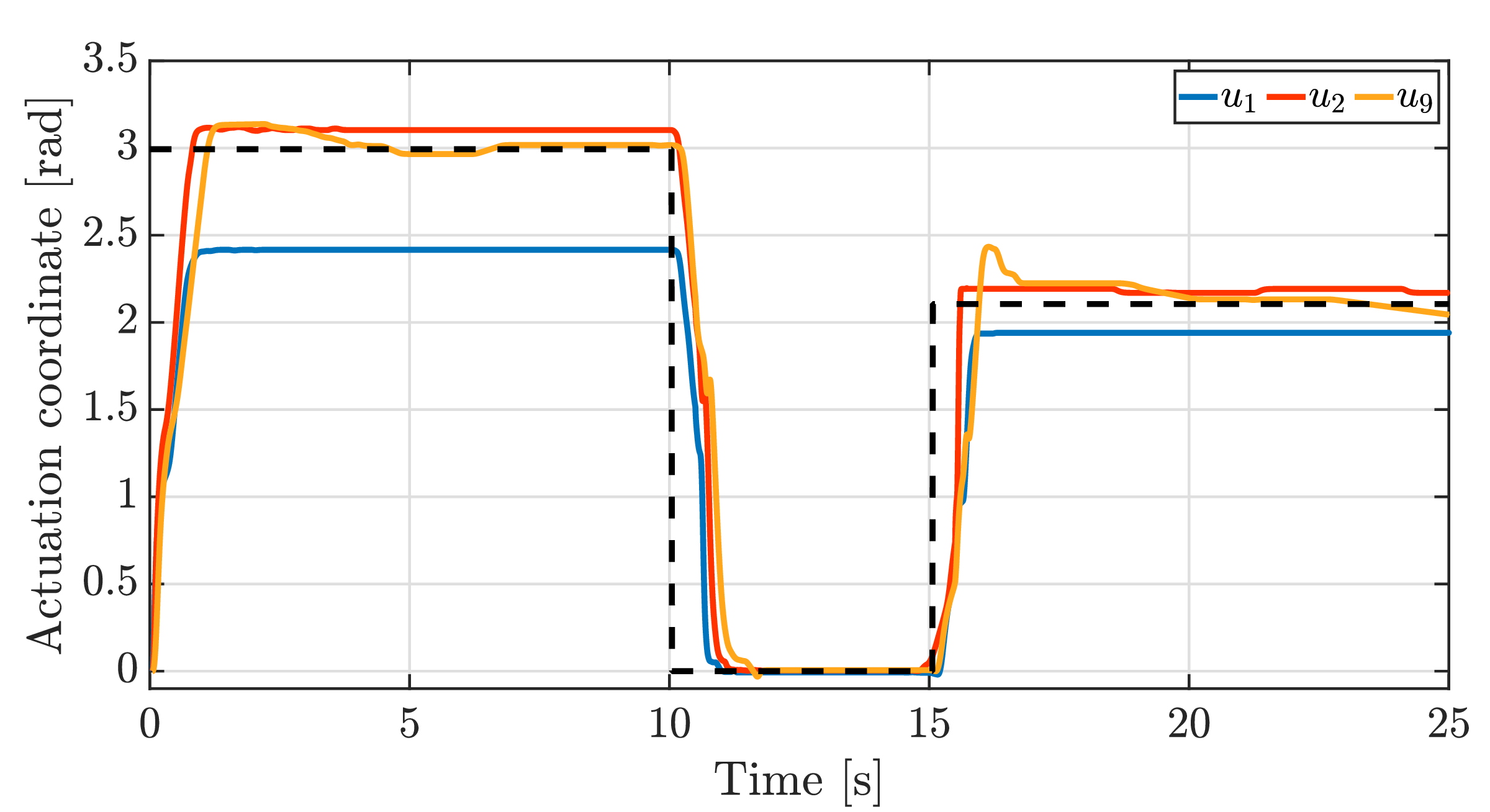}
    \caption{Time evolution of three controllers (PD ($u_1$), PD+ ($u_2$), and P-satI-D+ ($u_9$)) for two  different target shapes using  PC discretization with 4 DoF and a proportional gain $k_{P} = 0.5~[\si{\newton \meter}]$.}
    \label{experimental_pick}
\end{figure}

Finally, Fig. \ref{experimental_pick} compares the performance of three representative controllers described in this work—PD ($u_1$), PD+ ($u_2$), and P-satI-D+ ($u_9$)—for two target shapes, using PC discretization with 4 DoF and a proportional gain of $\bm{K}_P = 0.5~[\si{\newton \meter}]$.  The figure clearly shows  that the PD controller without model-based components does not provide sufficient accuracy. Adding the model-based feedforward term substantially reduces the tracking error. Moreover, combining integral action with the model-compensation term yields a response that is both fast and accurate, while reducing the steady-state error caused by model mismatch and other uncertainties. This improvement comes at the cost of a slightly slower response and a small overshoot, with the settling time increasing from $0.4~[\si{\second}]$ for the PD+ controller to $0.85~[\si{\second}]$ in this particular example.

\subsection{Discretizations comparison}
We move to qualitatively discuss the performance of the discretizations (Table~\ref{tab:experiment:curvature}) for shape representation. To the best of our knowledge, this is the first time that different discretizations have been applied to the same soft robotics platform and their performance compared.
For each model, the following metrics are considered.
\begin{enumerate}[label=(\roman*)]
    \item The average norm of the steady state Cartesian error at the markers, i.e.,
        \begin{equation*}
        \lim_{t \rightarrow \infty}\norm{\pv(t) - \hat{\pv}(\qv(t))},
        \end{equation*}
    where $\pv \in \R^{6}$ is the vector collecting the Cartesian positions of the markers measured by the motion capture system, and $\hat{\pv}(\qv)$ is the one computed through the direct kinematics.
    \item The average RMS Cartesian error at the markers, defined as
        \begin{equation*}
        \sqrt{\frac{1}{N}\sum_{i = 1}^{N} \norm{\pv_{i} - \hat{\pv}(\qv_{i})}^{2}}.
        \end{equation*}
    \item The average absolute value of the steady state angular error at the markers, computed similarly to the Cartesian error, using the vector of bending plane angles $\alphav$ instead of $\pv$.
    \item The average RMS angular error at the markers, computed similarly to the RMS Cartesian error, using the vector of bending plane angles $\alphav$ instead of $\pv$.
    \item The steady state task error.
    \item The RMS task error.
\end{enumerate}

Their values are computed by considering all controllers, masses, and values of $k_{P}$, allowing for an analysis of the discretizations performance independently of the specific controller used.

Figures ~\ref{model comparison:metrics} and~\ref{discretiations:strobo_plots} present the metrics and snapshots of the soft robot, respectively. Among the tested models, the fully actuated CC discretization with a single DoF (CC 1 DoF) exhibits the worst performance. Adding a second DoF significantly improves the model ability to describe the robot configuration.

Discretizations with 3 DoF show reduced performance compared to those with 2 DoF, particularly in orientation. This discrepancy may be attributed to the placement of the first marker, located at the midpoint of the robot, as marker positions likely influence state reconstruction and its accuracy. However, it is noteworthy that 3 DoF discretizations exhibit less variability in the position metrics compared to models with fewer DoF.

Models with 4 DoF better capture the robot shape, with the most noticeable improvements observed in orientation accuracy. Additionally, while task error exhibits greater variability in several underactuated models, its average value remains consistent across all discretizations. Among the tested configurations, the PC model with 4 DoF slightly outperforms the other underactuated descriptions in overall performance.

\section{Conclusions}\label{sec:conclusions}
This paper addresses the shape regulation problem for continuum soft robots. We approach this challenge by leveraging the concept of collocated form, which reformulates the robot dynamics such that only some degrees of freedom are directly influenced by the system inputs. 

We demonstrate that shape regulation via collocated control can be achieved in a unified framework using three families of PD- and PID-like regulators, each offering distinct stability properties. The majority of these controllers are validated on a newly designed soft robotic platform. In the experiments, we evaluate the closed-loop performance of the control strategies based on three criteria: variations in proportional gain, changes in payload carried by the robot, and the use of different reduced-order models. Additionally, we examine how the choice of dynamic model influences shape control.

Experimental results indicate that PD regulators exhibit superior transient performance and stability compared to PID regulators. However, the latter achieve tracking errors that are, on average, an order of magnitude smaller. Furthermore, the results show empirically that underactuated models generally provide a more accurate description of the robot shape.

Experiments reveal that increasing the discretization order can adversely affect the metrics related to shape. This fact may be attributed to sensors location, which likely impacts the accuracy of configuration reconstruction. 

\vspace{0.5cm}

\hspace{-0.4cm}{\LARGE\textbf{Appendix}}

\appendix

\section{Proof of theorem 1} 
\label{Sec:Appendix 1}

  First, note that $\thetav_{d}$ is an equilibrium and the linearization around $\thetav_{d}$ reads
    \begin{equation*}
    \begin{split}
        &\Mm(\thetav_{d})\Delta\ddthetav + \jac{\gv}{\thetav}(\thetav_{d})\Delta\thetav + \jac{\kv}{\thetav}(\thetav_{d})\\
        &+ \tilde{\Dm}(\thetav_{d})\Delta \dthetav + \begin{carray}{c}
            \bm{K}_P\Delta\thetav_{a}-\jac{\uv_{m}}{\thetav}(\thetav_{d})\Delta\thetav\\
            \zerov_{n-m}
        \end{carray} = \zerov_{n},
    \end{split}
    \end{equation*}
    with
    \begin{equation*}
        \tilde{\Dm}(\thetav_{d}) = \begin{carray}{cc}
                \Dm_{aa} + \bm{K}_D & \Dm_{au}\\
                \Dm_{ua} & \Dm_{uu}
        \end{carray}.
    \end{equation*}
    The configuration $\Delta\thetav = \zerov_{n}$ is asymptotically stable, in a local sense, if and only if
    \begin{equation*}
        \Qm = \begin{carray}{cc}
            \Qm_{\thetav_{a}\thetav_{a}} & \Qm_{\thetav_{a}\thetav_{u}}\\
            \mathrm{sym} & \Qm_{\thetav_{u}\thetav_{u}}
        \end{carray} > 0,
    \end{equation*}
    where
    \begin{equation*}
        \Qm_{\thetav_{a}\thetav_{a}} = \jac{\gv_{a}}{\thetav_{a}} + \jac{\kv_{a}}{\thetav_{a}} + \bm{K}_P - \jac{\uv_{m}}{\thetav_{a}},
    \end{equation*}
    \begin{equation*}
        \Qm_{\thetav_{a}\thetav_{u}} = \jac{\gv_{a}}{\thetav_{u}} + \jac{\kv_{a}}{\thetav_{u}} - \frac{1}{2}\jac{\uv_{m}}{\thetav_{u}},
    \end{equation*}
    and
    \begin{equation*}
        \Qm_{\thetav_{u}\thetav_{u}} = \jac{\gv_{u}}{\thetav_{u}} + \jac{\kv_{u}}{\thetav_{u}}.
    \end{equation*}
    Since $\Qm_{\thetav_{u}\thetav_{u}} > 0$ by assumption, $\Qm$ is positive definite if and only if
    \begin{equation*}
        \Qm_{\thetav_{a}\thetav_{a}} - \Qm_{\thetav_{a}\thetav_{u}}\Qm_{\thetav_{u}\thetav_{u}}^{-1}\Qm_{\thetav_{a}\thetav_{u}}^{T} > 0,
    \end{equation*}
    which can always be enforced by a sufficiently large $\bm{K}_P$.

\section{Proof of theorem 2} 
\label{Sec:Appendix 2}

We first provide some properties that will be needed to develop the proof of Theorem \ref{theorem:regulator} and Theorem \ref{theorem:shape_regulation:regulator 2}. 

\begin{property}
(\hspace{1sp}\cite{kelly1998global})\label{prop:saturation:1} Given $k_{1} = \alpha_{1}$, $k_{2} = \alpha_{1}\beta$, and $k_{3} = \sqrt{h}\beta$, it holds
$$\small
\norm{\sv(\xv)} \geq 
\begin{cases}
    k_{1}\norm{\xv};\,\,\, \norm{\xv} \leq \beta\\
    \,\,\,k_{2}\,\,\,\,\,\,\,;\,\,\, \norm{\xv} > \beta, 
\end{cases}
$$
and
$$\small
\norm{\sv(\xv)} \leq 
\begin{cases}
    \norm{\xv};\,\,\, \norm{\xv} \leq \beta\\
    \,\,\,k_{3}\,;\,\,\, \norm{\xv} > \beta.
\end{cases}
$$
\end{property}

\begin{property}(\hspace{1sp}\cite{kelly1998global})\label{property:saturation:scalar product 1}
$\small
\sv^{T}(\xv)\xv \geq 
    \begin{cases}
        k_{1} \norm{\xv}^{2} ;\,\,\, \norm{\xv} \leq \beta\\
        k_{2} \norm{\xv} ;\,\,\, \norm{\xv} > \beta.
    \end{cases}
$
\end{property}

\begin{property}(\hspace{1sp}\cite{pustina2023psaitd})\label{prop:saturation:3}
$\small \lambda_{\max}(\Qm) \sv^{T}(\xv)\xv \geq \sv^{T}(\xv) \Qm \xv \geq \lambda_{\min}(\Qm) \sv^{T}(\xv)\xv$.
\end{property}

\begin{property}(\hspace{1sp}\cite{pustina2023psaitd})\label{prop:saturation:4}
Given $k_{4} = \alpha_{2}$ and $k_{5} = \alpha_{3}$, one has 
$$\norm{\jac{\sv(\xv)}{\xv}} \leq k_{4}, \quad \norm{\IIm_{h} - \jac{\sv(\xv)}{\xv}} \leq k_{5}\norm{\sv(\xv)}.$$ 
\end{property}

\begin{property}\label{prop:saturation:5}
When $\norm{x} \leq \beta$,
$
\small k_{1}\xv^{T}\yv \leq \sv^{T}(\xv)\yv \leq \xv^{T}\yv.
$
\end{property}

   Let
    \begin{equation}\label{eq:V:regulator}
    \begin{split}
        V &= \nu_{1} \left( \frac{1}{2} \dthetav^{T}\Mm\dthetav + \mathcal{U}_{g} + \mathcal{U}_{e} + \frac{1}{2}\tthetav_{a}^{T}\bm{K}_P\tthetav_{a} + \tthetav_{a}^{T}\uv_{m}  \right.\\
        &\left. \phantom{\frac{1}{2}} + \tthetav_{a}^{T}\bm{K}_I\zv + \nu_{2} \right) + \frac{1}{2}\zv^{T}\bm{K}_I\zv - \sv^{T}(\tthetav_{a}) \Mm_{a}\dthetav + \nu_{3},
    \end{split}
    \end{equation}
    where $\nu_{1}, \nu_{2}$, and $\nu_{3} \in \R^{+}$ are constants to be selected. Moreover, $\zv$ is the state vector associated with the saturated integral action, defined as
    \begin{equation*}
        \zv(t) = \int_{0}^{t} \sv(\tthetav_{a}(\tau))\drm \tau.
    \end{equation*}

    First, we examine the conditions under which $V$ remains non-negative. To do so, it is helpful to express
    \begin{equation*}
        V = V_{1} + V_{2} + \nu_{1} \left( \mathcal{U}_{g} + V_{3} \right),
    \end{equation*}
    where
    \begin{equation}\label{eq:shape regulation:V1}
    \begin{split}
        V_{1} &= \nu_{1} \frac{1}{2} \dthetav^{T}\Mm\dthetav - \sv^{T}(\tthetav_{a})\Mm_{a}\dthetav + \nu_{3},
    \end{split}
    \end{equation}
    \begin{equation}\label{eq:shape regulation:V2}
    \begin{split}
        V_{2} &= \nu_{1} \left( \frac{1}{4} \tthetav_{a}^{T}\bm{K}_P\tthetav_{a} + \tthetav_{a}^{T}\bm{K}_I\zv \right) + \frac{1}{2}\zv^{T}\bm{K}_I\zv,
    \end{split}
    \end{equation}
    and
    \[
        V_{3} = \mathcal{U}_{e} + \frac{1}{4} \tthetav_{a}^{T}\bm{K}_P\tthetav_{a} + \tthetav_{a}^{T}\uv_{m} + \nu_{2}.
    \]

    Owing to the properties of the saturation function, we have
    \begin{equation*}
    \begin{split}
        V_{1} &\geq \frac{\nu_{1}}{2} \lambda_{m} \norm{\dthetav}^{2} - k_{2}\sigma_{\max}(\Mm_{a})\norm{\dthetav} + \nu_{3}.
    \end{split} 
    \end{equation*}
    This inequality is lower bounded as it is dominated by a quadratic function in $\norm{\dthetav}$. To ensure that $V_{1}$ remains non-negative, it is sufficient to impose that the discriminant
    \[
        \Delta = k_{2}^{2} \sigma_{\max}^{2}(\Mm_{a}) - 2 \nu_{1} \lambda_{m} \nu_{3} \leq 0,
    \]
    which holds when $\nu_{3} = \frac{k_{2}^{2} \sigma_{\max}^{2}(\Mm_{a})}{2 \nu_{1} \lambda_{m}}$. 

    The term $V_{2}$ can be expressed as a quadratic form
    \begin{equation*}
        V_{2} = \frac{1}{2} \begin{carray}{c} \tthetav_{a} \\ \zv \end{carray}^{T} \begin{carray}{cc} \frac{\nu_{1}}{2}\bm{K}_P & \nu_{1}\bm{K}_I \\ \mathrm{sym} & \bm{K}_I \end{carray} \begin{carray}{c} \tthetav_{a} \\ \zv \end{carray},
    \end{equation*}
    which is positive definite if and only if
    \[
        \bm{K}_P - 2\nu_{1}\bm{K}_I > 0.
    \]
    This condition can be satisfied by selecting $\bm{K}_P$ sufficiently large. 

    Now, considering $V_3$, we observe that it is lower bounded by
    \begin{equation}\label{eq:shape regulation:V3 ineq}
    \begin{split}
        V_3 &\geq \frac{1}{2} \begin{carray}{c} \norm{\tthetav_{a}} \\ \norm{\thetav} \end{carray}^{T}
        \begin{carray}{cc} \frac{1}{4}\lambda_{\min}(\bm{K}_P) & -k_{u_1} \\ \mathrm{sym} & \lambda_k \end{carray}
        \begin{carray}{c} \norm{\tthetav_{a}} \\ \norm{\thetav} \end{carray} \\
        &\quad + \frac{1}{8}\lambda_{\min}(\bm{K}_P) \norm{\tthetav_{a}}^2 - k_{u_2} \norm{\thetav_{a_d}} \norm{\tthetav_{a}} + \nu_2.
    \end{split}
    \end{equation}
    The quadratic term in $\norm{\tthetav_{a}}$ and $\norm{\thetav}$ is non-negative when
    \begin{equation}
        \lambda_{\min}(\bm{K}_P) \geq \frac{2 k_{u_1}^2}{\lambda_k}.
    \end{equation}
    Furthermore, the quadratic term in $\norm{\tthetav_{a}}$ can also be made non-negative by ensuring that its discriminant is non-positive, i.e.,
    \[
        \Delta = k_{u_2}^2 \norm{\thetav_{a_d}}^2 - \frac{\lambda_{\min}(\bm{K}_P) \nu_2}{2} \leq 0.
    \]
    This holds when
    \[
        \nu_2 = \frac{2 k_{u_2}^2 \norm{\thetav_{a_d}}^2}{\lambda_{\min}(\bm{K}_P)}.
    \]

    Thus, $V$ is non-negative everywhere provided that $\bm{K}_P$ is large. Additionally, $V$ is radially unbounded because it is lower bounded by quadratic functions of the state.

    Next, we analyze the first-order time derivative of $V$. After some calculations and simplifications, gives
    \begin{equation}
    \begin{split}
        \dot{V} &= \nu_1 \left( -\dthetav^T \Tilde{\Dm} \dthetav + \tthetav_{a}^T \jac{\uv_{m}}{\thetav} \dthetav + \sv^T(\tthetav_{a}) \bm{K}_I \tthetav_{a} \right) \\
        &\quad + \dthetav^T \Mm_{a}^T \jac{\sv(\tthetav_{a})}{\tthetav_{a}} \dthetav_{a} + \sv^T(\tthetav_{a}) \left( (\Cm^T)_a \dthetav + \tilde{\Dm}_a \dthetav \right) \\
        &\quad - \sv^T(\tthetav_{a}) \left( \uv_{m} - \gv_{a} - \kv_{a} \right) - \sv^T(\tthetav_{a}) \bm{K}_P \tthetav_{a},
    \end{split}
    \end{equation}
    with
    \[
        \tilde{\Dm} = \begin{carray}{cc} \Dm_{aa} + \bm{K}_D & \Dm_{au} \\ \Dm_{ua} & \Dm_{uu} \end{carray}.
    \]
    The reader can recognize that $\dot{V}$ resembles a quadratic function of $\tthetav_{a}$ and $\dthetav$. This is indeed true when $\norm{\tthetav_{a}} \leq \beta$. However, for $\norm{\tthetav_{a}} > \beta$, the saturation of $\sv(\tthetav_{a})$ ensures that $\dot{V}$ is upper bounded only by a function of $\tthetav_{a}$, and no conclusion can be drawn about its sign. Thus, we assume $\norm{\tthetav_{a}} \leq \beta$ from this point onward.

    Invoking Property~\ref{property:coriolis matrix}, it holds
    \[
        \sv^T(\tthetav_{a}) \Cm^{T}_{a} \dthetav \leq k_3 \lambda_C \dthetav^T \dthetav.
    \]
    From Property~\ref{property:saturation:scalar product 1} and Property~\ref{prop:saturation:5}, we also have
    \[
        -\sv^T(\tthetav_{a}) \left( \bm{K}_P - \nu_1 \bm{K}_I \right) \tthetav_{a} \leq -\lambda_{\min} \left( \bm{K}_P - \nu_1 \bm{K}_I \right) \tthetav_{a}^T \tthetav_{a},
    \]
    \[
        \sv^T(\tthetav_{a}) \tilde{\Dm}_a \dthetav \leq \tthetav_{a}^T \tilde{\Dm}_a \dthetav,
    \]
    and from the second property of $\uv_{m}$
    \[
        -\sv^T(\tthetav_{a}) \left( \uv_{m} - \gv_{a} - \kv_{a} \right) \leq k_{u_3} \tthetav_{a}^T \tthetav_{a}.
    \]
    Therefore, $\dot{V}$ is upper bounded by
    \[
        \dot{V} \leq - \begin{carray}{c} \tthetav_{a} \\ \dthetav \end{carray}^T
        \begin{carray}{cc} \Qm_{\tthetav_{a} \tthetav_{a}} & \Qm_{\tthetav_{a} \dthetav} \\ \mathrm{sym} & \Qm_{\dthetav \dthetav} \end{carray}
        \begin{carray}{c} \tthetav_{a} \\ \dthetav \end{carray},
    \]
    with
    \[
        \Qm_{\tthetav_{a} \tthetav_{a}} = \left[ \lambda_{\min} \left( \bm{K}_P - \nu_1 \bm{K}_I \right) - k_{u_3} \right] \IIm_{m},
    \]
    \[
        \Qm_{\tthetav_{a} \dthetav} = -\frac{1}{2} \left( \nu_1 \jac{\uv_{m}}{\thetav} + \tilde{\Dm}_a \right),
    \]
    and
    \[
    \begin{split}    
        \Qm_{\dthetav \dthetav} &= \nu_1 \tilde{\Dm} - k_3 \lambda_C \IIm_n \\
        &\quad- \mathrm{sym}\begin{carray}{c} 
        \jac{\sv(\tthetav_{a})}{\tthetav_{a}}^T \Mm_{a} \\ \zerov_{(n-m) \times n} \end{carray}.
    \end{split}
    \]
    This quadratic form is negative definite if
    \[
        \Qm_{\dthetav \dthetav} > 0
    \]
    and
    \[
        \Qm_{\tthetav_{a} \tthetav_{a}} - \Qm_{\tthetav_{a} \dthetav} \Qm_{\dthetav \dthetav}^{-1} \Qm_{\tthetav_{a} \dthetav}^T > 0.
    \]
    The first condition holds if
    \[
        \nu_1 > \frac{k_3 \lambda_C + \sigma_{\max} \left( \jac{\sv(\tthetav_{a})}{\tthetav_{a}}^T \Mm_{a} \right)}{\lambda_{\min}(\tilde{\Dm})}.
    \]
    The second condition can be enforced by appropriately selecting $\bm{K}_P$, for example
    \[
        \lambda_{\min}(\bm{K}_P) > \nu_1 \lambda_{\max}(\bm{K}_I) + k_{u_3} + \frac{\lambda_{\max} \left( \Qm_{\tthetav_{a} \dthetav} \Qm_{\tthetav_{a} \dthetav}^T \right)}{\lambda_{\min}(\Qm_{\dthetav \dthetav})}.
    \]

    The set
    \[
        \Omega_0 = \left\{ (\thetav, \dthetav, \zv) \, \middle| \, V(\thetav, \dthetav, \zv) \leq c \right\},
    \]
    with $\displaystyle c = \nu_1 \left( \frac{\beta^2}{8} \lambda_{\min}(\bm{K}_P) - k_{u_2} \norm{\thetav_{a_d}} \beta + \nu_2 \right)$, is compact since $V$ is non-negative and radially unbounded~\cite{khalil2002nonlinear}. Furthermore, $\dot{V}$ is non-positive within $\Omega_0$ because $\norm{\tthetav_{a}} \leq \beta$ in $\Omega_{0}$. Assume the contrary, i.e., there exists some $\thetav_{a} \in \Omega_0$ such that $\norm{\tthetav_{a}} > \beta$. Then, recalling the bounds on $V$, we find
    \[
        V \geq \nu_1 \left( \frac{1}{8} \lambda_{\min}(\bm{K}_P) \norm{\tthetav_{a}}^2 - k_{u_2} \norm{\thetav_{a_d}} \norm{\tthetav_{a}} + \nu_2 \right) > c,
    \]
    which is a contradiction.

    All conditions for applying LaSalle Theorem hold. Thus, the trajectories of the closed-loop system starting in $\Omega_0$ converge to the largest invariant set within $\Omega_0$ where $\dot{V} = 0$
    \[
        \left\{ (\thetav, \dthetav) \, \middle| \, \tthetav_{a} = \thetav_{a_d}, \dthetav = \zerov_n \right\}.
    \]

\section{Proof of theorem 3} 
\label{Sec:Appendix 3}

    The proof relies on a Lyapunov-like function similar to the one used in Theorem~\ref{theorem:regulator}.
    Specifically, consider
    \begin{equation*}
        \begin{split}
        V &= \nu_{1} \left( \frac{1}{2} \dthetav^{T}\Mm\dthetav + \mathcal{U}_{g} + \mathcal{U}_{e} + \frac{1}{2}\tthetav_{a}^{T}\bm{K}_P\tthetav_{a} + \sv^{T}(\tthetav_{a})\uv_{m} \right.\\
        &\left. \phantom{\frac{1}{2}} + \tthetav_{a}^{T}\bm{K}_I\zv + \nu_{2} \right) + \frac{1}{2}\zv^{T}\bm{K}_I\zv - \sv^{T}(\tthetav_{a}) \Mm_{\thetav_{a}}\dthetav + \nu_{3}.
        \end{split}
    \end{equation*}
    Rearranging the terms, $V$ can be expressed as
    \begin{equation*}
        V = V_{1} + V_{2} + \nu_{1} \left( \mathcal{U}_{g} + V_{3} \right),
    \end{equation*}
    where $V_{1}$ and $V_{2}$ are defined as in~\eqref{eq:shape regulation:V1} and~\eqref{eq:shape regulation:V2}, respectively, and
    \begin{equation*}
        V_{3} = \mathcal{U}_{e} + \frac{1}{4}\tthetav_{a}^{T}\bm{K}_P\tthetav_{a} + \sv^{T}(\tthetav_{a})\uv_{m}(\thetav, \thetav_{a_{d}}) + \nu_{2}.
    \end{equation*}
    
    Since $V_{1}$ and $V_{2}$ are non-negative (see proof of Theorem~\ref{theorem:regulator}), $V$ is non-negative if and only if $V_{3}$ is non-negative. Using Property~\ref{property:gravity}, we derive
    \begin{equation*}
    \begin{split}
        V_{3} &\geq \lambda_{k}\norm{\thetav}^{2} + \frac{1}{4}\lambda_{\min}(\bm{K}_P)\norm{\tthetav_{a}}^{2}\\
        &\quad - \norm{\sv(\tthetav_{a})}\left( k_{u_{1}}\norm{\thetav} + k_{u_{2}}\norm{\thetav_{a_{d}}} \right) + \nu_{2}.
    \end{split}
    \end{equation*}
    For $\norm{\tthetav} \leq \beta$, we have $\norm{\sv(\tthetav_{a})} \geq \norm{\tthetav_{a}}$, yielding
    \begin{equation*}
        V_{3} \geq \lambda_{k}\norm{\thetav}^{2} + \frac{1}{4}\lambda_{\min}(\bm{K}_P)\norm{\tthetav_{a}}^{2} - \norm{\tthetav_{a}}\left( k_{u_{1}}\norm{\thetav} + k_{u_{2}}\norm{\thetav_{a_{d}}} \right) + \nu_{2}.
    \end{equation*}
    Referring to~\eqref{eq:shape regulation:V3 ineq}, $V_{3}$ is non-negative if
    \begin{equation*}
        \nu_{2} \geq \frac{2 k_{u_{2}}^{2} \norm{\thetav_{a_{d}}}^{2}}{\lambda_{\min}(\bm{K}_P)}, \quad \lambda_{\min}(\bm{K}_P) \geq \frac{2 k_{u_{1}}^{2}}{\lambda_{k}}.
    \end{equation*}
    
    For $\norm{\tthetav} > \beta$, $\norm{\sv(\tthetav_{a})} \geq k_{3}$, which results in
    \begin{equation*}
        V_{3} \geq \lambda_{k}\norm{\thetav}^{2} - k_{3}k_{u_{1}}\norm{\thetav} - k_{3}k_{u_{2}}\norm{\thetav_{a_{d}}} + \nu_{2},
    \end{equation*}
    and is non-negative if
    \begin{equation*}
        \nu_{2} \geq k_{3}k_{u_{2}} \norm{\thetav_{a_{d}}} + \frac{k_{3}^{2}k_{u_{2}}^{2}}{4 \lambda_{k}}.
    \end{equation*}

    Thus, by selecting
    \begin{equation*}
        \nu_{2} = \max\left( \frac{2 k_{u_{2}}^{2} \norm{\thetav_{a_{d}}}^{2}}{\lambda_{\min}(\bm{K}_P)}, \, k_{3}k_{u_{2}} \norm{\thetav_{a_{d}}} + \frac{k_{3}^{2}k_{u_{2}}^{2}}{4 \lambda_{k}} \right),
    \end{equation*}
    and sufficiently large $\bm{K}_P$, we can ensure that $V_{3} \geq 0$ everywhere.

    Now, consider the first-order time derivative of $V$. After simplification, we obtain
    \begin{equation*}
    \begin{split}
        \dot{V} &= \nu_{1} \left[ -\dthetav^{T} \Tilde{\Dm} \dthetav + \dthetav_{a}^{T}\left( \IIm_{m} - \jac{\sv(\tthetav_{a})}{\tthetav_{a}} \right)\uv_{m} \right.\\ 
        &\left. + \sv(\tthetav_{a})^{T}\jac{\uv_{m}}{\thetav}\dthetav 
        \vphantom{\dthetav_{a}^{T}\left( \IIm_{m} - \jac{\sv(\tthetav_{a})}{\tthetav_{a}} \right)\uv_{m}} + \sv^{T}(\tthetav_{a})\bm{K}_I\tthetav_{a} \right] + \dthetav^{T}\Mm_{\thetav_{a}}^{T} \jac{\sv(\tthetav_{a})}{\tthetav_{a}}\dthetav_{a} \\
        &\quad + \sv^{T}(\tthetav_{a})\left( (\Cm_{\thetav}^{T})_{a} \dthetav + \tilde{\Dm}_{a}\dthetav \right) -\sv^{T}(\tthetav_{a})\left( \uv_{m} - \gv_{a} - \kv_{a} \right)\\
        &\quad- \sv^{T}(\tthetav_{a})\bm{K}_P\tthetav_{a},
    \end{split}
    \end{equation*}
    where
    \begin{equation*}
        \tilde{\Dm} = \begin{carray}{cc}
            \Dm_{aa} + \bm{K}_D & \Dm_{au}\\
            \Dm_{ua} & \Dm_{uu}
        \end{carray}.
    \end{equation*}

    Using the properties of the saturation function and the model-based term $\uv_{m}$, when $\norm{\tthetav_{a}} \leq \beta$, $\dV$ is upper bounded by
    \begin{equation*}
        \dV \leq -\begin{carray}{c}
            \norm{\tthetav_{a}}\\
            \norm{\dthetav}
        \end{carray}^{T}\begin{carray}{cc}
            Q_{\tthetav_{a}\tthetav_{a}} & Q_{\tthetav_{a}\dthetav}\\
            \mathrm{sym} & Q_{\dthetav\dthetav}
        \end{carray}\begin{carray}{c}
            \norm{\tthetav_{a}}\\
            \norm{\dthetav}
        \end{carray},
    \end{equation*}
    being
    \begin{equation*}
        Q_{\tthetav_{a}\tthetav_{a}} = \lambda_{\min}(\bm{K}_P - \nu_{1}\bm{K}_I) - k_{u_{3}},
    \end{equation*}
    \begin{equation*}
        Q_{\tthetav_{a}\dthetav} = -\frac{1}{2}\left[ \nu_{1}\left( k_{u_{4}} + k_{5}k_{u_{5}} \right) + \sigma_{\max}(\tilde{\Dm}_{a}) \right],
    \end{equation*}
    and
    \begin{equation*}
        Q_{\dthetav\dthetav} = \nu_{1}\lambda_{\min}(\tilde{\Dm}) - \left( k_{3}\lambda_{C} + k_{4}\right)\lambda_{M}.
    \end{equation*}

    This quadratic form is negative definite for $Q_{\dthetav\dthetav} > 0$ and
    \begin{equation*}
        Q_{\tthetav_{a}\tthetav_{a}} > \frac{Q_{\tthetav_{a}\dthetav}^{2}}{Q_{\dthetav\dthetav}}.
    \end{equation*}
    The first condition holds if
    \begin{equation*}
        \nu_{1} > \frac{k_{3}\lambda_{C} + k_{4}\lambda_{M}}{\lambda_{\min}(\tilde{\Dm})},
    \end{equation*}
    and the second can be enforced by appropriately choosing $\bm{K}_P$, with a possible choice
    \begin{equation*}
        \lambda_{\min}(\bm{K}_P) > \nu_{1}\lambda_{\max}(\bm{K}_I) + k_{u_{3}} + \frac{Q_{\tthetav_{a}\dthetav}^{2}}{Q_{\dthetav\dthetav}}.
    \end{equation*}

    In the case where $\norm{\tthetav_{a}} > \beta$, $\dV$ can be bounded as
    \begin{equation*}
    \begin{split}
        \dV &\leq - Q_{\dthetav\dthetav}\norm{\dthetav}^{2} + k_{3}Q_{\tthetav_{a}\dthetav}\norm{\dthetav}\\
        &\quad- \beta \left[ k_{2}\lambda_{\min}\left( \bm{K}_P - \nu_{1}\bm{K}_I \right) - k_{3}k_{u_{3}} \right].
    \end{split}
    \end{equation*}

    The quadratic function in $\norm{\dthetav}$ is negative definite if its discriminant is negative, i.e.,
    \begin{equation*}
        k_{3}^{2}Q_{\tthetav_{a}\dthetav}^{2}-4Q_{\dthetav\dthetav}\beta\left[ k_{2}\lambda_{\min}\left( \bm{K}_P - \nu_{1}\bm{K}_I \right) - k_{3}k_{u_{3}} \right] < 0.
    \end{equation*}
    This holds if
    \begin{equation*}
        \lambda_{\min}(\bm{K}_P) > \nu_{1}\lambda_{\max}(\bm{K}_I) + \frac{1}{k_{2}}\left( k_{3}k_{u_{3}} + \frac{k_{3}^{2}Q_{\tthetav_{a}\dthetav}^{2}}{4 \beta Q_{\dthetav\dthetav}} \right).
    \end{equation*}

    Thus, by choosing $\bm{K}_P$ sufficiently large, $\dV$ remains negative everywhere. The conclusion follows again from LaSalle theorem and noting that the set where $\dV = 0$ is
    \begin{equation*}
        \left\{ (\thetav, \dthetav) \mid \thetav_{a} = \thetav_{a_{d}}, \, \dthetav = \zerov_{n} \right\}.
    \end{equation*}

\section{Proof of Theorem 4} 
\label{Sec:Appendix 4}

    Let $\thetav_{u_{d}}$ be the solution to the unactuated statics for the desired $\thetav_{a_{d}}$, namely
    \begin{equation}\label{eq:elastically dominated:PsatID unactuated eq}
        \gv_{u}(\thetav_{a_{d}}, \thetav_{u_{d}}) + \kv_{u}(\thetav_{a_{d}}, \thetav_{u_{d}}) = \zerov_{n-m},
    \end{equation}
    and define $\tthetav = \thetav_{d} - \thetav$ and
    \begin{equation*}\label{eq:elastically dominated:PsatID z}
        \zv = \int_{0}^{t}\sv(\tthetav_{a}(\tau))\drm\tau - \bm{K}_I^{-1}\rb{ \gv(\thetav_{d}) + \kv_{a}(\thetav_{d}) }.
    \end{equation*}
    Using $\zv$, the control action reads
    \begin{equation}\label{eq:elastically dominated:PsatID z form}
        \uv = \bm{K}_P\tthetav_{a} - \bm{K}_D\dthetav_{a} + \bm{K}_I\zv + \gv(\thetav_{d}) + \kv_{a}(\thetav_{d}).
    \end{equation}

    Consider the Lyapunov-like function
    \begin{equation*}
    \begin{split}
        V &= \nu_{1}\left[ \frac{1}{2}\dthetav^{T}\Mm(\thetav)\dthetav + \mathcal{U} + \frac{1}{2}\tthetav_{a}^{T}\bm{K}_P\tthetav_{a} + \tthetav_{a}^{T} (\gv_{a}(\thetav_{d}) \right.\\
        &\left. \vphantom{\frac{1}{2}} + \kv_{a}(\thetav_{d})) + \tthetav_{a}\bm{K}_I\zv + \nu_{2} \right] + \frac{1}{2}\zv^{T}\bm{K}_I\zv - \sv(\tthetav)\Mm(\thetav)\dthetav + \nu_{3},
    \end{split}
    \end{equation*}
    where $\nu_{1}$, $\nu_{2}$, and $\nu_{3}$ are constants to be specified later. As in previous proofs, it is convenient to express $V$ as the sum of three terms: $V = V_{1} + V_{2} + \nu_{1}(\mathcal{U} + V_{3})$, where
    \begin{equation*}
        V_{1} = \frac{1}{2}\nu_{1}\dthetav^{T}\Mm\dthetav - \sv^{T}(\tthetav) \Mm \dthetav + \nu_{3},
    \end{equation*}
    \begin{equation*}
        V_{2} = \nu_{1}\rb{ \frac{1}{4} \tthetav_{a}^{T}\bm{K}_P\tthetav_{a} + \tthetav_{a}^{T}\bm{K}_I\zv } + \frac{1}{2}\zv^{T}\bm{K}_I\zv,
    \end{equation*}
    and
    \begin{equation*}
        V_{3} = \frac{1}{4}\tthetav_{a}^{T}\bm{K}_P\tthetav_{a} + \tthetav_{a}^{T}\rb{ \gv_{a}(\thetav_{d}) + \kv_{a}(\thetav_{d}) } + \nu_{2}.
    \end{equation*}

    The function $V_{1}$ can be lower bounded as
    \begin{equation*}
        V_{1} \geq \frac{1}{2}\nu_{1}\lambda_{m}\norm{\dthetav}^{2} - \lambda_{M}k_{3}\norm{\dthetav} + \nu_{3},
    \end{equation*}
    and is non-negative when $\nu_{3} = \displaystyle \frac{ \lambda_{M}^{2}k_{3}^{2} }{2 \nu_{1}\lambda_{m}}$.

    For $V_{2}$, we have
    \begin{equation*}
        V_{2} = \frac{1}{2}\begin{carray}{c}
            \tthetav_{a}\\ \zv
        \end{carray}^{T}\begin{carray}{cc}
            \frac{1}{2}\nu_{1}\bm{K}_P & \nu_{1}\bm{K}_I\\
            \mathrm{sym} & \bm{K}_I
        \end{carray}\begin{carray}{c}
            \tthetav_{a}\\ \zv
        \end{carray}.
    \end{equation*}
    By applying Schur criterion, this quadratic function is positive definite when $\bm{K}_I > 0$ and $\bm{K}_P > 2 \nu_{1} \bm{K}_I$.

    Finally, for $V_{3}$, we have
    \begin{equation*}
        V_{3} \geq \frac{1}{4}\lambda_{\min}(\bm{K}_P)\norm{\tthetav_{a}}^{2} - \norm{\gv_{a}(\thetav_{d}) + \kv_{a}(\thetav_{d})}\norm{\tthetav_{a}} + \nu_{2}.
    \end{equation*}
    Setting $\nu_{2} = \displaystyle \frac{\norm{ \gv_{a}(\thetav_{d}) + \kv_{a}(\thetav_{d}) }^{2}}{ \lambda_{\min}(\bm{K}_P)}$ ensures that $V_{3}$ is non-negative.

    Since $V$ is the sum of non-negative functions and is radially unbounded (as it is lower bounded by positive quadratic functions of the closed-loop state), it is non-negative.

    Moving on to the time derivative, it holds
    \begin{equation*}
    \begin{split}
        \dV &= \nu_{1} \left[ - \dthetav_{T}\Dm\dthetav + \dthetav_{a}^{T}\uv - \dthetav_{a}^{T}\rb{ \gv(\thetav_{d}) + \kv_{a}(\thetav_{d}) } - \dthetav_{a}^{T}\bm{K}_I\zv\right.\\
        &\left.+ \tthetav_{a}^{T}\bm{K}_I\sv(\tthetav_{a}) \right] + \zv^{T}\bm{K}_I\sv(\tthetav_{a}) + \dthetav^{T}\jac{{}^{T}\sv(\tthetav)}{\tthetav}\Mm \dthetav\\
        &- \sv^{T}(\tthetav)\rb{\Cm^{T}\dthetav + \Dm\dthetav} + \sv^{T}(\tthetav)\rb{ \gv + \kv } - \sv^{T}(\tthetav_{a})\uv.
    \end{split}
    \end{equation*}
    Substituting~\eqref{eq:elastically dominated:PsatID z form} and using the identity~\eqref{eq:elastically dominated:PsatID unactuated eq}, some simplifications yield
    \begin{equation*}
        \begin{split}
            \dV &= \nu_{1}\rb{ - \dthetav^{T}\tilde{\Dm}\dthetav + \tthetav_{a}^{T}\bm{K}_P\tthetav_{a} } + \dthetav^{T}\jac{{}^{T}\sv(\tthetav)}{\tthetav}\Mm \dthetav\\ 
            &- \sv^{T}(\tthetav)\rb{ \Cm^{T} + \tilde{\Dm} \dthetav } + \sv^{T}\rb{ \gv(\thetav) - \gv(\thetav_{d}) + \kv(\thetav) - \kv(\thetav_{d}) }\\
            &- \sv^{T}(\tthetav_{a})\bm{K}_P\tthetav_{a},
        \end{split}
    \end{equation*}
    where
    \begin{equation*}
        \tilde{\Dm} = \begin{carray}{cc}
            \Dm_{aa} + \bm{K}_D & \Dm_{au}\\
            \Dm_{ua} & \Dm_{uu}
        \end{carray}.
    \end{equation*}

    From the Mean Value Theorem for vector-valued functions, one has
    \begin{equation*}
        \gv(\thetav) - \gv(\thetav_{d}) = -\rb{\int_{0}^{1} \jac{\gv(\xv)}{\xv}_{\xv = \thetav - s \tthetav}\drm s}\tthetav
    \end{equation*}
    and
    \begin{equation*}
        \kv(\thetav) - \kv(\thetav_{d}) = -\rb{\int_{0}^{1} \jac{\kv(\xv)}{\xv}_{\xv = \thetav - s \tthetav}\drm s}\tthetav.
    \end{equation*}

\begin{figure*}
    \centering
    \subfigure[Identified step response]{
        \includegraphics[width=0.45\linewidth]{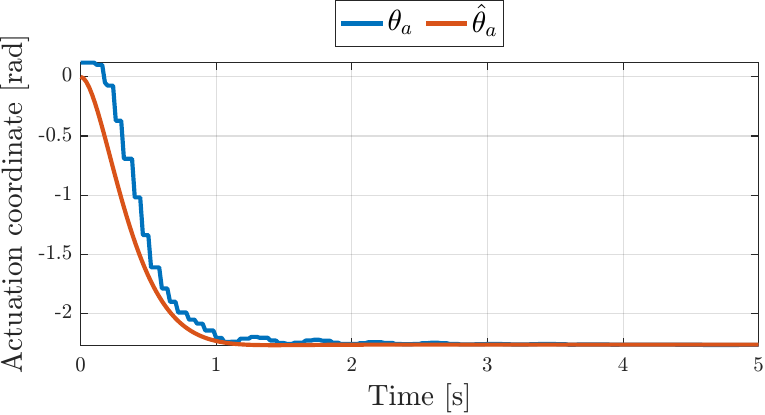}
        \label{identification experiment:step response}
    }
    \subfigure[Estimated closed-loop dynamics]{
        \includegraphics[width=0.45\linewidth]{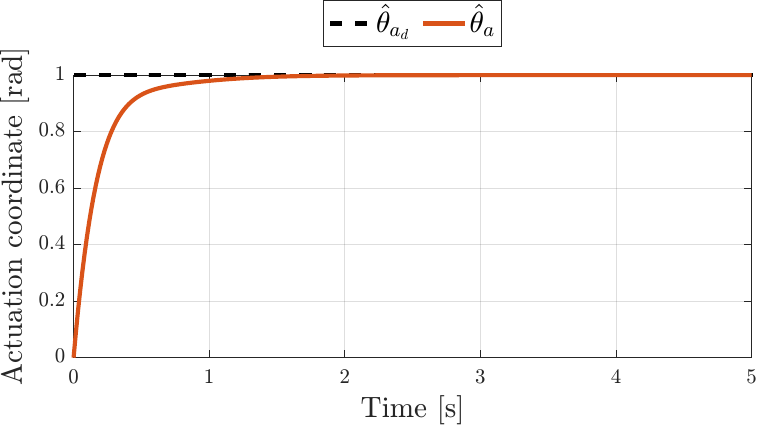}
        \label{identification experiment:closed loop}
    }
    \caption{(a) Measured and identified step response of the open-loop system, and (b) estimated closed-loop response of the linear plant $P(s)$ under a PID regulator with $k_{P} = 0.294~[\si{\newton \meter}]$, $k_{D} = 0.0313~[\si{\newton \meter \second}]$, and $k_{I} = 0.689~[\si{\newton \meter \per \second}]$. }
    \label{identification experiment}
\end{figure*}

    This leads to the following ``quasi'' quadratic form
    \begin{equation*}
    \begin{split}
        &\sv^{T}\rb{ \gv(\thetav) - \gv(\thetav_{d}) + \kv(\thetav) - \kv(\thetav_{d}) }\\
        &\quad\quad - \sv^{T}(\tthetav_{a})\rb{\bm{K}_P - \nu_{1}\bm{K}_I}\tthetav_{a} = - \sv^{T}(\tthetav) \tilde{\Km} \tthetav,
    \end{split}
    \end{equation*}
    where
    \begin{equation*}
        \tilde{\Km} = \small \int_{0}^{1} \begin{carray}{cc}
            \bm{K}_P + \jac{\gv_{a}}{\xv_{a}} + \jac{\kv_{a}}{\xv_{a}} &  \jac{\gv_{a}}{\xv_{u}} + \jac{\kv_{a}}{\xv_{u}}\\
            \jac{\gv_{u}}{\xv_{a}} + \jac{\kv_{u}}{\xv_{a}} & \jac{\gv_{u}}{\xv_{u}} + \jac{\kv_{u}}{\xv_{u}}
        \end{carray}_{\xv = \thetav - s \tthetav} \hspace{-1cm} \drm s.
    \end{equation*}

    A sufficient condition for the above matrix to be positive definite is that its integrand is positive definite. Since the robot is elastically dominated, $\jac{\gv_{u}}{\xv_{u}} + \jac{\kv_{u}}{\xv_{u}} > 0$ always holds, and the integrand is positive definite when
    \begin{equation*}
    \begin{split}
        &\bm{K}_P - \nu_{1}\bm{K}_I + \jac{\gv_{a}}{\xv_{a}} + \jac{\kv_{a}}{\xv_{a}} >\\
        &\quad\rb{\jac{\gv_{a}}{\xv_{u}} + \jac{\kv_{a}}{\xv_{u}}}\rb{\jac{\gv_{u}}{\xv_{u}} + \jac{\kv_{u}}{\xv_{u}}}
        \rb{\jac{\gv_{a}}{\xv_{u}} + \jac{\kv_{a}}{\xv_{u}}}^{T},
    \end{split}
    \end{equation*}
    which can be enforced through $\bm{K}_P$.

    After the above considerations, $\dV$ simplifies to
    \begin{equation*}
        \begin{split}
            \dV &= \nu_{1}\rb{ - \dthetav^{T}\tilde{\Dm}\dthetav + \tthetav_{a}^{T}\bm{K}_P\tthetav_{a} } + \dthetav^{T}\jac{{}^{T}\sv(\tthetav)}{\tthetav}\Mm \dthetav\\ 
            &- \sv^{T}(\tthetav)\rb{ \Cm^{T} + \tilde{\Dm} \dthetav } + \sv^{T}\tilde{\Km}\tthetav.
        \end{split} 
    \end{equation*}

    By exploiting the properties of the saturation function and the dynamic model, $\dV$ can be upper bounded as
    \begin{equation*}
    \begin{split}
    \dV &\leq - \left(\nu_{1} \lambda_{\min}(\tilde{\Dm}) - (k_{4}\lambda_{M} + k_{3}\lambda_{C})\right) \norm{\dthetav}^{2}\\
    &+ \lambda_{\max}(\tilde{\Dm}) \norm{\sv(\tthetav)}\norm{\dthetav} - \lambda_{\min}(\tilde{\Km})\sv^{T}(\tthetav)\tthetav.
    \end{split}
    \end{equation*}

    For $\norm{\tthetav} \leq \beta$, we have
    \begin{equation*}
    \begin{split}
        \dot{V} \leq - \begin{carray}{c}
            \norm{\tthetav}\\
            \norm{\dthetav}
        \end{carray}^{T}\left( \begin{array}{cc}
            k_{1}Q_{\tthetav \tthetav} & Q_{\tthetav \dthetav} \\[2pt]
            \mathrm{sym} & Q_{\dthetav \dthetav}
        \end{array} \right)\begin{carray}{c}
            \norm{\tthetav}\\
            \norm{\dthetav}
        \end{carray},
    \end{split}
    \end{equation*}
    where
    \begin{gather*}\small
        Q_{\tthetav\tthetav} = \lambda_{\min}(\tilde{\Km})\\
        Q_{\tthetav \dthetav} = -\frac{1}{2} \lambda_{\max}(\tilde{\Dm})\\
        Q_{\dthetav \dthetav} = \gamma \lambda_{\min}(\tilde{\Dm}) - (k_{4}\lambda_{M} + k_{3}\lambda_{C}).
    \end{gather*}

    According to Sylvester criterion, $\Qm > 0$ if and only if
    \begin{align}
        \label{eq:dominant_elasticity:PID:lamda_min_K_positive}
        \lambda_{\min}(\tilde{\Km}) &> 0,\\
        \label{eq:dominant_elasticity:PID:det_Q_positive}
        \det{\Qm} &> 0.
    \end{align}

    Eq.~\eqref{eq:dominant_elasticity:PID:lamda_min_K_positive} holds when $\bm{K}_P$ is sufficiently large, while~\eqref{eq:dominant_elasticity:PID:det_Q_positive} is guaranteed if
    \begin{equation*}
        \nu_{1} > \frac{1}{\lambda_{\min}(\tilde{\Dm})}  \left( \frac{1}{k_{1}} \frac{\lambda^{2}_{\max}(\tilde{\Dm})}{4 \lambda_{\min}(\tilde{\Km})} + k_{4}\lambda_{M} + k_{3}\lambda_{C} \right).
    \end{equation*}

    When $\norm{\tthetav} > \beta$, $\dot{V}$ is bounded by
    \begin{equation*}
        \dot{V} < -Q_{\dthetav \dthetav}\norm{\dthetav}^{2} + k_{3}Q_{\tthetav\dthetav} \norm{\dthetav} - k_{2}\beta Q_{\tthetav\tthetav},
    \end{equation*}
    and is negative if
    \begin{equation*}
        Q_{\dthetav\dthetav} > \frac{k_{3}^{2}}{k_{2}\beta} \frac{Q_{\tthetav\dthetav}^{2}}{Q_{\tthetav\tthetav}},
    \end{equation*}
    or equivalently
    \begin{equation*}
        \nu_{1} > \frac{1}{\lambda_{\min}(\tilde{\Dm})}  \left( \frac{k_{3}^{2}}{k_{2}\beta} \frac{\lambda^{2}_{\max}(\tilde{\Dm})}{4 \lambda_{\min}(\tilde{\Km})} + k_{4}\lambda_{M} + k_{3}\lambda_{C} \right).
    \end{equation*}

    By choosing $\nu_{1}$ large enough, $\dV$ is negative definite. The conclusion follows once again from LaSalle Principle.

\section{Block scheme of the closed-loop system}\label{sec:block scheme}

The block scheme of the closed-loop system is illustrated in Fig.~\ref{experiments:closed-loop system}.

\begin{figure}[h]
    \centering
    \includegraphics[width=1\columnwidth]{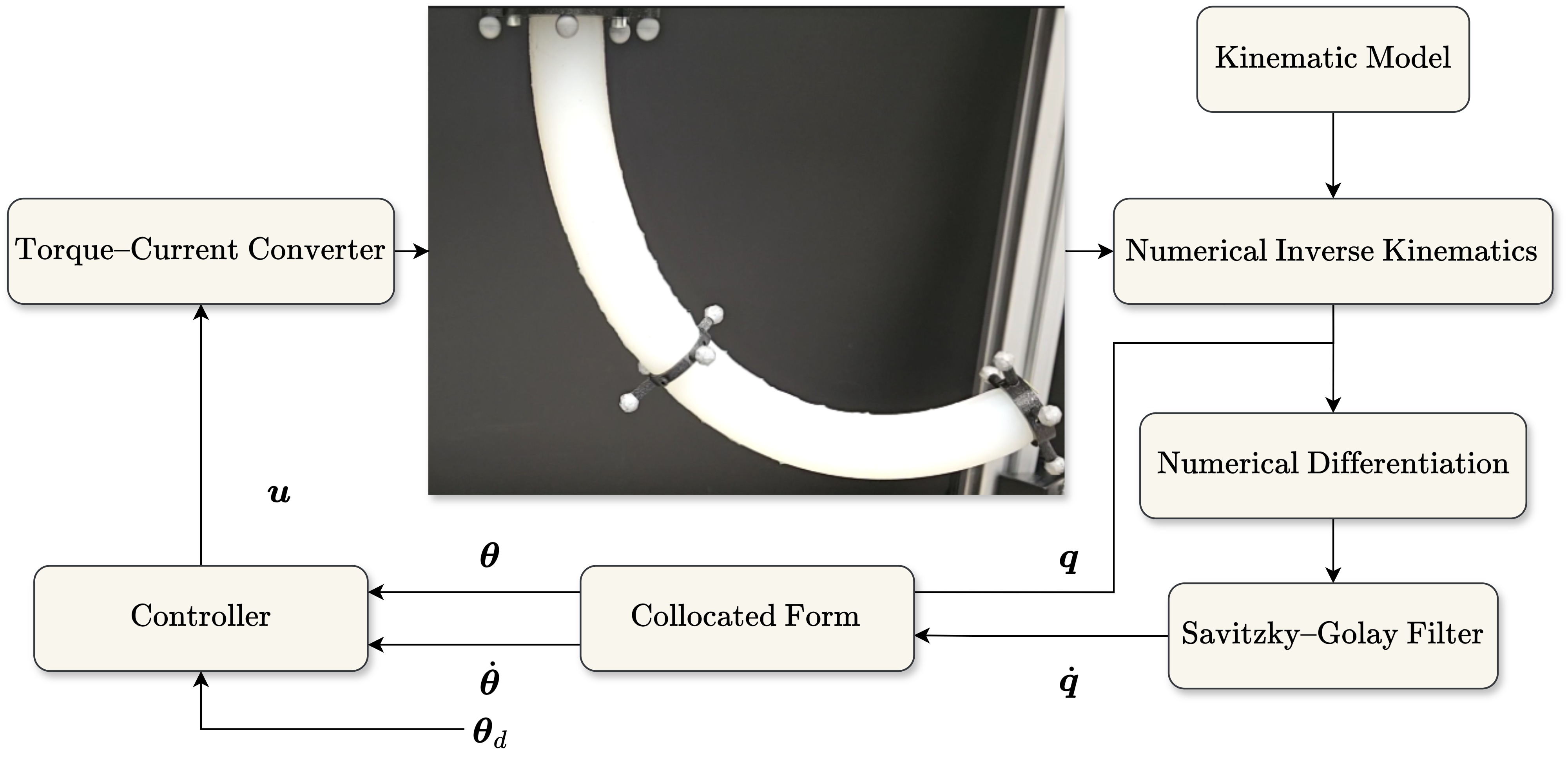}
    \caption{Block scheme of the closed-loop system.}
    \label{experiments:closed-loop system}
\end{figure}

\section{Identification Experiment for Gain Tuning of the controllers}\label{sec:identification experiment}

    Indicative values for the gains are obtained by performing a preliminary identification experiment, where the robot dynamics is approximated as a second-order linear time-invariant system with input $u$ and output $\theta_{a}$, which can be directly measured using the motion capture system. It is worth remarking that we use the direct measure of $\theta_{a}$ only for identification. All other experiments use the estimate of $\bm{\theta}$ obtained from the curvature variables $\qv$. The linear plant is represented in the Laplace domain as
\begin{equation*}
    P(s) = \frac{\theta_{a}}{u} = \frac{A_{P}}{\left(T_{w}s\right)^{2} + 2T_{w} \zeta s + 1}.
\end{equation*}
and identified by applying a constant step input of $u = -0.3~[\si{\newton \meter}]$, resulting in the following parameters: $A_{P} = 7.5432$, $T_{w} = 0.20737$, and $\zeta = 0.88971$. Figure ~\ref{identification experiment:step response} shows the measured step response and the identified response for the same input. 

Using $P(s)$, we tune the gains of a PID controller such that the dynamics achieves a crossover frequency of $6~[\si{\radian \per \second}]$ and a phase margin of $\frac{\pi}{2}~[\si{\radian}]$. This results in the estimated closed-loop response of Fig.~\ref{identification experiment:closed loop}, with control gains $k_{P} = 0.294~[\si{\newton \meter}]$, $k_{D} = 0.0313~[\si{\newton \meter \second}]$, and $k_{I} = 0.689~[\si{\newton \meter \per \second}]$. Based on these initial estimates, for the experiments, we choose $k_{D} = 0.039~[\si{\newton \meter \second}]$ and $k_{I} = 0.689~[\si{\newton \meter \per \second}]$, while $k_{P}$ takes ten possible values from the set $\{0,\,\, 0.1,\,\, \dots,\,\, 0.9 \}~[\si{\newton \meter}]$.

\section{Comprehensive Results of the Experiments}\label{sec:Appendix_Results}

This appendix provides the complete experimental results, complementing the main findings presented in the previous sections. Figures \ref{PD:comparison:error metrics}-\ref{discretiations:strobo_plots} offer a detailed view of the obtained results across the different experimental settings, allowing for a more comprehensive comparison and analysis.


\begin{figure*}
    \centering
    \includegraphics[width=1\textwidth,  trim={0cm, 0cm, 0.0cm, 0}, clip]{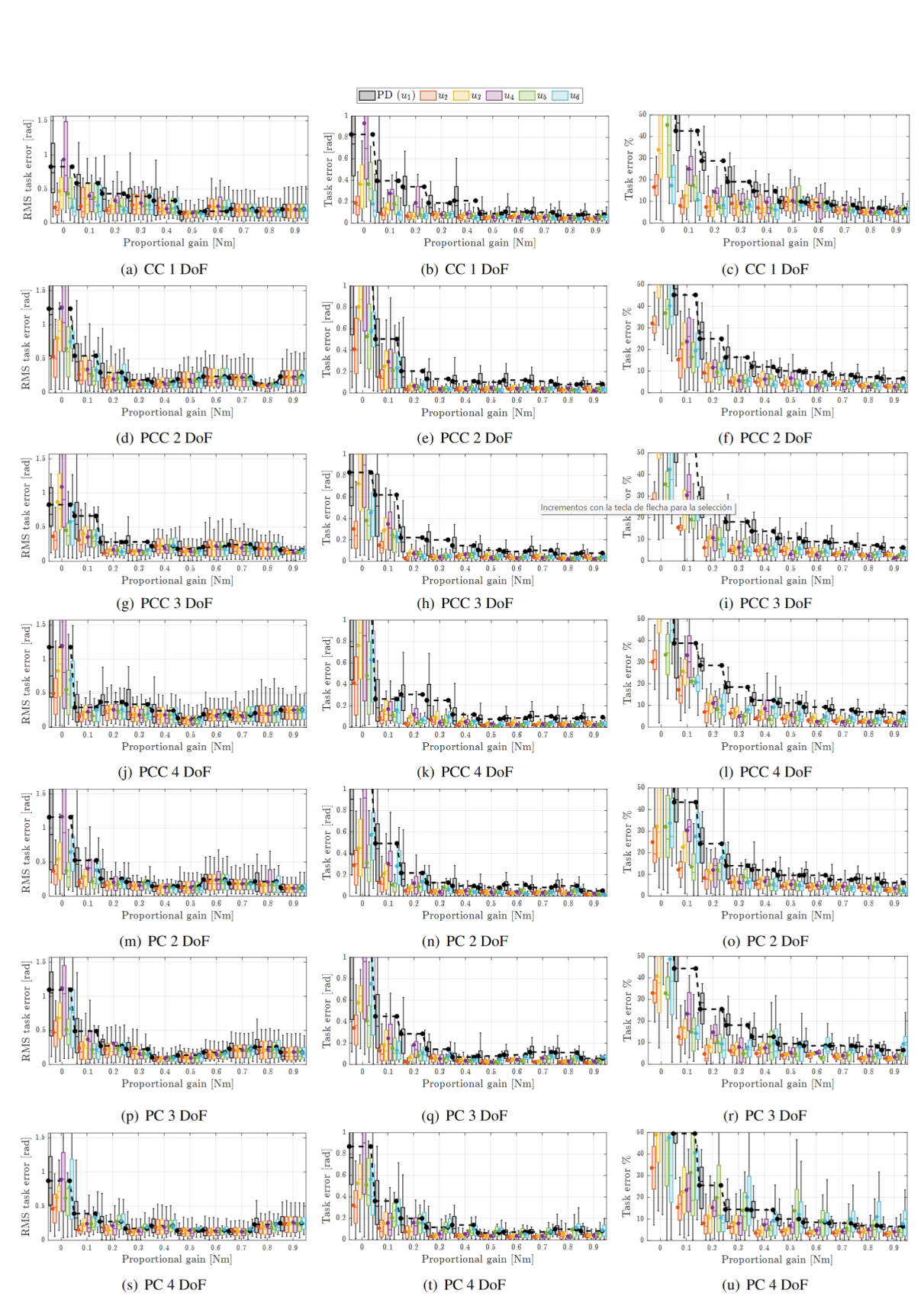}
    \caption{RMSE, absolute value of the steady state error, and its percentage for the PD regulators. The solid lines represent the mean value for each metric of the PD regulator. For each controller, gain, and metric, the plots show the median, lower and upper quartiles, as well as the minimum and maximum values, and with a colored dot the mean value.}
    \label{PD:comparison:error metrics}
\end{figure*}

\begin{figure*}
    \centering
     \includegraphics[width=1\textwidth,  trim={0cm, 0cm, 0.0cm, 0}, clip]{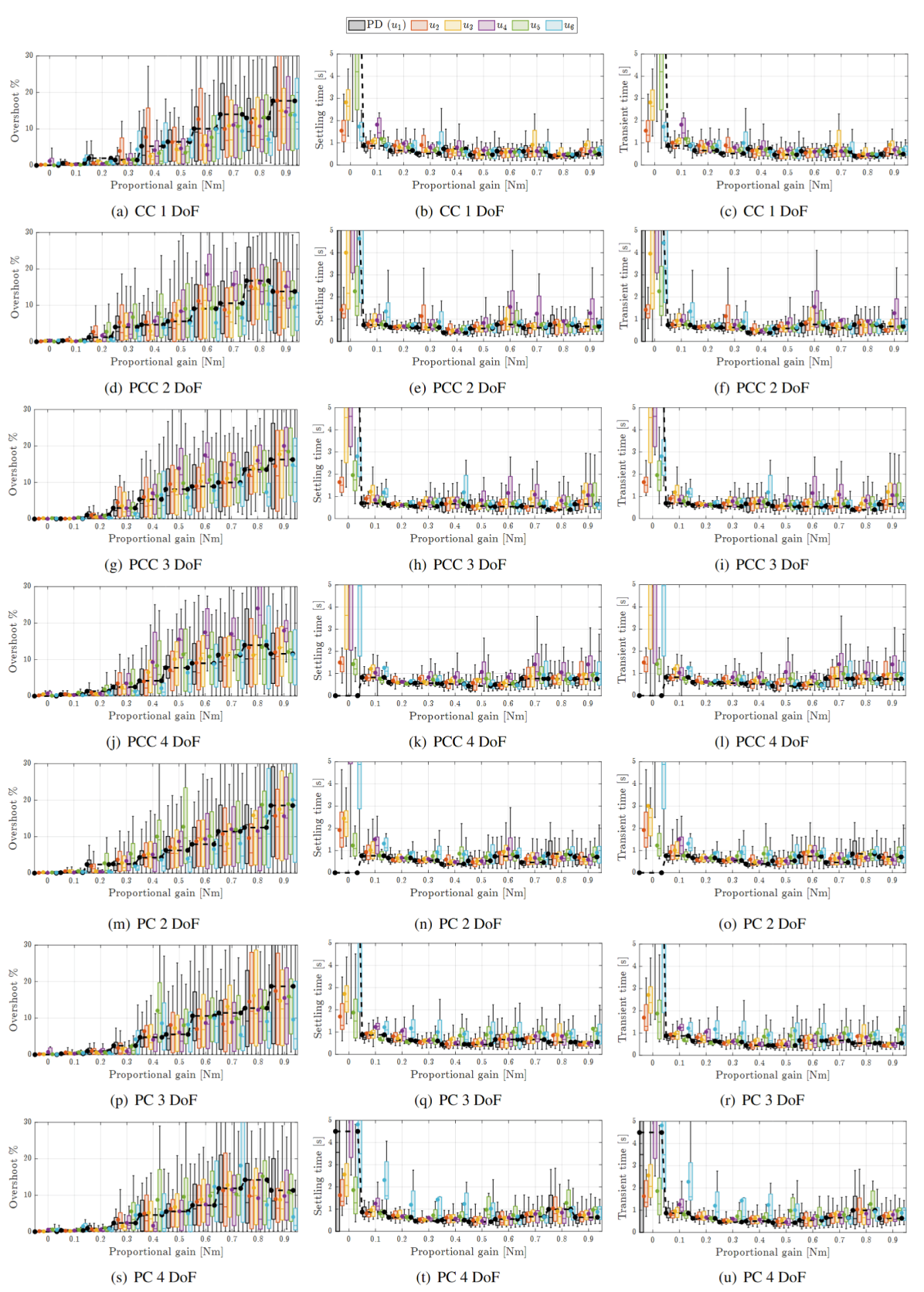}
    \caption{Overshoot, settling time and transient time for the PD regulators. The solid lines represent the mean value for each metric of the PD regulator. For each controller, gain, and metric, the plots show the median, lower and upper quartiles, as well as the minimum and maximum values, and with a colored dot the mean value.}
    \label{PD:comparison:time metrics}
\end{figure*}

\begin{figure*}
    \centering
    \includegraphics[width=1\textwidth,  trim={0cm, 0cm, 0.0cm, 0}, clip]{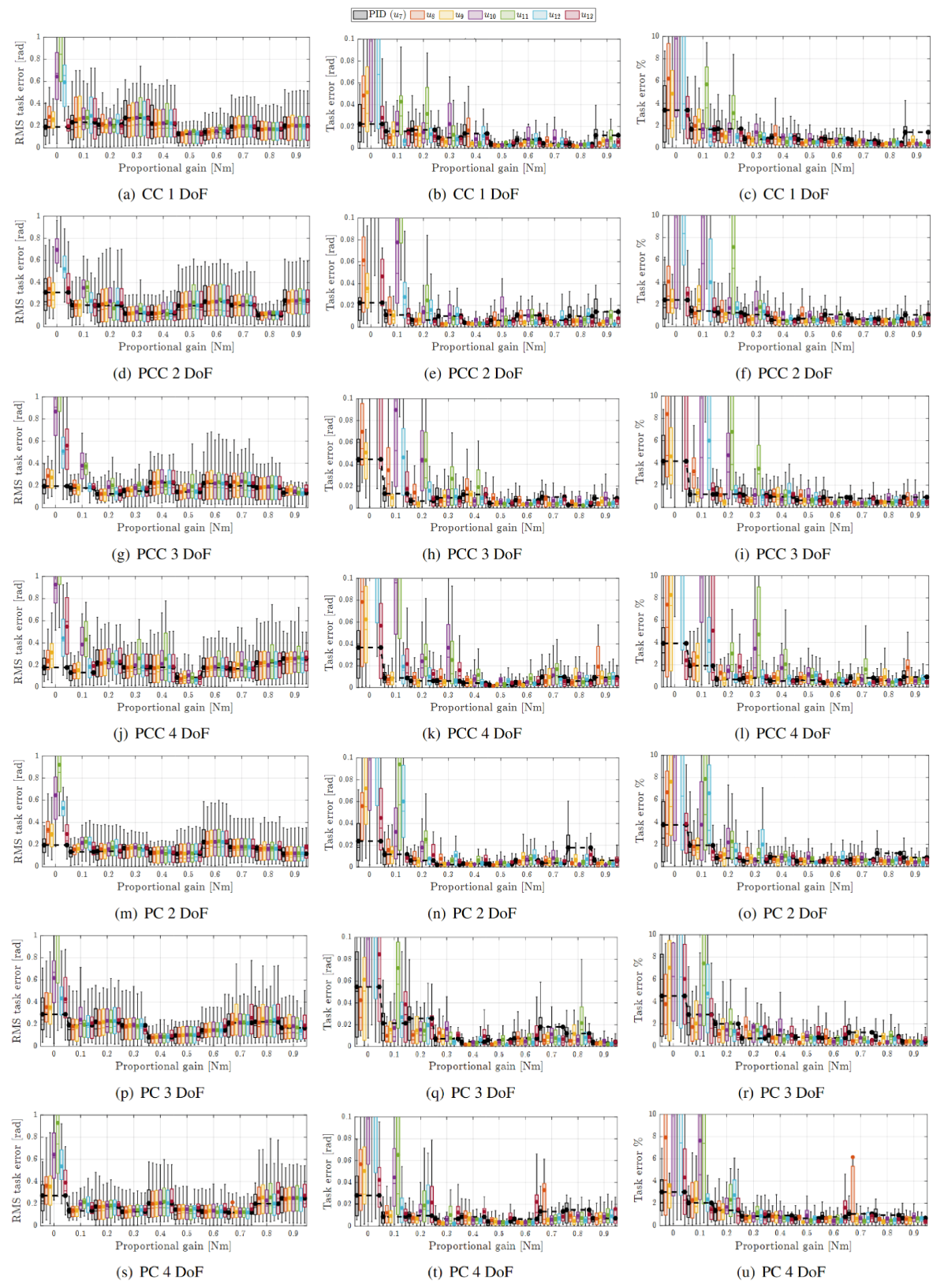}
    \caption{RMSE, absolute value of the steady state error and its percentage for the PID regulators. The solid lines represent the mean value for each metric of the PID regulator. For each controller, gain, and metric, the plots show the median, lower and upper quartiles, as well as the minimum and maximum values, and with a colored dot the mean value. }
    \label{PID:comparison:error metrics}
\end{figure*}
\begin{figure*}
    \centering
    \includegraphics[width=1\textwidth,  trim={0cm, 0cm, 0.0cm, 0}, clip]{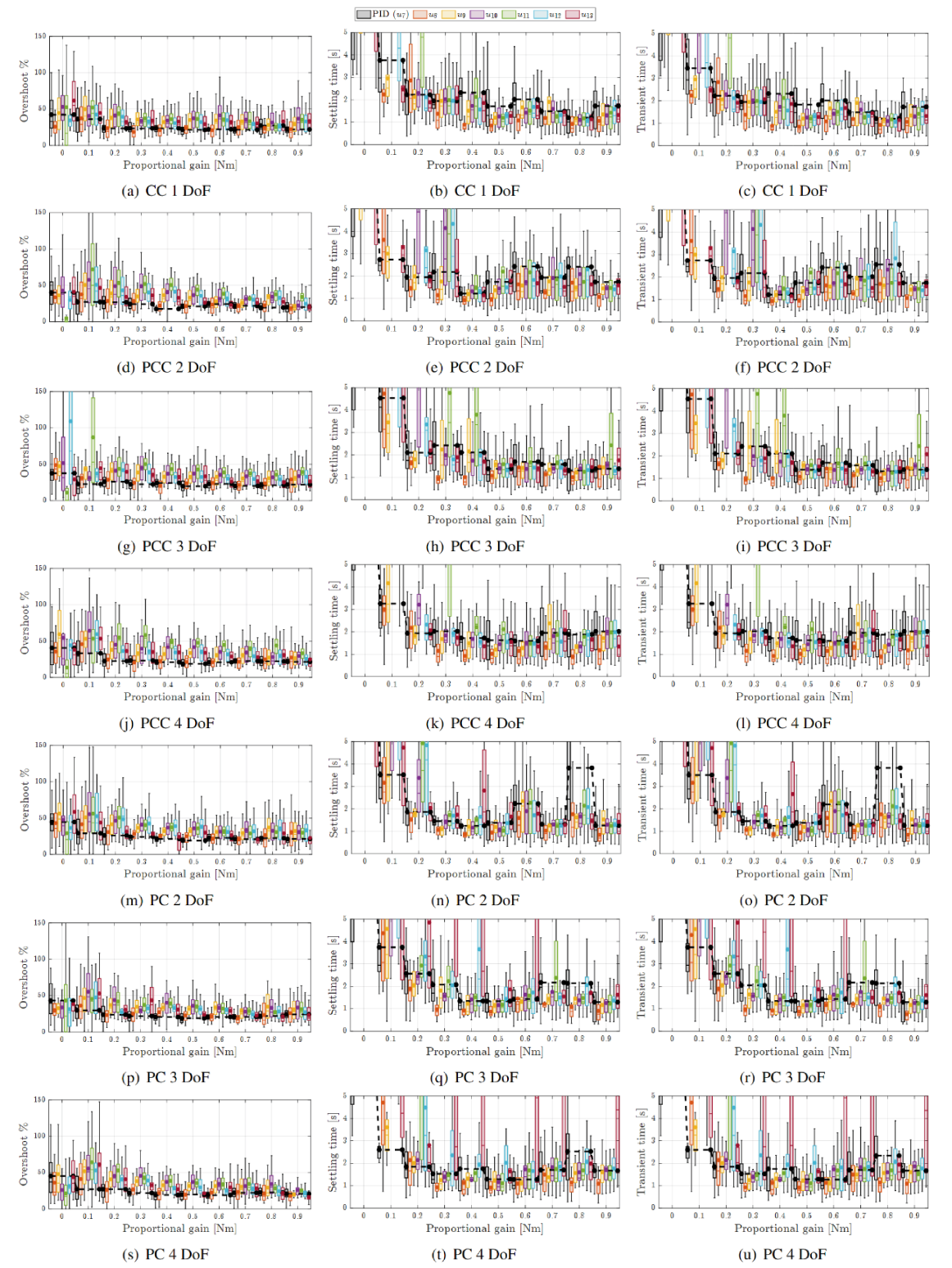}
    \caption{Overshoot, settling time and transient time for the PID regulators. The solid lines represent the mean value for each metric of the PID regulator. For each controller, gain, and metric, the plots show the median, lower and upper quartiles, as well as the minimum and maximum values, and with a colored dot the mean value.}
    \label{PID:comparison:time metrics}
\end{figure*}

\begin{figure*}
    \centering
    \includegraphics[width=1\textwidth,  trim={0cm, 0cm, 0.0cm, 0}, clip]{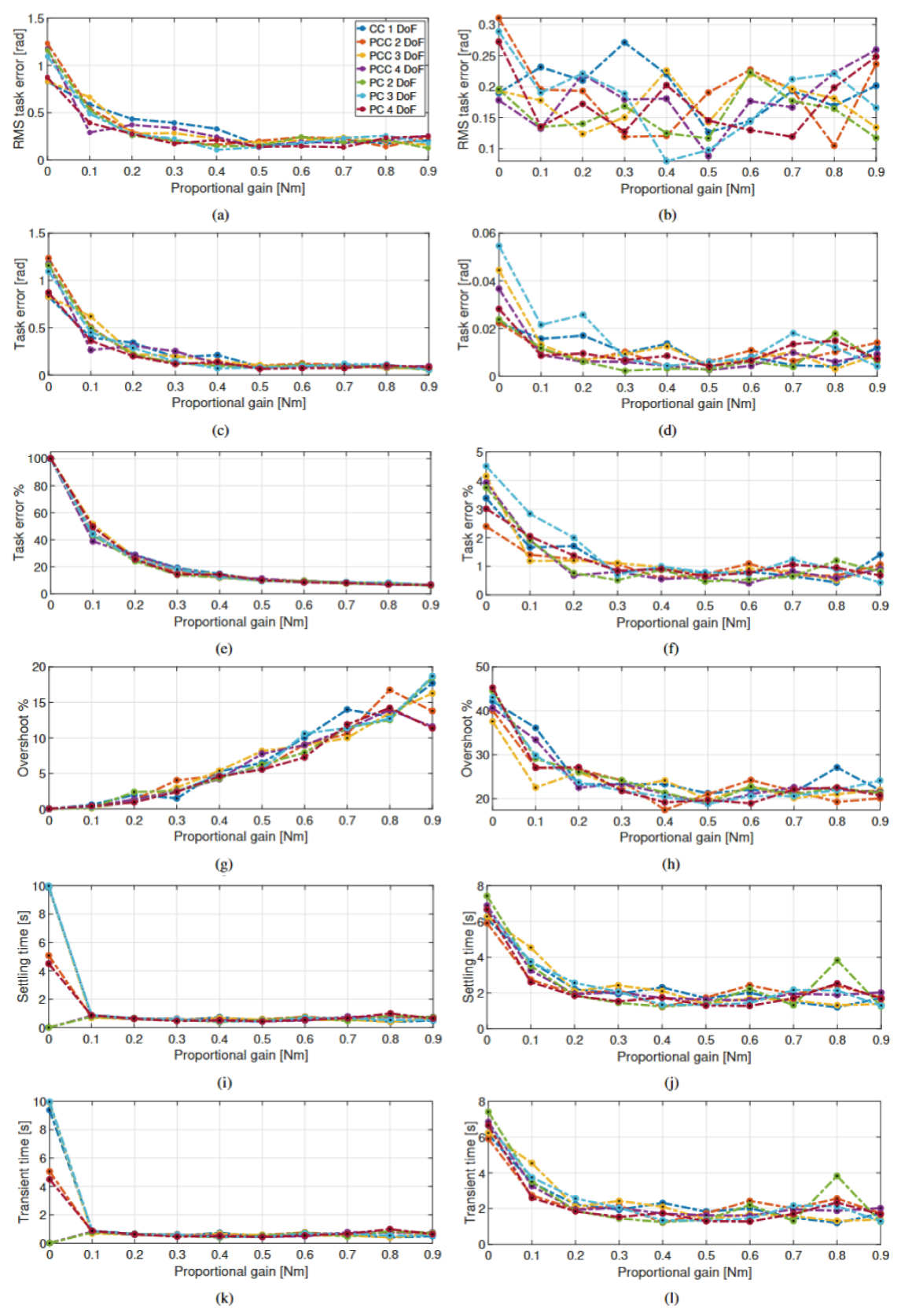}
    \caption{Mean RMSE, absolute and percentage steady-state errors, overshoot, settling time, and transient time for the PD regulator (left side of the figure) and the PID regulator (right side of the figure). The lines represent the metric values averaged across all model discretizations.}
    \label{PD:comparison:time metrics_Final}
\end{figure*}

\begin{figure*}
    \centering
    \includegraphics[width=1\textwidth,  trim={0cm, 0cm, 0.0cm, 0}, clip]{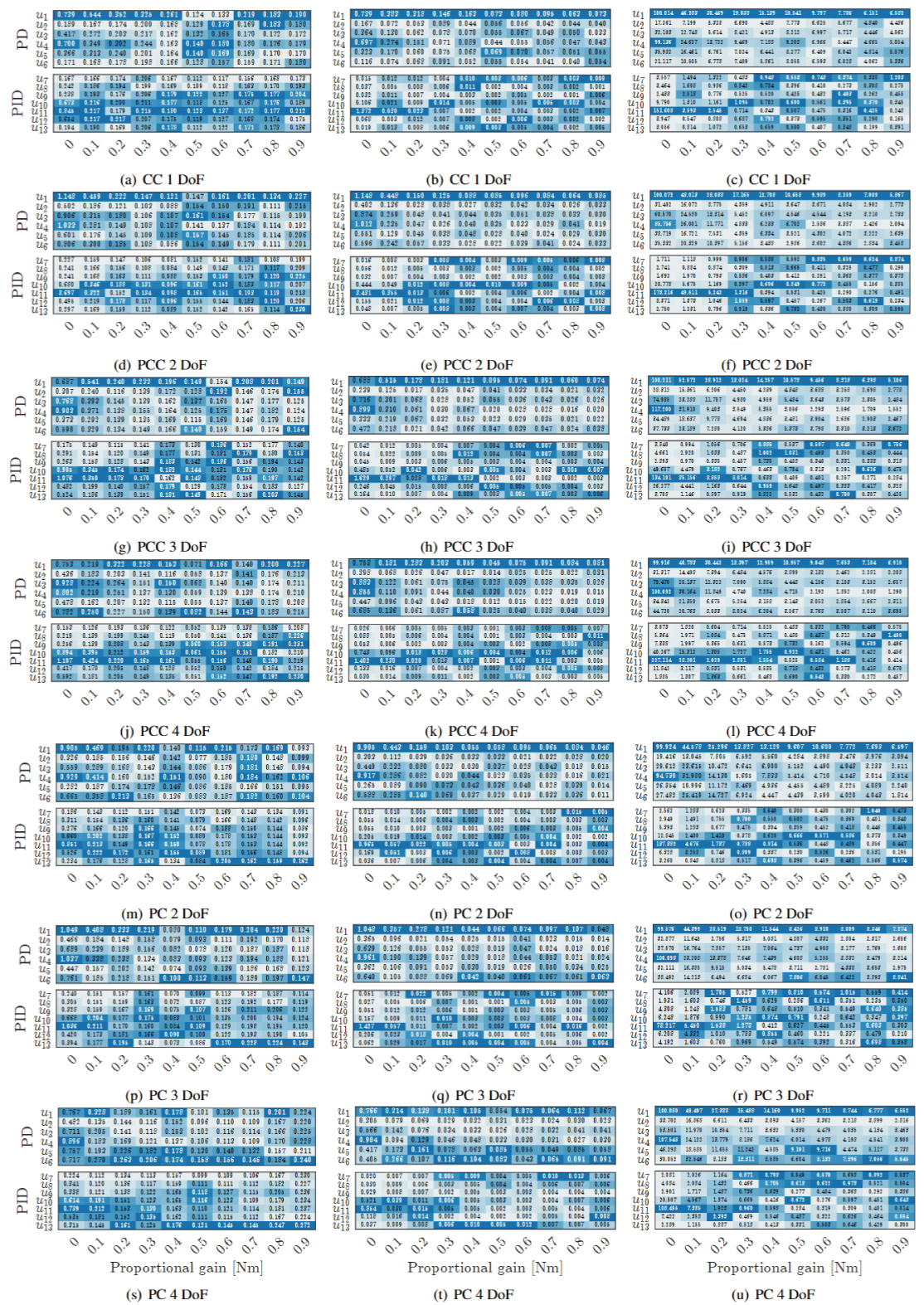}
    \caption{Mean of the RMSE [left], absolute value of the steady state error [center] and its percentage [right]
for the analyzed regulators with varying proportional gain (in [Nm]).
 }
    \label{experiment:overall controllers comparison error}
\end{figure*}
\begin{figure*}
    \centering
    \includegraphics[width=1\textwidth,  trim={0cm, 0cm, 0.0cm, 0}, clip]{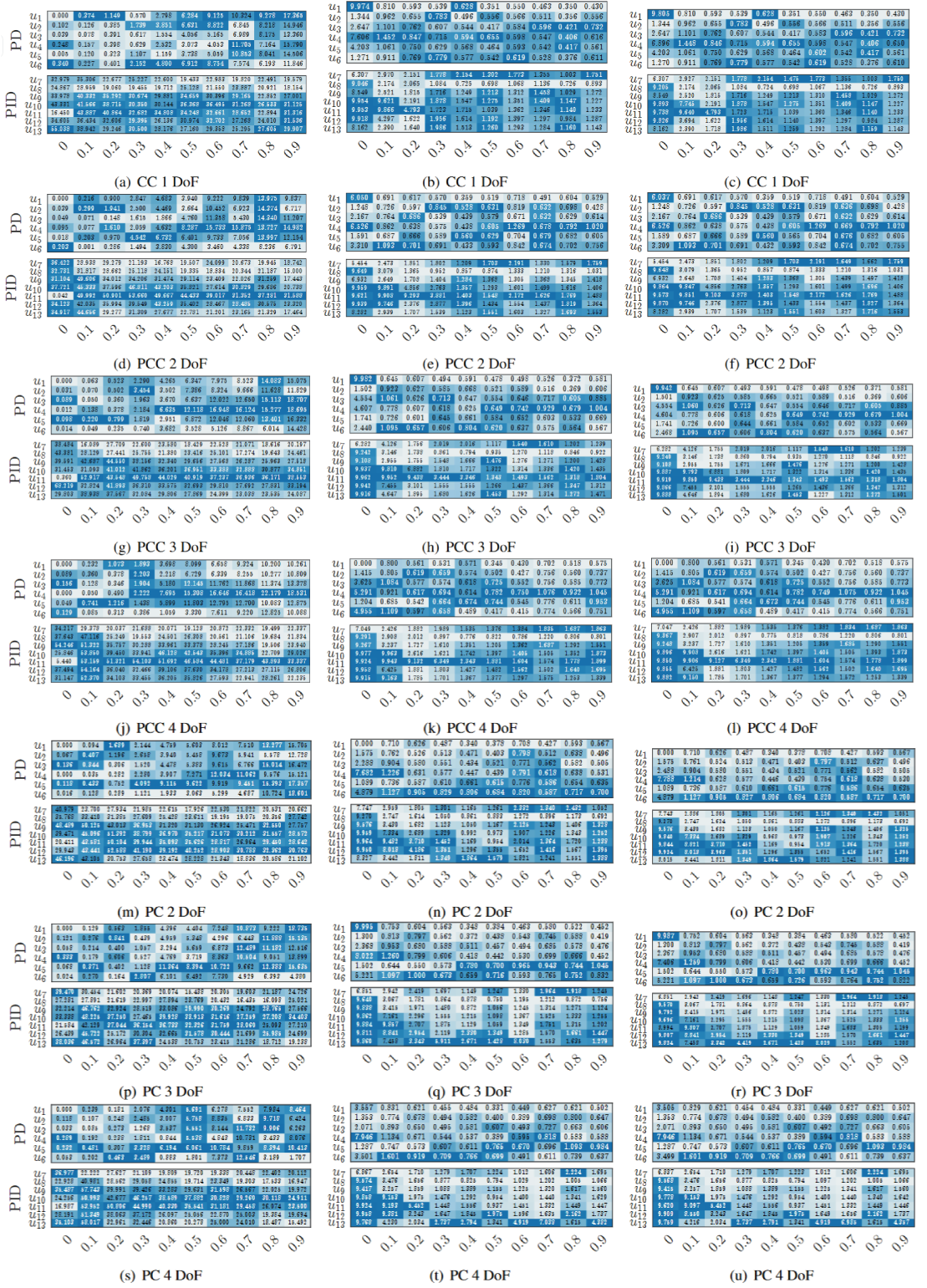}
    \caption{Mean value of the overshoot [left], settling time [center] and transient time [right]
for the analyzed regulators with varying proportional gain (in [Nm]). }
    \label{experiment:overall controllers comparison time}
\end{figure*}

\begin{figure*}
    \centering
    \includegraphics[width=1\textwidth,  trim={0cm, 0cm, 0.0cm, 0}, clip]{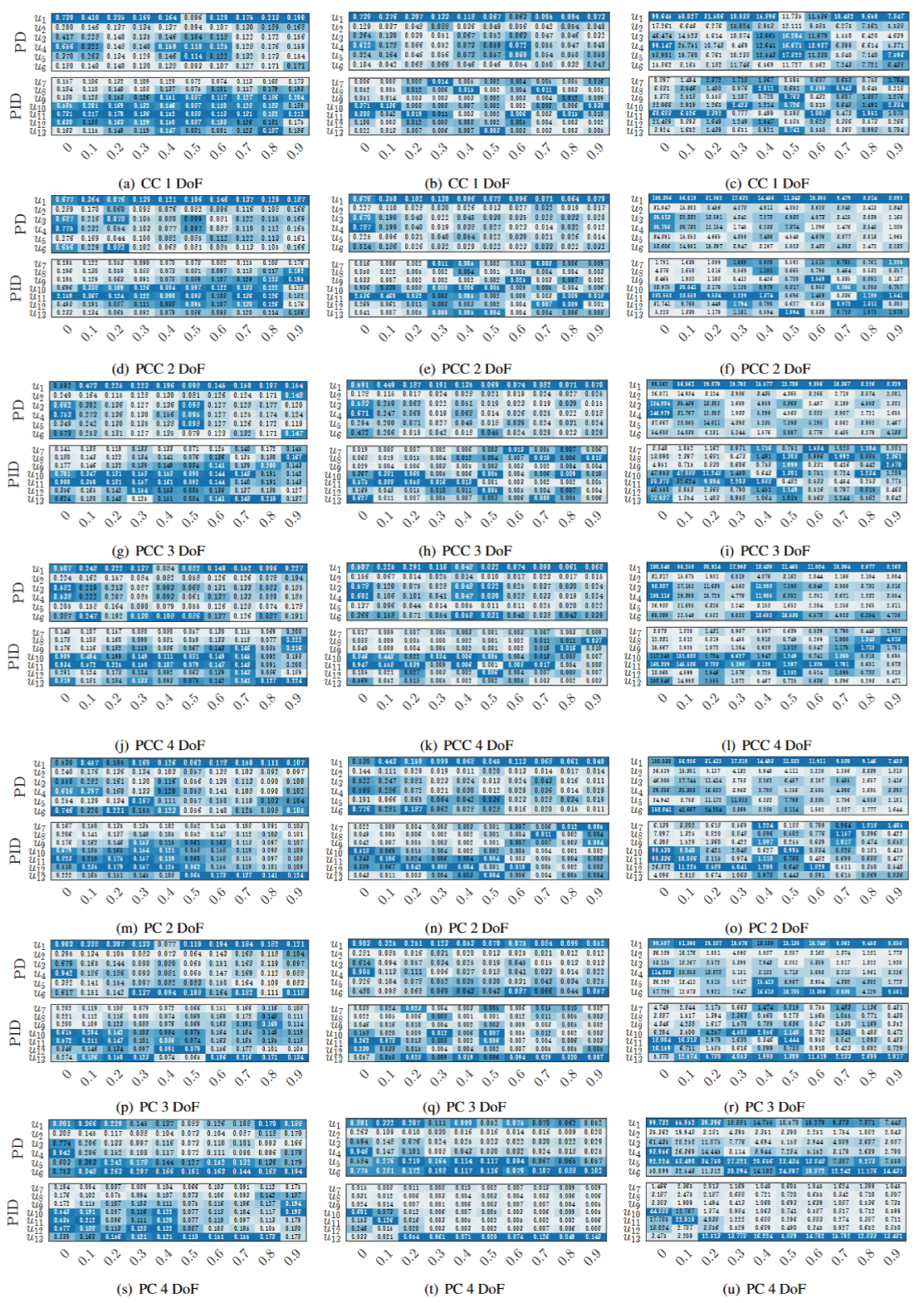}
    \caption{Mean of the RMSE [left], absolute value of the steady state error [center] and its percentage [right]
respectively) [center] and its percentage [right] with varying proportional gain (in [Nm]) in the presence of a payload of $0.09~[\si{\kilogram}]$ attached to the robot tip.}
    \label{experiment:overall controllers comparison error payload}
\end{figure*}
\begin{figure*}
    \centering
    \includegraphics[width=1\textwidth,  trim={0cm, 0cm, 0.0cm, 0}, clip]{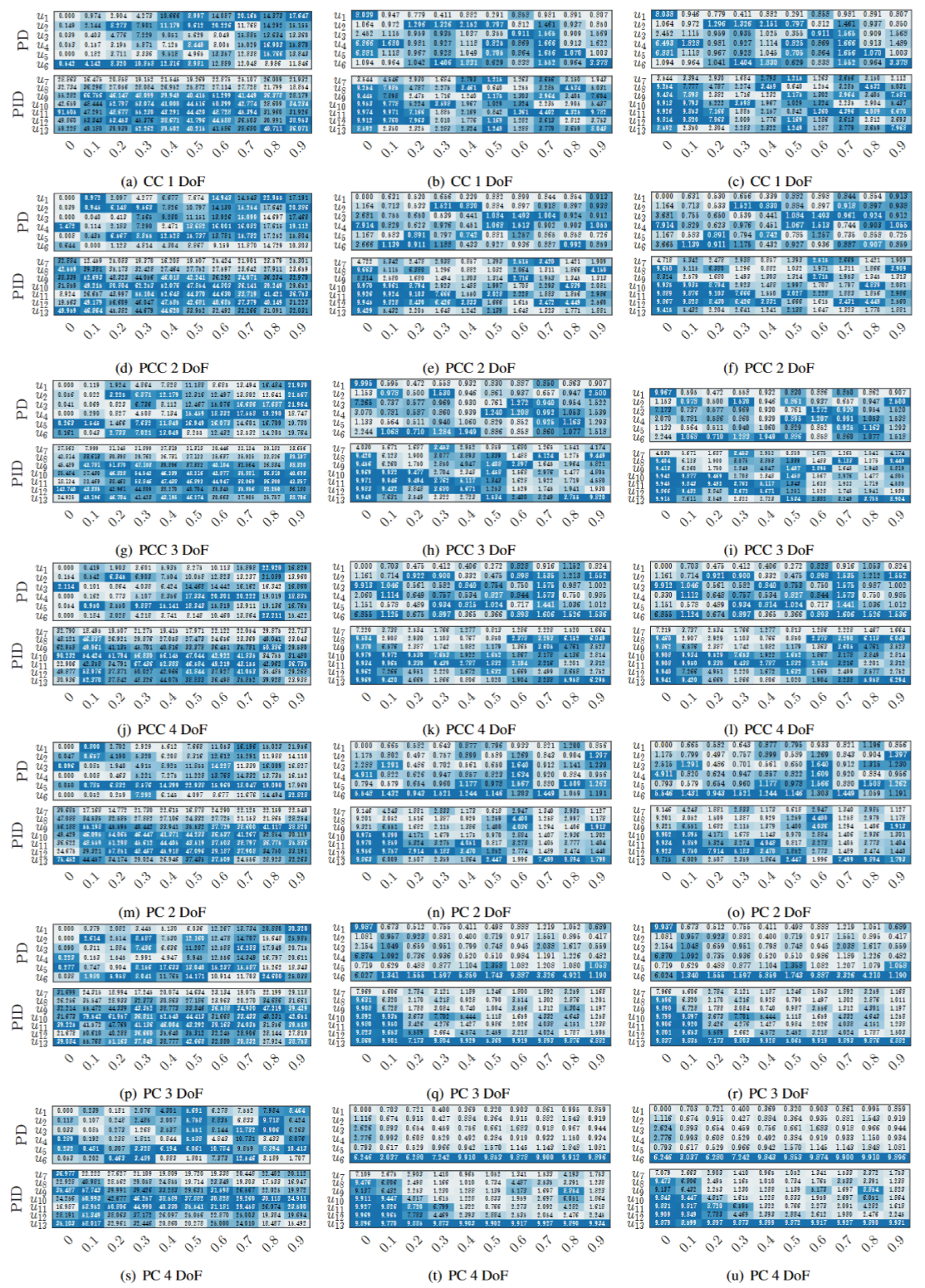}
    \caption{Mean value of the overshoot [left], settling time [center] and transient time [right] for the analyzed regulators with varying proportional gain (in [Nm]) in the presence of a payload of $0.09~[\si{\kilogram}]$ attached to the robot tip.}
    \label{experiment:overall controllers comparison time payload}
\end{figure*}

\begin{figure*}[t]
    \centering
    \subfigure[Steady state Cartesian error]{
        \includegraphics[width=0.3\linewidth]{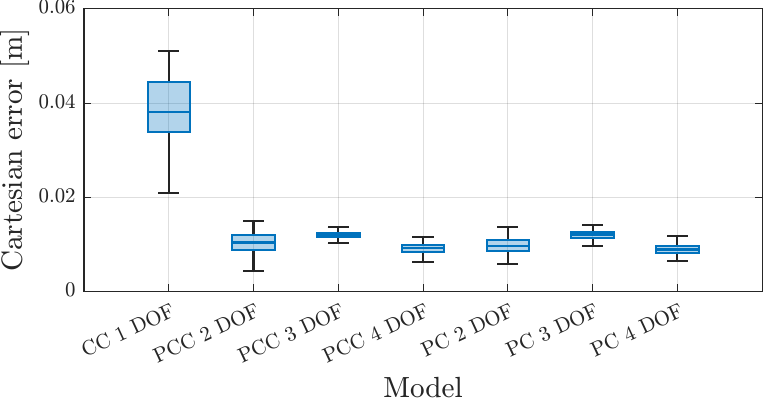}
    }
    \subfigure[Steady state angular error]{
        \includegraphics[width=0.3\linewidth]{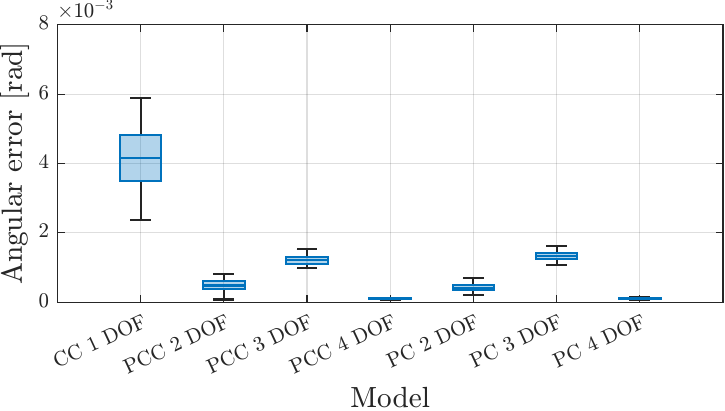}
    }
    \subfigure[Steady state task error]{
        \includegraphics[width=0.3\linewidth]{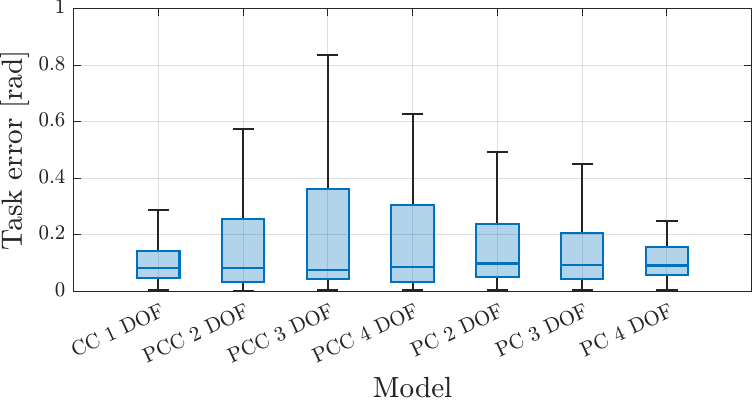}
    }
    \subfigure[RMS Cartesian error]{
        \includegraphics[width=0.3\linewidth]{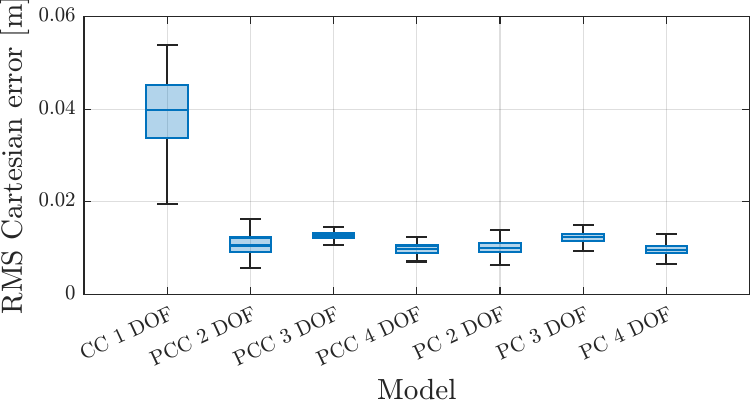}
    }
    \subfigure[RMS angular error]{
        \includegraphics[width=0.3\linewidth]{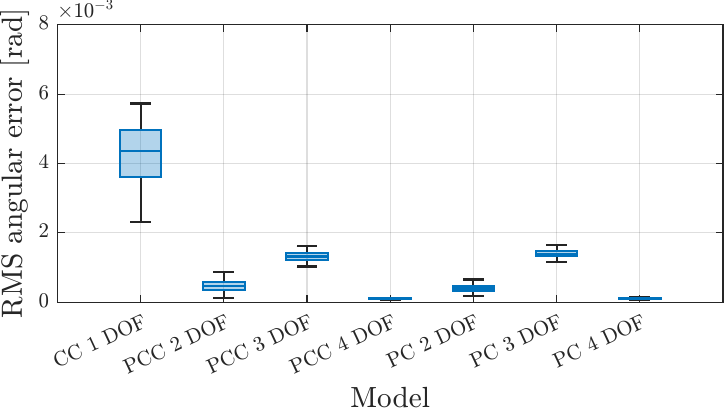}
    }
    \subfigure[RMS task error]{
        \includegraphics[width=0.3\linewidth]{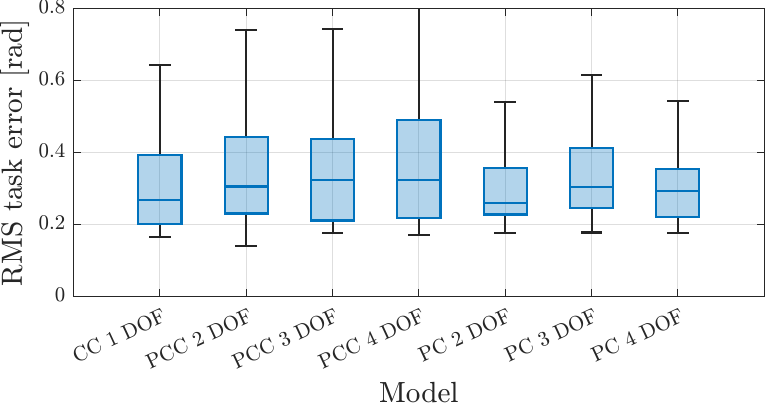}
    }
    \caption{Steady state and RMS values of the Cartesian, angular, and task errors for the considered discretizations.}
    \label{model comparison:metrics}
\end{figure*}

\begin{figure*}[t]
    \centering
    \includegraphics[width=1\textwidth,  trim={0cm, 0cm, 0.0cm, 0}, clip]{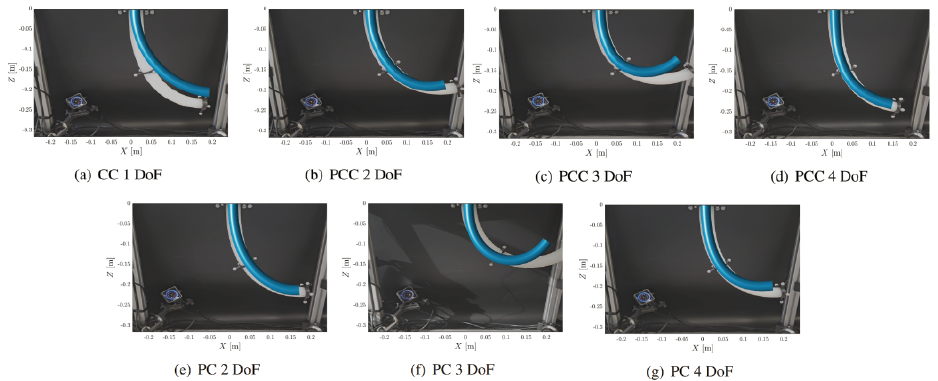}
    \caption{Snapshots of the robot and corresponding dynamic model for the closed-loop system under the P-satI-D controller with $k_{P} = 0.5~[\si{\newton \meter}]$.}
    \label{discretiations:strobo_plots}
\end{figure*}


\section*{Acknowledgements}
The work was supported by the European Union (ERC, RIPLEY, 101165078).

\clearpage
\bibliographystyle{agsm}
\bibliography{bibliography}

\end{document}